\documentclass[10pt,twocolumn]{article}
\usepackage[letterpaper,textwidth=6.75in,textheight=9.25in,columnsep=0.25in]{geometry}

\usepackage[round]{natbib}
\usepackage{amsmath,amssymb,amsthm,mathtools}
\usepackage{booktabs,multirow}
\usepackage{graphicx}

\usepackage{xcolor}
\usepackage[colorlinks=true,linkcolor=black,citecolor=black,urlcolor=black]{hyperref}
\expandafter\def\expandafter\UrlBreaks\expandafter{\UrlBreaks\do\a\do\b\do\c\do\d\do\e\do\f\do\g\do\h\do\i\do\j\do\k\do\l\do\m\do\n\do\o\do\p\do\q\do\r\do\s\do\t\do\u\do\v\do\w\do\x\do\y\do\z\do\A\do\B\do\C\do\D\do\E\do\F\do\G\do\H\do\I\do\J\do\K\do\L\do\M\do\N\do\O\do\P\do\Q\do\R\do\S\do\T\do\U\do\V\do\W\do\X\do\Y\do\Z\do\0\do\1\do\2\do\3\do\4\do\5\do\6\do\7\do\8\do\9}

\newtheorem{theorem}{Theorem}
\newtheorem{proposition}{Proposition}
\newtheorem{definition}{Definition}
\newtheorem{lemma}{Lemma}
\newtheorem{remark}{Remark}

\newcommand{\Pt}{\tilde P}
\newcommand{\E}{\mathbb{E}}

\newcommand{\topT}{\mathrm{top}_T}
\newcommand{\Ind}{\mathbf 1}
\newcommand{\Y}{\mathcal Y}
\newcommand{\X}{\mathcal X}

\newcommand{\Nbcpealpha}{10.0}
\newcommand{\Nbcpmeansize}{1.66}
\newcommand{\Nbcpmisexact}{8.9}
\newcommand{\Nbcpmisnb}{5.9}
\newcommand{\Nbcpshareover}{50}
\newcommand{\Ncifaucncells}{30}

\newcommand{\Nciffailauc}{0.941}

\newcommand{\NcifidncellsTtwothreenonwall}{42}
\newcommand{\NcifidncellsTtwothreewall}{18}
\newcommand{\NcifidvalidTtwothreenonwall}{39}
\newcommand{\NcifidvalidTtwothreewall}{9}
\newcommand{\Ncifmaeatcbeatsscore}{12}

\newcommand{\Ncifmaengroups}{12}

\newcommand{\NcifnewTOnenonwallncells}{9}

\newcommand{\NcifnewTTwononwallWSCPvalid}{4}

\newcommand{\NcifnewTTwononwallncells}{9}

\newcommand{\NcifnewnonwallrelbiasSCORE}{$-$3}

\newcommand{\Ncifnsettings}{90}

\newcommand{\NcifoldTOnedegenerate}{33}
\newcommand{\NcifoldTOnedraws}{900}

\newcommand{\NcifoldTTwononwallWSCPvalid}{1}

\newcommand{\NcifoldTTwononwallncells}{12}

\newcommand{\NcifoldnonwallrelbiasSCORE}{$-$21}

\newcommand{\Ncifsignedatcunder}{73}

\newcommand{\Ncifsignedembunder}{79}

\newcommand{\Ncifsignednoneunder}{90}

\newcommand{\Ncifsignedscoreunder}{62}
\newcommand{\NciftightbootEMB}{4}
\newcommand{\NciftightbootSCORE}{50}
\newcommand{\NciftightcloserEMB}{71}

\newcommand{\Nciftightunder}{62}

\newcommand{\NciftotTOnenonwallSCOREidentityvalid}{9}

\newcommand{\NciftotTOnenonwallncells}{21}

\newcommand{\Ncifwallembaucmin}{0.960}
\newcommand{\Ncifwallrelmax}{87}
\newcommand{\Ncifwallrelmin}{21}
\newcommand{\Ncifwallscoreaucmax}{0.972}
\newcommand{\Ncifwallscoreaucmin}{0.527}
\newcommand{\NdbTOnemaxse}{0.012}
\newcommand{\NdbTOneoraclecorrmaxabs}{0.020}

\newcommand{\NdbTOneunweightedtruth}{0.087}
\newcommand{\NdbTTwomaxse}{0.012}

\newcommand{\Ndbatcbelowmiss}{39}
\newcommand{\Ndbatcfarther}{12}

\newcommand{\NdbbenignTwoTOnetruth}{0.036}

\newcommand{\Ndbbenignscorecorrmaxabs}{0.002}
\newcommand{\Ndbcalmaxse}{0.010}
\newcommand{\Ndbcalmselower}{24}
\newcommand{\Ndbcalmsetotal}{24}

\newcommand{\Ndbclipfeatmaxchange}{0.003}

\newcommand{\Ndbeasybenignmaxabs}{0.046}
\newcommand{\NdbhardOnemsecalmax}{2.81}
\newcommand{\NdbhardOnemsecalmin}{2.19}
\newcommand{\NdbhardOnemsegb}{3.17}

\newcommand{\NdbhardnessOneTOneATC}{+0.211}
\newcommand{\NdbhardnessOneTOneATCMC}{+0.179}
\newcommand{\NdbhardnessOneTOneATCTS}{+0.201}
\newcommand{\NdbhardnessOneTOneATCcode}{+0.206}
\newcommand{\NdbhardnessOneTOneGBcorrected}{$-$0.081}

\newcommand{\NdbhardnessOneTOneSCOREScorrected}{+0.042}

\newcommand{\NdbhardnessTwoTOneATC}{+0.461}
\newcommand{\NdbhardnessTwoTOneATCMC}{+0.409}
\newcommand{\NdbhardnessTwoTOneATCTS}{+0.445}
\newcommand{\NdbhardnessTwoTOneATCcode}{+0.458}
\newcommand{\NdbhardnessTwoTOneGBcorrected}{$-$0.101}

\newcommand{\NdbhardnessTwoTOneSCOREScorrected}{$-$0.006}

\newcommand{\Ndbtruthhalfmaxdev}{0.004}

\newcommand{\NdbwscpGBhardTwoexcess}{+0.164}
\newcommand{\Ndbwscporaclemaxexcess}{+0.002}

\newcommand{\Nnataucmax}{0.66}
\newcommand{\Nnatmaxse}{0.64}
\newcommand{\Nnatrelmatch}{0.34}
\newcommand{\Nnddbmaxchange}{0.001}
\newcommand{\Nndfailmissmin}{28}
\newcommand{\Nndlimitmaxshort}{15.0}
\newcommand{\Nndscoremaxchange}{0.02}
\newcommand{\Nnoshclipalphashift}{2.3}
\newcommand{\Nnoshcoveredmintwothree}{50}
\newcommand{\NnoshnewTOnecovered}{16}

\newcommand{\NnoshnewToneidbelow}{1.7}
\newcommand{\NnosholdTOnecovered}{10}

\newcommand{\Nnoshtightmaxabs}{0.1}
\newcommand{\Nreldivcorrmax}{0.96}
\newcommand{\Nreldivcorrmin}{0.92}
\newcommand{\Nreldivcorrnwmax}{0.91}
\newcommand{\Nreldivcorrnwmin}{0.83}

\newcommand{\NrelnewBTwoZerocorr}{0.969}

\newcommand{\Nrelnewbelow}{45}

\newcommand{\Nrelnewmeandiffpp}{0.8}
\newcommand{\Nrelnewnonwallcorr}{0.42}
\newcommand{\Nrelnewnonwallrelsize}{0.21}
\newcommand{\Nrelnewrelcorr}{0.95}
\newcommand{\Nrelnewrelcorrnonwall}{0.87}

\newcommand{\Nrelnewwallrelsize}{0.51}

\newcommand{\NreloldBTwoZerocorr}{0.995}

\newcommand{\Nreloldbelow}{44}

\newcommand{\Nreloldmeandiffpp}{2.4}
\newcommand{\Nreloldnonwallcorr}{0.97}
\newcommand{\Nreloldnonwallrelsize}{0.38}
\newcommand{\Nreloldnsettings}{45}
\newcommand{\Nreloldrelcorr}{0.94}
\newcommand{\Nreloldrelcorrnonwall}{0.88}

\newcommand{\Nreloldwallrelsize}{0.73}
\newcommand{\Nrobqabovemax}{0.27}

\newcommand{\Ntiesold}{70}

\newcommand{\Ntiespairsold}{480{,}000}
\newcommand{\Nunclipbound}{10}
\newcommand{\Nuncliptightrowmax}{0.02}
\newcommand{\Nwallexauc}{0.96}
\newcommand{\Nwallexjpegfive}{14.7}
\newcommand{\Nwallexjpegthree}{12.1}
\newcommand{\Nwallextopone}{7.2}
\newcommand{\NwscpTTwooverhi}{27}
\newcommand{\NwscpTTwooverlo}{18}

\newcommand{\NwscpnewTTworatiomedian}{0.35}

\newcommand{\NwscpoldTTworatiomedian}{0.72}

\newcommand{\StmtDefSetting}{%
Let $K\ge 2$ and $\mathcal Y=\{1,\dots,K\}$. Let $P$ and $\Pt$ be probability distributions on
$\mathcal X\times\mathcal Y$. Let $(X_i,Y_i)$, $i=1,\dots,n$, be i.i.d.\ from $P$, and let $(X_{n+1},Y_{n+1})\sim\Pt$
be independent of them. Assume $\Pt(\mathrm dx,\mathrm dy)=w(x)\,P(\mathrm dx,\mathrm dy)$ for a measurable
$w:\mathcal X\to(0,\infty)$. Let $S:\mathcal X\times\mathcal Y\to[0,\infty)$ be a measurable nonconformity score fixed independently
of the data, and let $T\in\{1,\dots,K-1\}$. We define $w_j:=w(X_j)$ and $S_j:=S(X_j,Y_j)$ for $j\le n+1$, and
$W:=\sum_{j=1}^{n+1}w_j$. We use the conventions $0/0:=0$ and $1/0:=\infty$.}

\newcommand{\StmtDefEvalue}{%
We define, for $y\in\mathcal Y$,
\[
E^{(w)}(y):=\frac{W\,S(X_{n+1},y)}{\sum_{i=1}^n w_iS_i+w_{n+1}S(X_{n+1},y)}.
\]
For $e\in[0,\infty)^{\mathcal Y}$ we define $F(e):=\inf\{\alpha\in(0,1):\#\{y:e_y<1/\alpha\}\le T\}$ with
$\inf\emptyset:=1$. We define $\tilde\alpha:=F(E^{(w)})$ and $\hat C:=\{y:E^{(w)}(y)<1/\tilde\alpha\}$.}

\newcommand{\StmtLemF}{%
Let $e,e'\in[0,\infty)^{\mathcal Y}$, and let $e_{(T+1)}$ be the $(T+1)$-th smallest entry of $e$, with $1/0:=\infty$.
Then $F(e)=\min\{1,1/e_{(T+1)}\}$ and $|F(e)-F(e')|\le\min\{1,\max_y|e_y-e'_y|\}$.}

\newcommand{\StmtThmValidity}{%
Let the setting of Definition~\ref{def:setting} hold. Then $\E_{\Pt}\bigl[E^{(w)}(Y_{n+1})\bigr]\le1$.}

\newcommand{\StmtThmPosthoc}{%
Let the setting of Definition~\ref{def:setting} hold, and let $\hat\alpha$ be a random variable with values in $(0,1]$.
Then $\E_{\Pt}\bigl[\Ind\{E^{(w)}(Y_{n+1})\ge 1/\hat\alpha\}/\hat\alpha\bigr]\le 1$ and
$\E_{\Pt}\bigl[\Ind\{Y_{n+1}\notin\hat C\}/\tilde\alpha\bigr]\le 1$.}

\newcommand{\StmtThmCoverage}{%
Let the setting of Definition~\ref{def:setting} hold. We define $I:=\Ind\{Y_{n+1}\notin\hat C\}$. Assume
$\mathrm{Cov}_{\Pt}(\tilde\alpha,I/\tilde\alpha)\le 0$. Then $\Pt(Y_{n+1}\notin\hat C)\le\E_{\Pt}[\tilde\alpha]$.}

\newcommand{\StmtRemCov}{%
Let $M$ be a version of $a\mapsto\Pt(Y_{n+1}\notin\hat C\mid\tilde\alpha=a)$. The condition of
Theorem~\ref{thm:coverage} is equivalent to $\mathrm{Cov}_{\Pt}(\tilde\alpha,M(\tilde\alpha)/\tilde\alpha)\le0$, and
holds when $a\mapsto M(a)/a$ is non-increasing on the support of $\tilde\alpha$. Monotonicity is sufficient, not
necessary: with $\tilde\alpha$ uniform on $\{1/4,1/2,1\}$ and $M=(1/2,0,1/2)$ the covariance is $-11/72$.}

\newcommand{\StmtPropSqrt}{%
Let the setting of Definition~\ref{def:setting} hold. Then $\Pt(Y_{n+1}\notin\hat C)\le\bigl(\E_{\Pt}[\tilde\alpha]\bigr)^{1/2}$.}

\newcommand{\StmtDefTransform}{%
Let the setting of Definition~\ref{def:setting} hold. We define $\tau(x)$ as the $(T+1)$-th smallest of
$S(x,1),\dots,S(x,K)$, $\topT(x):=\{y:S(x,y)<\tau(x)\}$ (so $|\topT(x)|\le T$, with equality when the scores at $x$
are distinct), $m(x,y):=\Ind\{y\notin\topT(x)\}$ and $h(x,y):=\tau(x)\,m(x,y)$. We define $h_j:=h(X_j,Y_j)$,
$m_j:=m(X_j,Y_j)$ and
\[
E^{(w,h)}(y):=\frac{W\,h(X_{n+1},y)}{\sum_{i=1}^nw_ih_i+w_{n+1}h(X_{n+1},y)},
\]
$\tilde\alpha_h:=F(E^{(w,h)})$ and $\hat C_h:=\{y:E^{(w,h)}(y)<1/\tilde\alpha_h\}$.}

\newcommand{\StmtThmTransform}{%
Let the setting of Definition~\ref{def:transform} hold. Then $\E_{\Pt}\bigl[E^{(w,h)}(Y_{n+1})\bigr]\le1$.}

\newcommand{\StmtLemMarkov}{%
Let $e\ge0$ and $\hat\alpha\in(0,1]$ be random variables. Then $\mathbb P(e\ge1/\hat\alpha)\le\E[\hat\alpha\,e]$.}

\newcommand{\StmtThmCorrected}{%
Let the setting of Definition~\ref{def:transform} hold. Then
$\Pt(Y_{n+1}\notin\hat C_h)\le\E_{\Pt}\bigl[\tilde\alpha_hE^{(w,h)}(Y_{n+1})\bigr]$.}

\newcommand{\StmtPropTwopoint}{%
Let the setting of Definition~\ref{def:transform} hold. We define
$c:=W\tau(X_{n+1})/\bigl(\sum_{i=1}^nw_ih_i+w_{n+1}\tau(X_{n+1})\bigr)$ $(0/0:=0)$. Then $\tilde\alpha_h=\min\{1,1/c\}$
and $\tilde\alpha_hE^{(w,h)}(Y_{n+1})=\min\{1,c\}\,m_{n+1}$. Assume moreover $\Pt(c<1,\,m_{n+1}=1)=0$. Then
$\E_{\Pt}\bigl[\tilde\alpha_hE^{(w,h)}(Y_{n+1})\bigr]=\Pt\bigl(Y_{n+1}\notin\topT(X_{n+1})\bigr)=\E_P[w(X)\,m(X,Y)]$.}

\newcommand{\StmtDefReports}{%
Let the setting of Definition~\ref{def:transform} hold, and let $\omega_1,\dots,\omega_n>0$ be weights on the
calibration points. For $j\le n$ we define (with $0/0:=0$)
\[
E^{(\omega),j}(y):=\frac{(\sum_{i=1}^n\omega_i)\,S(X_j,y)}{\sum_{i\ne j}\omega_iS_i+\omega_jS(X_j,y)},
\]
\[
E^{(\omega,h),j}(y):=\frac{(\sum_{i=1}^n\omega_i)\,h(X_j,y)}{\sum_{i\ne j}\omega_ih_i+\omega_jh(X_j,y)},
\]
$\tilde\alpha_j(\omega):=F(E^{(\omega),j})$, $\tilde\alpha_{h,j}(\omega):=F(E^{(\omega,h),j})$ and
$\tilde b_j(\omega):=\tilde\alpha_{h,j}(\omega)E^{(\omega,h),j}(Y_j)$. We define the identity, transformed and tight
reports
\[
\hat\alpha^{\rm id}(\omega):=\frac{\sum_j\omega_j\tilde\alpha_j(\omega)}{\sum_j\omega_j},\quad
\hat\alpha^{\rm tr}(\omega):=\frac{\sum_j\omega_j\tilde\alpha_{h,j}(\omega)}{\sum_j\omega_j},
\]
\[
\hat b(\omega):=\frac{\sum_j\omega_j\tilde b_j(\omega)}{\sum_j\omega_j}.
\]}

\newcommand{\StmtRemNondeg}{%
Let the setting of Definition~\ref{def:reports} hold. We define
$\bar h(\omega):=\sum_{i=1}^n\omega_ih_i/\sum_{i=1}^n\omega_i$ and, for $j\le n$,
$c_j(\omega):=(\sum_{i=1}^n\omega_i)\tau(X_j)/(\sum_{i\ne j}\omega_ih_i+\omega_j\tau(X_j))$. Then
$c_j(\omega)=\tau(X_j)/\bar h(\omega)$ for every $j\le n$ with $m_j=1$,
\[
\hat b(\omega)=\frac{\sum_{j:\,m_j=1}\omega_j\min\{1,\tau(X_j)/\bar h(\omega)\}}{\sum_{j=1}^n\omega_j}
\le\frac{\sum_{j=1}^n\omega_jm_j}{\sum_{j=1}^n\omega_j},
\]
the inequality is an equality if and only if $\tilde b_j(\omega)=m_j$ for every $j\le n$, and
$\tilde b_j(\omega)=m_j$ for every $j\le n$ if and only if $\tau(X_j)>0$ and $\tau(X_j)\ge\bar h(\omega)$ for every
$j\le n$ with $m_j=1$. Let $\tau_{\min}>0$ be a constant, and assume moreover $\tau(x)\ge\tau_{\min}$ for every
$x\in\mathcal X$. Then $\tilde b_j(\omega)=m_j$ for every $j\le n$ on the event $\{\bar h(\omega)\le\tau_{\min}\}$.
Assume moreover $S\le S_{\max}<\infty$. Let $\hat\omega:\mathcal X\to(0,\omega_{\max}]$ be a measurable function that is
fixed, or fitted on data independent of $(X_i,Y_i)_{i\le n+1}$; in the latter case all probabilities are conditional on
those data. Let $\omega_i:=\hat\omega(X_i)$. We define $\tilde\omega:=\hat\omega/\int\hat\omega\,\mathrm dP_X$. Assume
$\E_P[\tilde\omega(X)h(X,Y)]<\tau_{\min}$. Then $\mathbb P\bigl(\tilde b_j(\omega)=m_j\ \forall j\le n\bigr)\to1$ as
$n\to\infty$.}

\newcommand{\StmtThmLoo}{%
Let the setting of Definition~\ref{def:reports} hold with $\omega_i=w(X_i)$. Assume $w\le w_{\max}<\infty$,
$S\le S_{\max}<\infty$ and $D^\star:=\E_P[w(X)S(X,Y)]>0$. Then
$\hat\alpha^{\rm id}(w)-\E_{\Pt}[\tilde\alpha]=O_P(n^{-1/2})$ as $n\to\infty$.}

\newcommand{\StmtThmCorrloo}{%
Let the setting of Definition~\ref{def:reports} hold with $\omega_i=w(X_i)$. Assume $w\le w_{\max}<\infty$,
$S\le S_{\max}<\infty$ and $D^\star_h:=\E_P[w(X)h(X,Y)]>0$. We define
$\beta^\star:=\E_{\Pt}[\tilde\alpha_hE^{(w,h)}(Y_{n+1})]$. Then $\hat b(w)-\beta^\star=O_P(n^{-1/2})$ as $n\to\infty$.}

\newcommand{\StmtPropFs}{%
Let the setting of Definition~\ref{def:reports} hold, and let $\hat\omega_1,\dots,\hat\omega_n>0$ and
$\omega_1,\dots,\omega_n>0$ be two weight vectors on the calibration points. Assume $\tilde b_j(\hat\omega)=m_j$ and
$\tilde b_j(\omega)=m_j$ for every $j\le n$, that is, non-degeneracy of both weight vectors. We define $\bar{\hat\omega}:=\frac1n\sum_j\hat\omega_j$ and
$\bar\omega:=\frac1n\sum_j\omega_j$. Then
\[
\hat b(\hat\omega)-\hat b(\omega)=\frac1n\sum_{j=1}^n\Bigl(\frac{\hat\omega_j}{\bar{\hat\omega}}-\frac{\omega_j}{\bar\omega}\Bigr)m_j .
\]}

\newcommand{\StmtPropWest}{%
Let the setting of Definition~\ref{def:reports} hold. Let $\hat\omega:\mathcal X\to(0,\omega_{\max}]$ be a measurable function
that is fixed, or fitted on data independent of $(X_i,Y_i)_{i\le n+1}$; in the latter case all probabilities are
conditional on those data. Let $\omega_i:=\hat\omega(X_i)$. Assume
$\mathbb P\bigl(\tilde b_j(\omega)=m_j\ \forall j\le n\bigr)\to1$ as $n\to\infty$. We define
$\tilde\omega:=\hat\omega/\int\hat\omega\,\mathrm dP_X$. Then, as $n\to\infty$, $\hat b(\omega)\to\E_P[\tilde\omega(X)m(X,Y)]$
in probability and
$\hat b(\omega)-\Pt\bigl(Y_{n+1}\notin\topT(X_{n+1})\bigr)\to\E_P\bigl[(\tilde\omega(X)-w(X))\,m(X,Y)\bigr]$ in
probability.}

\newcommand{\StmtPropDecomp}{%
Let the setting of Definition~\ref{def:reports} hold. Let $\hat\omega:\mathcal X\to(0,\omega_{\max}]$ be a measurable
function that is fixed, or fitted on data independent of $(X_i,Y_i)_{i\le n+1}$; in the latter case all probabilities
are conditional on those data. Let $\omega_i:=\hat\omega(X_i)$, and assume $\mathbb P\bigl(\tilde b_j(\omega)=m_j\ \forall
j\le n\bigr)\to1$ as $n\to\infty$. We define $\tilde\omega:=\hat\omega/\int\hat\omega\,\mathrm dP_X$. Let $\phi$ be a
measurable map from $\mathcal X$ to a measurable space $\mathcal Z$. We define $v_\phi$, $\mu_\phi$
and $\tilde\mu_\phi$ as versions of $z\mapsto\E_P[w(X)\mid\phi(X)=z]$, $z\mapsto P(m(X,Y)=1\mid\phi(X)=z)$ and
$z\mapsto\Pt(m(X,Y)=1\mid\phi(X)=z)$, the last two with values in $[0,1]$. Then, as $n\to\infty$,
\begin{align*}
&\hat b(\omega)-\Pt\bigl(Y_{n+1}\notin\topT(X_{n+1})\bigr)\\
&\quad\to\E_P\bigl[(\tilde\omega(X)-v_\phi(\phi(X)))\,m(X,Y)\bigr]\\
&\qquad-\E_{\Pt}\bigl[\tilde\mu_\phi(\phi(X))-\mu_\phi(\phi(X))\bigr]
\end{align*}
in probability.}

\newcommand{\StmtPropCompare}{%
Let the setting of Definition~\ref{def:transform} hold. Let $\phi:\mathcal X\to\mathcal Z$ and
$\psi:\mathcal X\to\mathcal Z'$ be measurable maps into measurable spaces with $\psi=f\circ\phi$ for a measurable
$f:\mathcal Z\to\mathcal Z'$. We define $v_\phi$, $\mu_\phi$ and $\tilde\mu_\phi$ as in Proposition~\ref{prop:decomp},
$M_\phi:=\E_P[m(X,Y)\mid\phi(X)]$ and $B_\phi:=\E_P[(v_\phi(\phi(X))-w(X))\,m(X,Y)]$, and $v_\psi$, $M_\psi$ and $B_\psi$ in the
same way with $\psi$ in place of $\phi$. Then
$B_\phi=-\E_{\Pt}\bigl[\tilde\mu_\phi(\phi(X))-\mu_\phi(\phi(X))\bigr]$ and
\[
B_\psi=B_\phi-\E_P\bigl[v_\phi(\phi(X))M_\phi-\E_P[v_\phi(\phi(X))\mid\psi(X)]\,\E_P[M_\phi\mid\psi(X)]\bigr].
\]
Assume moreover that $w(X)$ is $\sigma(\phi(X))$-measurable. Then $B_\phi=0$.}

\newcommand{\StmtDefGr}{%
Let the setting of Definition~\ref{def:transform} hold. We define $G(x)\in[0,\infty)^K$ as the vector
$(S(x,y))_{y\in\mathcal Y}$ sorted in increasing order, and $R(x,y)\in\{1,\dots,K\}$ as the position of label $y$ when
$\mathcal Y$ is sorted by increasing $S(x,\cdot)$ with ties broken by label index, so that $y\mapsto R(x,y)$ is a
bijection and $S(x,y)=G(x)_{R(x,y)}$ for every $(x,y)$. We define $v$ as a version of
$g\mapsto\E_P[w(X)\mid G(X)=g]$ with $v>0$.}

\newcommand{\StmtPropReduction}{%
Let the setting of Definition~\ref{def:GR} hold. Assume that there are measurable $q_r:[0,\infty)^K\to[0,1]$,
$r=1,\dots,K$, with $P(R(X,Y)=r\mid G(X))=q_r(G(X))$ $P$-a.s.\ and $\Pt(R(X,Y)=r\mid G(X))=q_r(G(X))$ $\Pt$-a.s.\ for
every $r$. We define $E^{(v\circ G)}$, $\tilde\alpha^{(v\circ G)}$, $\hat C^{(v\circ G)}$, $E^{(v\circ G,h)}$,
$\tilde\alpha^{(v\circ G)}_h$ and $\hat C^{(v\circ G)}_h$ as in Definitions~\ref{def:evalue} and~\ref{def:transform}
with $w(X_j)$ replaced by $v(G(X_j))$ for $j=1,\dots,n+1$ (so $W$ is replaced by $\sum_{j=1}^{n+1}v(G(X_j))$).
Then (i) $\E_{\Pt}[\eta(G(X),R(X,Y))]=\E_P[v(G(X))\eta(G(X),R(X,Y))]$ for every bounded measurable $\eta$, and
(ii) $\E_{\Pt}[E^{(v\circ G)}(Y_{n+1})]\le1$,
$\E_{\Pt}[\Ind\{Y_{n+1}\notin\hat C^{(v\circ G)}\}/\tilde\alpha^{(v\circ G)}]\le1$,
$\Pt(Y_{n+1}\notin\hat C^{(v\circ G)})\le(\E_{\Pt}[\tilde\alpha^{(v\circ G)}])^{1/2}$,
$\E_{\Pt}[E^{(v\circ G,h)}(Y_{n+1})]\le1$ and
$\Pt(Y_{n+1}\notin\hat C^{(v\circ G)}_h)\le\E_{\Pt}[\tilde\alpha^{(v\circ G)}_hE^{(v\circ G,h)}(Y_{n+1})]$.}

\newcommand{\StmtPropSufficient}{%
Let the setting of Definition~\ref{def:GR} hold. Assume that $w(X)=v(G(X))$ $P$-a.s., or that
$P(R(X,Y)=r\mid X)=P(R(X,Y)=r\mid G(X))$ $P$-a.s.\ for every $r$, or that
$\E_{\Pt}[\eta(G(X),R(X,Y))]=\E_P[v(G(X))\eta(G(X),R(X,Y))]$ for every bounded measurable $\eta$. Then there are measurable
$q_r:[0,\infty)^K\to[0,1]$, $r=1,\dots,K$, with $P(R(X,Y)=r\mid G(X))=q_r(G(X))$ $P$-a.s.\ and
$\Pt(R(X,Y)=r\mid G(X))=q_r(G(X))$ $\Pt$-a.s.\ for every $r$.}

\title{Backward Conformal Prediction under Covariate Shift}
\author{%
  Amin Meghrazi \\
  The Ohio State University \\
  \texttt{meghrazi.1@osu.edu}
  \and
  Yuanyuan Li \\
  Munich RE \\
  \texttt{yli@munichre.com}
}
\date{}

\renewenvironment{abstract}{\begin{center}\large\bfseries\abstractname\end{center}}{\vspace{1ex}}
\begin{document}
\maketitle

\begin{abstract}
Backward conformal prediction (BCP) fixes the size of a prediction set and estimates its miscoverage, but assumes calibration and test data are exchangeable. We extend BCP to covariate shift through importance weighting, preserving its guarantees when the density ratio is known. We show that BCP's tight miscoverage report depends only on the relative weight assigned to missed calibration examples, explaining why a better domain classifier can sometimes produce a worse estimate. Motivated by the fact that BCP depends only on nonconformity scores, we propose estimating density ratios on sorted score vectors rather than high-dimensional inputs. We prove that this approach recovers the same report asymptotically under an interpretable condition and characterize the resulting bias otherwise. Experiments with simulated and benchmark distribution shifts show that the proposed method improves BCP's miscoverage estimates, while score-based weighting consistently yields more accurate estimates than embedding-based weighting.
\end{abstract}
\section{Introduction}\label{sec:intro}

Conformal prediction provides label sets with guaranteed coverage under exchangeable calibration and test data, where the true label is included with probability at least $1-\alpha$ for the predetermined $\alpha$ \citep{vovk2005algorithmic}. The set size depends on the data. Backward conformal prediction (BCP) inverts this approach \citep{gauthier2025backward}: the user specifies a set size $T$, and BCP reports the miscoverage level supported by the data, using e-values, nonnegative statistics with expectation at most one, which remain valid when the level is selected post hoc \citep{vovk2025eprediction,gauthier2025evalues}. \citet{liu2026improving} tighten the reported level by transforming the nonconformity score. Fixing the set size is appropriate when a subsequent step can process only $T$ candidates, for example, when a clinician must evaluate a shortlist of diagnoses.

However, BCP requires exchangeable data, which fails under covariate shift, where test and calibration inputs follow different distributions, but the label distribution given the input is unchanged. Weighted conformal prediction handles covariate shift at a fixed level by weighting calibration scores with the test-to-calibration input density ratio \citep{tibshirani2019conformal}. To our knowledge, BCP has not been adapted to covariate shift.

We adapt BCP to covariate shift by weighting its e-values with the input density ratio, and show that its guarantees hold when the true ratio is known. For the two-point score of \citet{liu2026improving}, the resulting tight report equals the importance-weighted miss rate of the size-$T$ set on the calibration data, provided a non-degeneracy condition is met. In practice, the density ratio must be estimated, and the report depends on the weights only through the share of the total weight on the missed calibration points. Measures that average over all points, such as the mean squared error (MSE) of the weights, therefore do not determine its accuracy. In our experiments with a shift toward hard inputs, calibrating the classifier used to estimate the weights reduces their mean squared error and yet makes the tight report worse.

Because BCP relies only on the model's scores, we propose estimating the density ratio from the sorted scores rather than the original inputs. We prove that, in the limit, this gives the same report as weighting on the full input whenever the set misses equally often on test and calibration inputs with the same sorted scores. Otherwise, the two reports differ by the average change in that miss rate, which we call the reliability shift. More strongly, when the rank of the true label given the sorted scores has the same distribution under the shift, all of BCP's guarantees hold with this ratio. More broadly, the report's asymptotic bias has two sources: error in estimating the density ratio of the chosen representation, measured on the missed points, and bias introduced by that representation. Either sorted scores or an embedding can have lower bias, but the density ratio of sorted scores is typically easier to estimate.

We evaluate our approach on the Dry Bean dataset, where the shifts are known, and on CIFAR-10-C with two image models. Because image corruptions only approximate covariate shift, the CIFAR-10-C experiments provide an empirical test of our reports. On CIFAR-10-C, score weights bring the tight report closer to the true miscoverage than embedding weights for both models, across three set sizes and two overlap levels between clean and corrupted embeddings, and the remaining error tracks a binned estimate of the reliability shift. On CIFAR-10.1 \citep{recht2018cifar}, a natural shift of CIFAR-10, score weights again outperform embedding weights. The tight report still tends to understate the miscoverage, and average thresholded confidence (ATC; \citealp{garg2022leveraging}), which estimates a classifier's error from unlabeled data, is more accurate there as a point estimate of the miscoverage. Our method adds e-value guarantees for fixed-size sets under the true density ratio and an exact account of how estimated weights affect the report.

\section{Related Work}\label{sec:related}

\textbf{Conformal prediction under covariate shift.}
Weighted split conformal prediction weights the calibration scores by the covariate density ratio and constructs a set at a pre-specified level, achieving finite-sample coverage when the ratio is known \citep{tibshirani2019conformal}. \citet{barber2023beyond} bound the loss of coverage when exchangeability fails, and \citet{park2022pac} form PAC prediction sets using estimated importance weights and find that calibrating the weights, by binning them with uncertainty intervals, helps in their setting. Under exchangeability, conformal prediction has also been extended to fairness across groups in federated learning \citep{srinivasan2025fedcf}. These works fix the coverage level and allow set size to vary. In contrast, we fix the set size and report the level, and we compare with weighted split conformal prediction at matched mean set size.

\textbf{Backward conformal prediction and e-values.}
Conformal e-prediction replaces p-values with e-values, which lets levels be chosen after seeing the data \citep{vovk2025eprediction}. BCP uses this to fix the set size and report the level the data support, with a leave-one-out estimate of the expected level \citep{gauthier2025backward}, and ST-BCP \citep{liu2026improving} transforms the score to tighten the report. Both assume exchangeable data.

\textbf{Weighted e-values.}
Importance-weighted conformal e-values have been used in false discovery rate control for selection and outlier detection \citep{lee2024boosting}, as well as in selective prediction with risk control (SCoRE; \citealp{bai2026score}). Our weighted e-value uses the same change-of-measure principle, but its application to fixed-size sets and the resulting reports is novel. SCoRE selects predictions to trust at a fixed risk level, while BCP keeps every prediction, as a size-$T$ set, and reports its miscoverage. When weights are estimated, \citet{bai2026score} show that SCoRE remains asymptotically valid if the weights converge in $L^2$, and present a doubly robust approach. For BCP, we provide the exact finite-sample change in the report, which depends on the weights only through the share of the total weight on the missed calibration points. Weighted conformal p-values are also used for selective inference under covariate shift \citep{jin2023selection} and, via test martingales, for deployment monitoring \citep{prinster2025watch}. Fairness criteria can likewise be monitored at runtime with probabilistic guarantees, for which \citet{meghrazi2025interactive} provide an interactive interface.

\textbf{Estimating error under shift.}
Without assumptions on the shift, a classifier's accuracy on an unlabeled target distribution cannot be identified \citep{garg2022leveraging}; under covariate shift, importance weighting can estimate it (see, e.g., \citealp{cortes2010learning}). BCP's tight report serves as an importance-weighted estimate for the size-$T$ set, aligning it with other estimators of classifier error from unlabeled test data, such as ATC, the proportion of test points with confidence below a threshold determined from labeled source data \citep{garg2022leveraging}. We include ATC as a comparison. Risk estimation by reweighting on a low-dimensional function of the input is exact when the label is conditionally independent of the input given that function, and the loss depends on the input only through it \citep{stojanov2019low}. Mandoline reweights user-defined slices like predicted class and entropy \citep{chen2021mandoline}, while \citet{garg2022leveraging} include a confidence-reweighting baseline. Our reduction to sorted scores is the BCP analog of these approaches. Its condition holds exactly when the joint density ratio of the sorted scores and the rank of the true label depends only on the sorted scores; if not, we precisely characterize the asymptotic bias of the tight report.
\section{Weighted Backward Conformal Prediction}\label{sec:setting}

We build our analysis on the formalism of BCP \citep{gauthier2025backward}. Appendix~\ref{app:theory} contains all proofs and supporting results.

\begin{definition}[Setting]\label{def:setting}\StmtDefSetting\end{definition}

Definition~\ref{def:setting} describes covariate shift. The test distribution reweights the calibration distribution by a function $w$ of the input alone, so the label given the input has the same distribution under both. The score can be any fixed nonnegative function of the input and the label, for example the negative log-probability that a classifier assigns to the label.

\begin{definition}[Weighted e-value, level and set]\label{def:evalue}\StmtDefEvalue\end{definition}

The level $\tilde\alpha$ is the smallest level at which the set $\hat C$ has at most $T$ labels; when no level below one achieves this, $\tilde\alpha=1$ and the set can have more labels. With $w\equiv1$, the e-value, level and set are those of BCP. The test score enters the numerator without its weight to obtain a valid e-value (Appendix~\ref{app:design}).

\begin{theorem}[Weighted e-value validity]\label{thm:validity}\StmtThmValidity\end{theorem}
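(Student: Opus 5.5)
Evaluated at the true label, the e-value is
\[
E^{(w)}(Y_{n+1})=\frac{W S_{n+1}}{\sum_{j=1}^{n+1}w_jS_j},
\]
because $S(X_{n+1},Y_{n+1})=S_{n+1}$. I will prove the bound by the weighted-exchangeability argument of \citet{tibshirani2019conformal}: change measure from $\Pt$ to $P$, then symmetrize over which of the $n+1$ points plays the role of the test point.

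\textbf{Step 1 (change of measure).} Under $\Pt$ the joint law of $Z_1,\dots,Z_{n+1}$, with $Z_j=(X_j,Y_j)$, is $\prod_{i\le n}P(\mathrm dz_i)\,\Pt(\mathrm dz_{n+1})=w(x_{n+1})\prod_{j\le n+1}P(\mathrm dz_j)$. Hence, writing $\E_P$ for expectation when all $n+1$ points are i.i.d.\ from $P$,
\[
\E_{\Pt}\bigl[E^{(w)}(Y_{n+1})\bigr]=\E_P\Bigl[w_{n+1}\,\frac{W S_{n+1}}{\sum_{j}w_jS_j}\Bigr].
\]
Every quantity here is nonnegative, so the expectation is well defined even if $w$ is unbounded, and the identity holds by Tonelli's theorem.

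\textbf{Step 2 (symmetrization).} Define
\[
g_k(z_1,\dots,z_{n+1}):=\frac{W\,w_kS_k}{\sum_j w_jS_j},
\]
using the convention $0/0:=0$. Under i.i.d.\ $P$ the vector $(Z_1,\dots,Z_{n+1})$ is exchangeable. The function $W$ and the denominator are symmetric in the $z_j$. Swapping $z_k$ with $z_{n+1}$ therefore gives $\E_P[g_{n+1}]=\E_P[g_k]$ for every $k$, so
\[
\E_P[g_{n+1}]=\frac1{n+1}\,\E_P\Bigl[\sum_{k=1}^{n+1}g_k\Bigr].
\]

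\textbf{Step 3 (pointwise bound).} Pointwise we have $\sum_k g_k=W\cdot\frac{\sum_kw_kS_k}{\sum_jw_jS_j}$. This equals $W$ when the denominator is positive and $0$ when all $w_jS_j=0$, so in either case $\sum_k g_k\le W$. It follows that
\[
\E_{\Pt}\bigl[E^{(w)}(Y_{n+1})\bigr]\le\frac{1}{n+1}\,\E_P[W]=\frac{1}{n+1}\,(n+1)\,\E_P[w(X)].
\]
Finally $\E_P[w(X)]=\Pt(\mathcal X\times\mathcal Y)=1$, which gives the bound $1$.

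\textbf{Main obstacle.} The delicate step is the degenerate case. One has to check that the $0/0$ convention is applied consistently in both $E^{(w)}$ and $g_k$. If the denominator vanishes then $S_{n+1}=0$, since $w_{n+1}>0$, and the numerator is $0$ as well, so no $1/0=\infty$ case arises. With that checked, it only remains to justify the measure-change step for nonnegative integrands; the symmetrization itself is routine.
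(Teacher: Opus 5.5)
Your proof is correct and follows essentially the same route as the paper's: a change of measure to i.i.d.\ $P$ with the extra factor $w_{n+1}$, then symmetrization over the index using exchangeability and the symmetry of $W$ and $\sum_j w_jS_j$, then $\E_P[W]=n+1$. The paper goes one step further and evaluates the expectation exactly as $1-\Pt(S=0)\,P(S=0)^n$, which your pointwise bound $\sum_k g_k\le W$ discards; that refinement is not needed for the statement.
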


Theorem~\ref{thm:validity} extends the e-value used by \citet{balinsky2024enhancing} to covariate shift through the change of measure of weighted conformal prediction \citep{tibshirani2019conformal}. Because BCP chooses its level after seeing the test input, it needs the post-hoc form of this guarantee: the weighted version of \citet[Proposition~3.3]{gauthier2025evalues}.

\begin{theorem}[Post-hoc validity]\label{thm:posthoc}\StmtThmPosthoc\end{theorem}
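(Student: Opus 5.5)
The plan is to reduce both inequalities to the single elementary bound
\[
\Ind\{e\ge 1/\alpha\}/\alpha\le e \quad\text{for } e\ge0,\ \alpha\in(0,1],
\]
and then invoke Theorem~\ref{thm:validity}. If $e\ge1/\alpha$, the left side equals $1/\alpha\le e$. Otherwise the left side is $0\le e$. The bound is pointwise, so it holds whatever the dependence between $\hat\alpha$ and the data. This is exactly what makes the guarantee post hoc: no independence between $\hat\alpha$ and $E^{(w)}(Y_{n+1})$ is needed.

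For the first claim, apply the bound with $e=E^{(w)}(Y_{n+1})\in[0,\infty]$ and $\alpha=\hat\alpha$. Take expectations under $\Pt$ and use Theorem~\ref{thm:validity} to get
\[
\E_{\Pt}\bigl[\Ind\{E^{(w)}(Y_{n+1})\ge1/\hat\alpha\}/\hat\alpha\bigr]\le\E_{\Pt}[E^{(w)}(Y_{n+1})]\le1.
\]
One detail needs care. The conventions $0/0:=0$ and $1/0:=\infty$ could make $E^{(w)}$ infinite when the denominator vanishes but the numerator does not. The denominator is zero only when $S(X_{n+1},y)=0$ and all $S_i=0$ (weights are positive), and then the numerator $W S(X_{n+1},y)$ is also zero, so $E^{(w)}=0$. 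Hence $E^{(w)}$ is finite, and the pointwise bound applies without issue. Even if $E^{(w)}$ were infinite, the bound would hold trivially.

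For the second claim, take $\hat\alpha=\tilde\alpha=F(E^{(w)})$. By definition this lies in $(0,1]$: it is either $1$ or an infimum over $(0,1)$, which by Lemma~\ref{lem:F} equals $\min\{1,1/e_{(T+1)}\}$. This is positive because $e_{(T+1)}<\infty$. By definition $\hat C=\{y:E^{(w)}(y)<1/\tilde\alpha\}$, so $\{Y_{n+1}\notin\hat C\}=\{E^{(w)}(Y_{n+1})\ge1/\tilde\alpha\}$. The second inequality is therefore the first one specialized to $\hat\alpha=\tilde\alpha$. We also need $\tilde\alpha$ to be a measurable random variable. This follows from its closed form as a min of measurable order statistics in Lemma~\ref{lem:F}.

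The main obstacle is essentially absent; the substance sits in Theorem~\ref{thm:validity}. The only delicate points are the two just handled: the edge conventions making $E^{(w)}$ possibly $0/0$, and confirming $\tilde\alpha>0$ so that division by $\tilde\alpha$ is legitimate.
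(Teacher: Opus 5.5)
Your proposal is correct and matches the paper's proof: you combine the pointwise bound $\Ind\{e\ge1/a\}/a\le e$ with Theorem~\ref{thm:validity}, then take $\hat\alpha=\tilde\alpha\in(0,1]$ (by Lemma~\ref{lem:F}) and use $\{Y_{n+1}\notin\hat C\}=\{E^{(w)}(Y_{n+1})\ge1/\tilde\alpha\}$. You also check that $E^{(w)}$ is finite under the $0/0$ convention and that $\tilde\alpha$ is positive and measurable, which the paper leaves implicit.
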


The second inequality says that a miss, divided by the reported level, has expectation at most one, even though the level depends on the data. Under a further covariance condition, the expected level $\E_{\Pt}[\tilde\alpha]$ bounds the miscoverage itself (Theorem~\ref{thm:coverage}), a bound that \citet{gauthier2025backward} state to first order.

\section{Miscoverage Reports}\label{sec:tight}

BCP chooses the level $\tilde\alpha$ after seeing the test input, so the number to report is its expectation, $\E_{\Pt}[\tilde\alpha]$. It is unknown because we observe $\Pt$ only through unlabeled inputs. Following \citet{gauthier2025backward}, we estimate it by leave-one-out. Each calibration point is treated as the test point and yields one level; we then average these levels with weights so the weighted calibration sample approximates the test distribution. We use two such reports, one built on the score $S$ and one on the two-point score of ST-BCP \citep{liu2026improving}, and compare both with the miscoverage of the size-$T$ set.

\begin{definition}[Weighted e-value using the two-point score]\label{def:transform}\StmtDefTransform\end{definition}

The size-$T$ set $\topT(x)$ holds at most the $T$ labels with the smallest scores, and $m$ marks a miss. The two-point score $h$ is $0$ on $\topT(x)$ and $\tau(x)$ elsewhere, so it puts the whole e-value on the miss event. BCP's sets $\hat C$ and $\hat C_h$ equal $\topT(X_{n+1})$ whenever their levels are below one and a calibration score is positive. We compare every report with the miscoverage $\Pt(Y\notin\topT(X))$ of the size-$T$ set.

\begin{definition}[Reports]\label{def:reports}\StmtDefReports\end{definition}

With constant weights, the transformed report $\hat\alpha^{\rm tr}$ is the ST-BCP report. We report its values only in the appendix and use the other two here. For a positive function $u$ on $\mathcal X$, such as $w$ or an estimate $\hat\omega$, we write $\hat\alpha^{\rm id}(u)$ and $\hat b(u)$ for the reports with weights $\omega_j=u(X_j)$.

\textbf{The identity report.} 
With bounded true weights and scores, and $D^\star:=\E_P[w(X)S(X,Y)]>0$, $\hat\alpha^{\rm id}(w)-\E_{\Pt}[\tilde\alpha]=O_P(n^{-1/2})$ (Theorem~\ref{thm:loo}). Under the covariance condition of Theorem~\ref{thm:coverage}, its target $\E_{\Pt}[\tilde\alpha]$ bounds the miscoverage of $\hat C$, and hence that of $\topT(X_{n+1})$ when $\tilde\alpha<1$ almost surely.

\textbf{The tight report.}
ST-BCP proposes the corrected level, a Markov bound on the miscoverage of $\hat C_h$ that needs no condition on $\tilde\alpha_h$, and estimates it by leave-one-out. The authors recommend it when $T$ is small. Our tight report extends their estimate to covariate shift by weighting.

\begin{theorem}[Corrected level]\label{thm:corrected}\StmtThmCorrected\end{theorem}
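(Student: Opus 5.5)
The corrected level is a direct application of the Markov-type Lemma (stated as \StmtLemMarkov) under $\Pt$, with $e:=E^{(w,h)}(Y_{n+1})$ and $\hat\alpha:=\tilde\alpha_h$. The plan is to reduce the miss event to a threshold event and then invoke that lemma.

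\textbf{Step 1: the miss event is a threshold event.} Since $\hat C_h=\{y:E^{(w,h)}(y)<1/\tilde\alpha_h\}$, we have
\[
\{Y_{n+1}\notin\hat C_h\}=\{E^{(w,h)}(Y_{n+1})\ge 1/\tilde\alpha_h\}.
\]
Before using this I would check the domains. First, $E^{(w,h)}(y)\ge0$, because $h\ge0$ and all weights are positive. By the $0/0:=0$ convention the value is finite: the numerator $W h(X_{n+1},y)$ vanishes whenever the denominator does, since $w_{n+1}>0$. Second, $\tilde\alpha_h=F(E^{(w,h)})$ lies in $(0,1]$. This follows from Lemma~\ref{lem:F}'s formula $F(e)=\min\{1,1/e_{(T+1)}\}$, because entries are finite and hence $1/e_{(T+1)}>0$. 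If $e_{(T+1)}=0$ then $F=1$.

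\textbf{Step 2: prove the Markov lemma.} If the lemma is not taken as given, it follows from a pointwise bound. On $\{e\ge1/\hat\alpha\}$ we have $\hat\alpha e\ge1$, and $\hat\alpha e\ge0$ always. Hence
\[
\Ind\{e\ge 1/\hat\alpha\}\le\hat\alpha e,
\]
and taking expectations gives the claim. Applying this with the choices above yields
\[
\Pt(Y_{n+1}\notin\hat C_h)\le\E_{\Pt}\bigl[\tilde\alpha_hE^{(w,h)}(Y_{n+1})\bigr].
\]

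\textbf{Main obstacle.} There is no real obstacle. The only delicate point is the bookkeeping in Step 1: $\tilde\alpha_h$ must be strictly positive and the e-value must be finite, so that $1/\tilde\alpha_h$ is well defined and the pointwise inequality holds everywhere. Measurability of $\tilde\alpha_h$ follows from the order-statistic formula in Lemma~\ref{lem:F}. No covariance condition or exchangeability is needed, since the inequality is pointwise. Theorem~\ref{thm:transform} is likewise not needed here; it only ensures that the right-hand side is at most one.
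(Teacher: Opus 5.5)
Your proof is correct and matches the paper's, which is exactly Lemma~\ref{lem:markov} applied with $e=E^{(w,h)}(Y_{n+1})$ and $\hat\alpha=\tilde\alpha_h$, using that $Y_{n+1}\notin\hat C_h$ if and only if $E^{(w,h)}(Y_{n+1})\ge 1/\tilde\alpha_h$. Your checks that the e-value is finite and that $\tilde\alpha_h\in(0,1]$ (via Lemma~\ref{lem:F}) are bookkeeping the paper leaves implicit.
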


With bounded true weights and scores, and $D^\star_h:=\E_P[w(X)h(X,Y)]>0$, the tight report converges to the expected corrected level, $\hat b(w)-\E_{\Pt}[\tilde\alpha_hE^{(w,h)}(Y_{n+1})]=O_P(n^{-1/2})$ (Theorem~\ref{thm:corrloo}). For the two-point score, both the level $\tilde\alpha_h$ and the corrected level have closed forms.

\begin{proposition}[Two-point closed form]\label{prop:twopoint}\StmtPropTwopoint\end{proposition}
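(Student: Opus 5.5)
The plan is to compute the vector $e:=E^{(w,h)}\in[0,\infty)^{\mathcal Y}$ label by label, read off its $(T+1)$-th smallest entry, and then apply Lemma~\ref{lem:F}. Write $D:=\sum_{i=1}^n w_ih_i$ and $x:=X_{n+1}$. Because $h(x,y)=\tau(x)m(x,y)$, the vector $e$ takes only two values.
\begin{itemize}
\item For $y\in\topT(x)$ we have $h(x,y)=0$, so $e_y=0$. If $D>0$ this is the quotient $0/D$; if $D=0$ it is $0/0:=0$.
\item For $y\notin\topT(x)$ we have $h(x,y)=\tau(x)$, so $e_y=W\tau(x)/(D+w_{n+1}\tau(x))=c$. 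If $\tau(x)>0$ the denominator is positive and $c$ is finite. If $\tau(x)=0$ then $c=0$, either as $0/D$ or by the convention $0/0:=0$.
\end{itemize}

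\textbf{Step 1: the order statistic.} By Definition~\ref{def:transform}, $|\topT(x)|\le T$, so $e$ has at most $T$ zeros, one for each label in $\topT(x)$. All other entries, at least $K-T\ge1$ of them, equal $c$. Since $c\ge0$, the sorted vector starts with the $|\topT(x)|$ zeros followed by copies of $c$. Because $|\topT(x)|\le T<K$, the $(T+1)$-th smallest entry is $e_{(T+1)}=c$. This also holds when ties make $|\topT(x)|<T$, and when $c=0$, since then every entry is $0=c$. Lemma~\ref{lem:F} then gives $\tilde\alpha_h=\min\{1,1/c\}$, with $1/0=\infty$.

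\textbf{Step 2: the corrected level.} We have $\tilde\alpha_hE^{(w,h)}(Y_{n+1})=\min\{1,1/c\}\cdot c\,m_{n+1}$, because $E^{(w,h)}(Y_{n+1})$ equals $c$ if $m_{n+1}=1$ and $0$ otherwise. Now check the cases.
\begin{itemize}
\item If $c\ge1$, the product is $(1/c)\,c\,m_{n+1}=m_{n+1}$.
\item If $0<c<1$, it is $c\,m_{n+1}$.
\item If $c=0$, it is $1\cdot0=0=\min\{1,c\}m_{n+1}$.
\end{itemize}
Hence $\tilde\alpha_hE^{(w,h)}(Y_{n+1})=\min\{1,c\}\,m_{n+1}$.

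\textbf{Step 3: the expectation.} Under the extra assumption $\Pt(c<1,m_{n+1}=1)=0$, we have $c\ge1$ almost surely on $\{m_{n+1}=1\}$. So $\min\{1,c\}m_{n+1}=m_{n+1}$ $\Pt$-a.s., and the expectation equals $\Pt(m_{n+1}=1)=\Pt(Y_{n+1}\notin\topT(X_{n+1}))$. Finally, $m$ is a bounded measurable function of $(x,y)$, so the change of measure $\Pt(\mathrm dx,\mathrm dy)=w(x)P(\mathrm dx,\mathrm dy)$ from Definition~\ref{def:setting} gives $\E_{\Pt}[m(X,Y)]=\E_P[w(X)m(X,Y)]$.

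\textbf{Expected obstacle.} The only delicate part is the degenerate bookkeeping. This covers ties in the scores, where $|\topT(x)|<T$; the case $\tau(x)=0$; and $D=0$, where the $0/0$ convention is needed. I will handle these by the explicit case split above, so that Lemma~\ref{lem:F} and the product formula are applied correctly in each case.
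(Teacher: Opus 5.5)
Your proposal is correct and follows the paper's proof. Both show that $E^{(w,h)}$ is $0$ on $\topT(X_{n+1})$, which has at most $T$ labels, and equals $c$ elsewhere, so $e_{(T+1)}=c$; both then apply Lemma~\ref{lem:F}, use $E^{(w,h)}(Y_{n+1})=c\,m_{n+1}$ and finish with the change of measure. Your explicit case analysis ($D=0$, $\tau(X_{n+1})=0$, and $c=0$, $0<c<1$ or $c\ge1$) only spells out the degenerate cases that the paper's one-line argument leaves implicit.
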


The condition fails only on a miss whose $\tau(X_{n+1})$ is zero or below the weighted calibration average of $h$, and when it holds, the expected corrected level equals the miscoverage of the size-$T$ set. When the condition holds in all leave-one-out folds, which we call non-degeneracy, every miss counts in full and the tight report is the importance-weighted miss rate of the size-$T$ set on the calibration data,
\[
\hat b(\omega)=\frac{\sum_{j\le n}\omega_jm_j}{\sum_{j\le n}\omega_j}.
\]
This is the classical importance-weighted estimate of the size-$T$ set error. Weighted BCP adds the bound this estimate targets and e-value guarantees for its set under the true weights.

\section{Estimated Weights}\label{sec:estimated}

In practice, $w$ is estimated, typically with a domain classifier that separates unlabeled test inputs from held-out inputs of the calibration distribution \citep{tibshirani2019conformal}. The real challenge is ensuring that the estimated weights $\hat\omega_j$ represent the true weights $w_j$. However, under non-degeneracy, only the share of the total weight on the missed calibration points matters for the tight report. As a result, measures of weight quality that average over all points, such as the mean squared error of the weights or the calibration and accuracy of the domain classifier, can improve while the error of the report grows.
More formally, if the non-degeneracy condition holds for both the estimated and true weights, then
\[
\hat b(\hat\omega)-\hat b(w)=\frac1n\sum_{j=1}^n\Bigl(\frac{\hat\omega_j}{\bar{\hat\omega}}-\frac{w_j}{\bar w}\Bigr)m_j,
\]
where $\bar{\hat\omega}$ and $\bar w$ are the averages of the two weight vectors over the calibration points, so errors of the normalized weights on the covered calibration points do not enter the report error.
As $n$ grows, the report converges, and its asymptotic bias splits according to the representation the weights are fitted on.

\begin{proposition}[Asymptotic bias of the tight report]\label{prop:decomp}\StmtPropDecomp\end{proposition}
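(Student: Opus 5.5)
The plan is to combine Proposition~\ref{prop:west}, taken as given, with an algebraic decomposition of its limit. Proposition~\ref{prop:west} holds under the same hypotheses: $\hat\omega$ is fixed or fitted independently, bounded by $\omega_{\max}$, and non-degeneracy holds with probability tending to one. It gives
\[
\hat b(\omega)-\Pt\bigl(Y_{n+1}\notin\topT(X_{n+1})\bigr)\to\E_P\bigl[(\tilde\omega(X)-w(X))\,m(X,Y)\bigr]
\]
in probability. It therefore suffices to prove the deterministic identity
\[
\E_P\bigl[(\tilde\omega(X)-w(X))m\bigr]=\E_P\bigl[(\tilde\omega(X)-v_\phi(\phi(X)))m\bigr]-\E_{\Pt}\bigl[\tilde\mu_\phi(\phi(X))-\mu_\phi(\phi(X))\bigr],
\]
that is, $B_\phi:=\E_P[(v_\phi(\phi(X))-w(X))m]=-\E_{\Pt}[\tilde\mu_\phi-\mu_\phi]$. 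This is the first claim of Proposition~\ref{prop:compare}, but I would prove it directly here.

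\textbf{Step 1: first term of $B_\phi$.} Condition on $\phi(X)$ and write $Z=\phi(X)$. Since $v_\phi(Z)$ is $\sigma(Z)$-measurable and bounded (because $w$ is bounded in the surrounding setting), the tower property gives
\[
\E_P[v_\phi(Z)m]=\E_P[v_\phi(Z)\mu_\phi(Z)].
\]
The law of $Z$ under $\Pt$ has density $v_\phi$ with respect to its law under $P$, because $\Pt(Z\in A)=\E_P[w(X)\Ind\{Z\in A\}]=\E_P[v_\phi(Z)\Ind\{Z\in A\}]$. Hence $\E_P[v_\phi(Z)\mu_\phi(Z)]=\E_{\Pt}[\mu_\phi(Z)]$.

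\textbf{Step 2: second term of $B_\phi$.} By the change of measure in Definition~\ref{def:setting}, $\E_P[w(X)m(X,Y)]=\Pt(m=1)=\E_{\Pt}[\tilde\mu_\phi(Z)]$, where the last equality is the tower property under $\Pt$. Subtracting Steps~1 and~2 gives $B_\phi=\E_{\Pt}[\mu_\phi-\tilde\mu_\phi]$, which is the required identity. Adding $\E_P[(\tilde\omega-v_\phi)m]$ to $B_\phi$ recovers $\E_P[(\tilde\omega-w)m]$, and the statement follows.

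\textbf{Main obstacle: well-definedness of the versions.} The key points are that $v_\phi$, $\mu_\phi$ and $\tilde\mu_\phi$ are versions defined only almost surely under the respective laws of $Z$. $\mu_\phi$ is integrated against $\Pt$, so I must check that $\Pt_Z\ll P_Z$; this follows from $\Pt\ll P$. $\tilde\mu_\phi$ appears only under $\Pt$, so it causes no problem. Integrability holds because $m\in\{0,1\}$, $\tilde\omega$ is bounded ($\hat\omega\le\omega_{\max}$ and $\int\hat\omega\,\mathrm dP_X>0$), and $w$ is $P$-integrable with $\E_P[w]=1$, so $v_\phi(Z)\mu_\phi(Z)$ is integrable even without assuming $w$ bounded. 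The convergence in probability is inherited unchanged from Proposition~\ref{prop:west}, since the added terms are constants.
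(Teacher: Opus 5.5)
Your proof is correct and follows the paper's route. The paper also invokes Proposition~\ref{prop:west} for the limit $\E_P[(\tilde\omega-w)m]$, splits it at $v_\phi\circ\phi$, and identifies $B_\phi$ through the first identity of Proposition~\ref{prop:compare}, using the same tower-property and change-of-measure steps as your Steps 1 and 2 (with version-independence from $w>0$ and integrability from $\E_P[v_\phi(\phi(X))]=\E_P[w(X)]=1$). One small fix: $w$ is not assumed bounded in this proposition, so drop the parenthetical in Step 1 and rely, as your last paragraph already does, on the integrability of $v_\phi(Z)$ together with $0\le m\le 1$.
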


The assumption on $\tilde b_j$ is non-degeneracy with probability tending to one. The first term of the limit is the error in estimating the density ratio of $\phi(X)$, counted on the missed points. The second term is the bias of the representation. It is the negative of the average, over test inputs, of how much more often the set misses on test inputs than on calibration inputs with the same value of $\phi$. It is zero when $w$ depends on the input only through $\phi$, in particular when $\phi$ is the input itself. A coarser representation, a function of $\phi$, typically has a density ratio that is easier to estimate, but its bias can be larger or smaller than that of $\phi$ (Proposition~\ref{prop:compare}).

BCP uses a data point only through the model's scores and treats all labels alike. So, in weighted BCP, apart from the weights, its levels, miss events and reports depend on each data point only through its sorted scores and the rank of its label among them.

\begin{definition}[Sorted scores and rank]\label{def:GR}\StmtDefGr\end{definition}

$G(x)$ collects the scores of all $K$ candidate labels, so it is computed from the input alone, while the rank $R$ needs the label. The weights must be computed without the label, so $v$ conditions on $G$ alone; it is $v_\phi$ for $\phi=G$. We call weights fitted on $G(X)$ score weights. When we replace the full input $X$ with the score representation $G(X)$, a new bias is introduced, which is the negative of the reliability shift $\E_{\Pt}[\tilde\mu_G(G(X))-\mu_G(G(X))]$; such a shift measures how much more often the set misses on test inputs compared to calibration inputs with the same sorted scores. This change in the miss rate at fixed sorted scores is a concept shift in the score representation, which a covariate shift in the input can induce. The reliability shift is zero when the set misses equally often on test and calibration inputs with the same sorted scores. It is positive and biases the tight report downward when the set misses more often on test inputs, as on corrupted images on which the model stays confident. When the rank of the true label has the same conditional distribution given the sorted scores under $P$ and $\Pt$, BCP with weights $v\circ G$ retains the finite-sample guarantees of Sections~\ref{sec:setting} and~\ref{sec:tight}.

\begin{proposition}[Reduction]\label{prop:reduction}\StmtPropReduction\end{proposition}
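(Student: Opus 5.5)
For part (i), I will compute $\E_{\Pt}[\eta(G(X),R(X,Y))]$ by conditioning on $G(X)$. Under $\Pt$, the rank has conditional law $q_r(G(X))$, so
\[
\E_{\Pt}[\eta(G,R)]=\E_{\Pt}\Bigl[\sum_r q_r(G)\eta(G,r)\Bigr].
\]
The integrand depends on $X$ only through $G(X)$. Since $\Pt_X$ has density $w$ with respect to $P_X$,
\[
\E_{\Pt}\Bigl[\sum_r q_r(G)\eta(G,r)\Bigr]=\E_P\Bigl[w(X)\sum_r q_r(G)\eta(G,r)\Bigr].
\]
Conditioning on $G(X)$ under $P$ replaces $w(X)$ by $v(G(X))$. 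The $P$-conditional law of $R$ given $G$ is also $q_r(G)$, so the sum equals $\E_P[v(G)\eta(G,R)\mid G]$, which gives (i). Boundedness of $\eta$, nonnegativity of $w$ and $\int w\,\mathrm dP_X=1$ justify each step, and $\E_P[v(G)]=1$ follows by taking $\eta\equiv1$.

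For part (ii), I will reduce to the original setting on a new sample space. Let $Z=G(X)\in[0,\infty)^K$ and $L=R(X,Y)\in\{1,\dots,K\}$, and define the score $S'(g,r):=g_r$. Then $S(x,y)=S'(G(x),R(x,y))$. By (i), the law $Q'$ of $(Z,L)$ under $\Pt$ satisfies $Q'(\mathrm dg,\mathrm dr)=v(g)Q(\mathrm dg,\mathrm dr)$, where $Q$ is the law of $(Z,L)$ under $P$. This is exactly Definition~\ref{def:setting} on $\mathcal X'=[0,\infty)^K$ with weight $v>0$ and a fixed measurable score $S'$. The pairs $(Z_i,L_i)$, $i\le n$, are i.i.d.\ from $Q$ and independent of $(Z_{n+1},L_{n+1})\sim Q'$.

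Next I will check that the objects defined with $v\circ G$ coincide with the objects of the new setting. For the identity score: $y\mapsto R(X_{n+1},y)$ is a bijection, so the multiset $\{E^{(v\circ G)}(y):y\in\mathcal Y\}$ equals $\{E'(r):r\}$. Hence $F$, and therefore $\tilde\alpha$, is unchanged. The event $Y_{n+1}\notin\hat C$ matches $L_{n+1}\notin\hat C'$, and $E^{(v\circ G)}(Y_{n+1})=E'(L_{n+1})$. For the two-point score: $\tau(x)=G(x)_{T+1}$. Since $S(x,y)<\tau(x)$ if and only if $G(x)_{R(x,y)}<G(x)_{T+1}$, membership in $\topT$ depends only on $(G,R)$. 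So $h(x,y)=h'(G(x),R(x,y))$, where $h'$ is the two-point transform built from $S'$, which is again the construction of Definition~\ref{def:transform} in the new setting. The main point needing care is this bookkeeping with ties: ties at $G_{T+1}$ give $S=\tau$, so the label is excluded in both versions, and the tie-breaking by label index only fixes $R$ and does not affect $h$.

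With this identification, the five claims follow by applying Theorems~\ref{thm:validity}, \ref{thm:posthoc}, \ref{thm:transform} and~\ref{thm:corrected} and Proposition~\ref{prop:sqrt} in the new setting. The only remaining concern is that measurability of $G$, $R$ and $v$ is needed for the new setting to be well defined. I expect no obstacle there beyond noting that sorting is continuous and $R$ is piecewise defined on measurable sets.
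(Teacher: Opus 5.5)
Your proposal is correct and follows the paper's proof essentially step for step. Part (i) is the same chain: the $\Pt$-assumption, the change of measure, replacing $w$ by $v(G)$ by conditioning on $G$, and then the $P$-assumption. Part (ii) is the same reduction to covariate $G(X)$, label $R(X,Y)$ and score $s(z,r)=z_r$, using that $y\mapsto R(X_{n+1},y)$ is a bijection, so the e-value vectors are permuted while $F$ and the miss events are unchanged. Your remarks on ties at $G(x)_{T+1}$ and on the measurability of $G$ and $R$ spell out details the paper leaves implicit, but they do not change the argument.
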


The assumption holds when the density ratio depends on the input only through the sorted scores, $w=v\circ G$, and also when the conditional distribution of the rank, given the input, depends on the input only through the sorted scores (Proposition~\ref{prop:sufficient}).

The choice of representation trades the two terms of the limit. The input is unbiased, but its density ratio is hard to estimate in high dimensions. The model's penultimate embedding and its sorted scores both reduce the dimension. The sorted scores are a function of the embedding, since the logits are an affine map of it, yet either can have the smaller bias. What favors the sorted scores is estimation. They have $K$ scores per input, one per class, which is generally lower than the embedding dimension.

\textbf{Procedure.} We fit the domain classifier on the sorted scores of held-out inputs from the calibration distribution and unlabeled test inputs, evaluate the weights at the calibration points, and compute both reports. We use the tight report as the estimate of the miscoverage of the size-$T$ set and the identity report as the conservative number.

\section{Experiments}\label{sec:experiments}

We validate our theory on Dry Bean \citep{koklu2020multiclass}, where we can construct the shift, and compare estimated weights with true ones. Additionally, we use CIFAR-10-C \citep{hendrycks2019benchmarking}, where the benchmark shift comes from image corruptions with unknown true density ratios; we compare score weights with embedding weights. The nonconformity score we used is $S(x,y)=-\log\hat p(y\mid x)$ for a fixed classifier $\hat p$, with probabilities clipped at $10^{-12}$. A \emph{setting} is one shift, one model and one value of $T$. The gap of a report is the report minus the truth, the exact miscoverage of the size-$T$ set under the test distribution; and a report is \emph{valid} in a setting when its mean gap over the random draws is non-negative. On CIFAR-10-C, we also report the mean absolute gap in percentage points, the unit used for the errors of ST-BCP and ATC. Appendix~\ref{app:details} gives the remaining experimental details. 

For our simulated experimental setup on Dry Bean, calibration inputs are drawn uniformly and test inputs with probability proportional to $\exp(\beta t(x))$ for a tilt $t$ and a severity $\beta\in\{1,2\}$, so the true weight is $w(x)\propto\exp(\beta t(x))$. The tilts are the standardized entropy of the classifier's probabilities (\emph{hardness}), its negative (\emph{easy}), and the standardized feature ShapeFactor1, which is least correlated with the true-label score (\emph{benign}). We use $T\in\{1,2\}$. 

In our benchmark distribution shift setup on CIFAR-10 data \citep{krizhevsky2009learning}, we obtain calibration images from clean data and test images from CIFAR-10-C under Gaussian noise, defocus blur, fog, contrast, and JPEG compression at severities $1$, $3$, and $5$. We use the EfficientNet-B0 \citep{tan2019efficientnet} of \citet{gauthier2025backward} and a model trained for corruption robustness, the AugMix WideResNet-40-2 \citep{zagoruyko2016wide,hendrycks2020augmix} of RobustBench \citep{croce2021robustbench}, with $T\in\{1,2,3\}$ as in BCP. We call a model under one corruption at one severity a \emph{cell}. Each cell has $60$ draws of $n=1{,}000$ calibration images and of $2{,}000$ unlabeled corrupted versions of other images, and its truth is computed over all $10{,}000$ corrupted images. We also report the cells in which a classifier separates held-out clean and corrupted embeddings with AUC at least $0.95$, and call them \emph{walls}. A wall marks nearly separable embeddings, not necessarily a cell in which the model errs often. Additionally, as a natural shift, we take CIFAR-10.1 \citep{recht2018cifar}, a later test set collected to match CIFAR-10, with the same calibration pool and models as in the CIFAR-10-C experiments. Its results show the same pattern and are given in Appendix~\ref{app:extra}.

We use a gradient boosting domain classifier, fitted on the bean features (\emph{feature weights}), the model's embedding (\emph{embedding weights}), or the sorted scores (score weights). On Dry Bean, we compare the true, feature, and score weights; on CIFAR-10-C, we compare no weights ($w\equiv1$), embedding weights, and score weights. Without weights, the identity report is the estimator of BCP and the tight report is the corrected estimator of ST-BCP, so the results without weights are those of the two original methods. Weighted split conformal prediction (WSCP; \citealp{tibshirani2019conformal}) fixes the level and lets the set size vary, and we use the smallest level at which its mean set size on the unlabeled test sample is at most $T$. We also compare the tight report with ATC, using the negative entropy as the confidence (ATC-NE); at $T=1$ this is the method of \citet{garg2022leveraging} without temperature scaling.

\begin{table*}[t]
\centering\small
\caption{Dry Bean, $T=1$: exact miscoverage of the size-$1$ set under the tilted pool and gap (report $-$ miscoverage) of the
identity and tight reports, by the weights used, and of ATC-NE. ``Features'': gradient boosting on the $16$ features; ``+ Platt'': the
same classifier calibrated by Platt scaling on grouped out-of-fold predictions; ``sorted scores'': gradient boosting on
$G(x)$. Means over $80$ draws; the largest standard error in the table is \NdbTOnemaxse. $T=2$ in
Appendix~\ref{app:extra}.}\label{tab:drybean}
\resizebox{\linewidth}{!}{\begin{tabular}{lcccccc}
\toprule
weights & hardness $\beta=1$ & hardness $\beta=2$ & easy $\beta=1$ & easy $\beta=2$ & benign $\beta=1$ & benign $\beta=2$ \\
\midrule
true miscoverage & 0.274 & 0.406 & 0.023 & 0.010 & 0.072 & 0.036 \\
\midrule
\multicolumn{7}{l}{\emph{identity report}} \\
true weights & +0.169 & +0.343 & +0.014 & +0.014 & +0.012 & +0.023 \\
features & +0.013 & +0.106 & +0.028 & +0.027 & +0.009 & +0.025 \\
features + Platt & $-$0.064 & +0.046 & +0.059 & +0.042 & +0.025 & +0.034 \\
sorted scores & +0.267 & +0.340 & +0.014 & +0.017 & +0.016 & +0.021 \\
\multicolumn{7}{l}{\emph{tight report}} \\
true weights & $-$0.006 & $-$0.020 & 0.000 & 0.000 & $-$0.001 & 0.000 \\
features & $-$0.081 & $-$0.101 & +0.010 & +0.009 & $-$0.011 & $-$0.007 \\
features + Platt & $-$0.121 & $-$0.129 & +0.041 & +0.027 & +0.009 & +0.011 \\
sorted scores & +0.042 & $-$0.006 & $-$0.003 & $-$0.001 & +0.001 & 0.000 \\
\midrule
ATC-NE & +0.211 & +0.461 & $-$0.015 & $-$0.009 & $-$0.018 & $-$0.015 \\
\bottomrule
\end{tabular}
}
\end{table*}

\textbf{Tight report is accurate under true weights.}

With the true weights, the tight report is within \NdbTOneoraclecorrmaxabs{} of the miscoverage in every Dry Bean
setting at $T=1$, and the identity report stays above it (Table~\ref{tab:drybean}).

\begin{figure*}[t]
\centering
\includegraphics[width=0.9\linewidth]{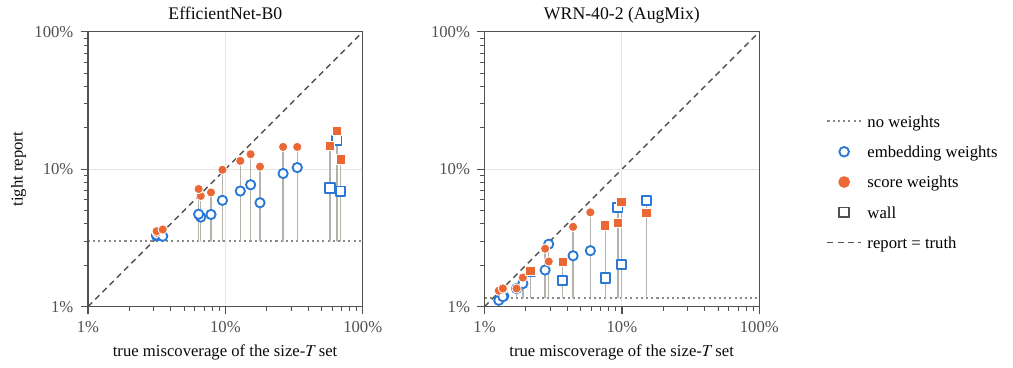}
\caption{CIFAR-10-C at $T=2$: the tight report against the true miscoverage of the size-$T$ set, one vertical segment per
setting, both axes logarithmic. Hollow blue markers: mean over the $60$ draws with embedding weights; filled red markers: with score
weights. Dotted line: no weights, the miss rate on the clean calibration images, the same in every setting. Squares:
walls, cells with nearly separable clean and corrupted embeddings (AUC at least $0.95$). Points below the dashed diagonal understate the miscoverage. $T=1$ and $T=3$ are in
Appendix~\ref{app:extra}.}\label{fig:cifar}
\end{figure*}

\textbf{What matters is the weight share of the missed points.} Estimated weights matter under the hardness shift, which moves the test distribution toward the beans the model finds hard; under the easy and benign shifts, both reports stay within \Ndbeasybenignmaxabs{} of the miscoverage with the true, feature, or score weights. Under the hardness shift at $T=1$, feature weights underweight the missed beans, and the tight report has gaps of \NdbhardnessOneTOneGBcorrected{} and \NdbhardnessTwoTOneGBcorrected{} at $\beta=1,2$. Calibrating the domain classifier lowers the MSE of the normalized weights for every method and setting, yet under the hardness shift the four calibration methods and logistic regression all worsen the tight report further (Appendix~\ref{app:calibration}). The MSE weighs all beans, while under non-degeneracy the tight report depends on the weights only through the share of the total weight on the missed beans. Score weights bring the gaps to \NdbhardnessOneTOneSCOREScorrected{} and \NdbhardnessTwoTOneSCOREScorrected{}. The hardness and easy tilts are functions of the sorted scores $G$, so the assumption of Proposition~\ref{prop:reduction} holds and these two shifts validate the theory rather than test the method. The benign tilt is not a function of $G$, and score weights correlate less with the true weights than feature weights do (Appendix~\ref{app:calibration}), yet the tight report with score weights stays within \Ndbbenignscorecorrmaxabs{} of the miscoverage, which this shift moves from \NdbTOneunweightedtruth{} to \NdbbenignTwoTOnetruth{} at $T=1$, $\beta=2$.

\begin{table*}[t]
\centering\small
\caption{CIFAR-10-C by model, $T$ and overlap (walls: held-out embedding AUC at least $0.95$); each row averages over the
settings of its group. Tight report with no, embedding and score weights, and
ATC-NE: mean absolute gap $|\text{report}-\text{truth}|$ in percentage points over the settings and their $60$ draws;
without weights the tight report is the corrected estimator of ST-BCP. Identity report: settings in which it is valid,
with score weights. Appendix~\ref{app:extra} gives the signed gaps by group, and per-setting gaps of the weighted reports
and ATC-NE with standard errors.}\label{tab:main}
\begin{tabular}{lllcccccc}
\toprule
 & & & true & \multicolumn{3}{c}{tight report} & & identity report \\
\cmidrule(lr){5-7}
model & $T$ & & miscoverage & none & emb. & score & ATC-NE & valid \\
\midrule
EfficientNet-B0 & 1 & no wall & 23.6\% & 14.69 & 8.15 & 4.74 & 3.12 & 9/12 \\
 &  & walls & 77.0\% & 68.13 & 52.34 & 42.98 & 29.59 & 0/3 \\
\addlinespace
 & 2 & no wall & 12.2\% & 9.20 & 6.65 & 4.78 & 3.49 & 10/12 \\
 &  & walls & 63.8\% & 60.80 & 53.65 & 48.67 & 41.87 & 0/3 \\
\addlinespace
 & 3 & no wall & 7.0\% & 5.67 & 4.28 & 3.48 & 2.57 & 11/12 \\
 &  & walls & 50.4\% & 49.08 & 43.90 & 42.39 & 38.66 & 0/3 \\
\midrule
WRN-40-2 (AugMix) & 1 & no wall & 7.9\% & 3.13 & 1.95 & 1.61 & 0.83 & 0/9 \\
 &  & walls & 17.4\% & 12.53 & 9.08 & 6.06 & 3.98 & 0/6 \\
\addlinespace
 & 2 & no wall & 2.6\% & 1.51 & 1.24 & 0.93 & 0.69 & 9/9 \\
 &  & walls & 8.0\% & 6.80 & 5.96 & 4.36 & 3.44 & 4/6 \\
\addlinespace
 & 3 & no wall & 1.2\% & 0.74 & 0.76 & 0.63 & 0.52 & 9/9 \\
 &  & walls & 4.4\% & 3.90 & 3.85 & 2.84 & 2.40 & 5/6 \\
\bottomrule
\end{tabular}

\end{table*}

\textbf{Score weights better match CIFAR-10-C truth.} Score weights bring the tight report closer to the miscoverage than embedding weights or no weights across all \Ncifmaengroups{} groups of the model, $T$, and overlap (Table~\ref{tab:main}). Without weights, the tight report is the miss rate on the clean calibration images, the same in all $15$ cells of a model at each $T$ (the dotted lines in Figure~\ref{fig:cifar}), so the estimators of BCP and ST-BCP do not follow the shift. Setting by setting, the mean tight report with score weights is closer to the miscoverage than with embedding weights in \NciftightcloserEMB{} of the \Ncifnsettings{} settings, and the paired bootstrap $95\%$ interval of the difference over the $60$ draws excludes zero in \NciftightbootSCORE{} settings in favor of score weights and in \NciftightbootEMB{} in favor of embedding weights. Score weights are also closer in all \Ncifmaengroups{} groups with three other domain classifiers and with looser clips of the weights (Appendix~\ref{app:extra}). Clean and corrupted images overlap more on the sorted scores; a classifier trained as in the wall criterion separates them with a lower held-out AUC on the sorted scores in all \Ncifaucncells{} cells.

\textbf{Residual error tracks a binned reliability shift.} The tight report with score weights still understates the miscoverage in \Nciftightunder{} of the \Ncifnsettings{} settings. With score weights, Proposition~\ref{prop:decomp} gives its asymptotic bias as the error of the weights on the missed points minus the reliability shift. A one-dimensional estimate of this shift, computed with the test labels, tracks the error. We bin the clean images by their top-$T$ softmax mass and predict the corrupted miss rate from the clean miss rate in each bin. With both errors divided by miscoverage, the prediction error and the error of the tight report have correlation \Nreloldrelcorr{} (EfficientNet-B0) and \Nrelnewrelcorr{} (WideResNet) across each model's settings (Appendix~\ref{app:extra}). The WideResNet has the smaller reliability shift. Summed over the settings without walls, its absolute binned shift is \Nrelnewnonwallrelsize{} of its miscoverage,  compared with \Nreloldnonwallrelsize{} for EfficientNet-B0, and its tight report with score weights is less biased, with gaps summing to \NcifnewnonwallrelbiasSCORE\% of the miscoverage, compared with \NcifoldnonwallrelbiasSCORE\%.

\textbf{The identity report is conservative except at $T=1$ on CIFAR-10-C and near the walls.} With score weights, the identity report is valid in all $12$ Dry Bean settings and in \NcifidvalidTtwothreenonwall{} of the \NcifidncellsTtwothreenonwall{} CIFAR-10-C settings without a wall at $T=2,3$, but in only \NciftotTOnenonwallSCOREidentityvalid{} of the \NciftotTOnenonwallncells{} at $T=1$. Two of the three failures at $T=2,3$ are defocus blur at severity $5$ for EfficientNet-B0, whose embedding AUC (\Nciffailauc) is just below the wall threshold. At the walls, it is valid in \NcifidvalidTtwothreewall{} of the \NcifidncellsTtwothreewall{} settings at $T=2,3$ and in none at $T=1$. The failure at $T=1$ occurs even without shift. With clean calibration and test images and no weights, the identity report lies \NnoshnewToneidbelow{} points below the top-1 miscoverage of the WideResNet, whose scores are never clipped, while the tight report is within \Nnoshtightmaxabs{} points (Appendix~\ref{app:tone}). At $T=1$, a covered true label is the model's top prediction, whose score is near zero, so covered points provide little slack; for EfficientNet-B0, the sign of the gap depends on our score clip.

\textbf{Comparison with ATC and WSCP.}  ATC-NE estimates the miscoverage more accurately than the tight report with score weights in all \Ncifmaeatcbeatsscore{} groups (Table~\ref{tab:main}), but it has no guarantee under covariate shift in general. It counts every test point below a confidence threshold as a miss, which is exact only if the threshold transfers to the test distribution \citep{garg2022leveraging}. On Dry Bean, where the covariate shift is exact, it overstates the miscoverage under the hardness shift, with gaps of \NdbhardnessOneTOneATC{} and \NdbhardnessTwoTOneATC{} at $T=1$, and is further from the true miscoverage than the tight report with score weights in all \Ndbatcfarther{} settings (Table~\ref{tab:drybean}, Appendix~\ref{app:extra}). Summed over the walls of each model and $T$, the tight report with score or embedding weights and ATC-NE all understate the summed miscoverage by \Ncifwallrelmin\% to \Ncifwallrelmax\%. WSCP keeps its miscoverage level with the true weights on Dry Bean (largest excess \Ndbwscporaclemaxexcess) but not with estimated weights. Under the hardness shift at $T=1$, its miscoverage exceeds $\alpha$ by up to \NdbwscpGBhardTwoexcess{} with feature weights, and on CIFAR-10-C with score weights it keeps its level in only \NcifoldTTwononwallWSCPvalid{} of the \NcifoldTTwononwallncells{} EfficientNet-B0 and \NcifnewTTwononwallWSCPvalid{} of the \NcifnewTTwononwallncells{} WideResNet settings without a wall at $T=2$. Fixing its level in advance does not remove this excess (Appendix~\ref{app:extra}). At $T=2$, \NwscpTTwooverlo--\NwscpTTwooverhi\% of the test images receive more than $T$ labels, and it misses less often than the size-$T$ set (median ratio \NwscpoldTTworatiomedian{} and \NwscpnewTTworatiomedian{} for the two models). With the true weights on Dry Bean at $\beta=2$, however, it misses more often than the size-$T$ set in all $6$ settings.

\section{Conclusion \& Discussion}\label{sec:discussion}

\textbf{Conclusion.} We extended BCP to covariate shift. Weighted BCP fixes the set size and reports its miscoverage, with BCP guarantees under the true density ratio. Under non-degeneracy, its tight report depends on estimated weights only through their share on the missed calibration points, so a domain classifier with lower weight MSE can give a worse report. On CIFAR-10-C, score weights brought the tight report closer to the miscoverage than embedding weights in every group of the model, $T$ and overlap. ATC was more accurate there as a point estimate, but it has no guarantee under covariate shift in general.

\textbf{Limitations.} The finite-sample guarantees need the true density ratio, or the density ratio of the sorted scores under the condition of Proposition~\ref{prop:reduction}. With estimated weights, we describe the report error but do not bound it, and our consistency results are asymptotic. The tight report equals the weighted miss rate only under non-degeneracy, which fails when the miscoverage is high. Even with calibration data from the test distribution itself, the tight report still understates the miscoverage by up to \Nndlimitmaxshort{} points for EfficientNet-B0 at $T=1$ (Appendix~\ref{app:extra}). Neither the condition of Proposition~\ref{prop:reduction} nor the reliability shift can be checked from the sorted scores and the calibration labels alone. The margin of the identity report above the miscoverage is an empirical finding, and at $T=1$ on CIFAR-10-C it fails in most settings. At the walls, where embeddings of calibration and test data barely overlap, the tight report with score or embedding weights, together with ATC-NE, understates the summed miscoverage by \Ncifwallrelmin\% to \Ncifwallrelmax\%, and our criterion for walls is a heuristic. On CIFAR-10-C and CIFAR-10.1, the density ratio assumed by the theory is at best an approximation. We clip probabilities at $10^{-12}$ to bound the scores, as our consistency results assume; for EfficientNet-B0, this lowers the identity report at $T=1$ (Appendix~\ref{app:tone}).

\textbf{Open questions.} The reliability shift is the bias that score weights leave, and estimating it needs either the input-space ratio or labels from the test distribution. Our binned estimate used the test labels and tracked the remaining error closely; whether a small labeled test sample suffices to correct the report is open. The theory does not decide between the sorted scores and an embedding, and a representation chosen to balance the two terms of Proposition~\ref{prop:decomp} might do better than either. A check of when estimated weights can be trusted is also missing. This work does not cover adaptive set sizes, which BCP explores.

\bibliographystyle{plainnat}
\bibliography{refs}

\clearpage
\onecolumn
\appendix
\begin{center}{\LARGE\bfseries Appendix}\end{center}
\counterwithin{theorem}{section}\counterwithin{proposition}{section}\counterwithin{lemma}{section}\counterwithin{remark}{section}
\section{Proofs and additional results}\label{app:theory}

\subsection{Setting}
\paragraph{Definition~\ref{def:setting} (Setting), restated.} \StmtDefSetting

\noindent Since $\Pt$ is a probability measure, $\E_P[w(X)]=1$.
\paragraph{Definition~\ref{def:evalue} (Weighted e-value, level and set), restated.} \StmtDefEvalue

\begin{lemma}[Closed form of $F$]\label{lem:F}\StmtLemF\end{lemma}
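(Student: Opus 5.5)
We prove the closed form first by analysing the counting function $N(\alpha):=\#\{y:e_y<1/\alpha\}$ for $\alpha\in(0,1)$. As $\alpha$ increases, the threshold $1/\alpha$ decreases, so $N$ is non-increasing in $\alpha$. Hence the set $A:=\{\alpha\in(0,1):N(\alpha)\le T\}$ is an up-set in $(0,1)$. Write the sorted entries $e_{(1)}\le\dots\le e_{(K)}$. We have $N(\alpha)\le T$ if and only if fewer than $T+1$ entries lie strictly below $1/\alpha$, which holds if and only if $e_{(T+1)}\ge 1/\alpha$, that is, $\alpha\ge 1/e_{(T+1)}$, with the convention $1/0=\infty$. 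This uses $T+1\le K$, which holds since $T\le K-1$.

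So $A=[1/e_{(T+1)},1)\cap(0,1)$, and we split into two cases. If $1/e_{(T+1)}<1$, the infimum of $A$ is $1/e_{(T+1)}$. If $1/e_{(T+1)}\ge1$, including $e_{(T+1)}=0$, then $A=\emptyset$ and $F(e)=1$ by convention. In both cases $F(e)=\min\{1,1/e_{(T+1)}\}$. The case $1/e_{(T+1)}\le 0$ cannot occur because the entries are nonnegative.

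For the Lipschitz bound, let $\delta:=\max_y|e_y-e'_y|$. The order-statistic map is $1$-Lipschitz in the sup norm, so $|e_{(T+1)}-e'_{(T+1)}|\le\delta$. To see this, note $e'_y\le e_y+\delta$ for all $y$, which gives $e'_{(k)}\le e_{(k)}+\delta$, and symmetrically. Both $F(e)$ and $F(e')$ lie in $(0,1]$, so $|F(e)-F(e')|\le1$ trivially. It remains to show $|F(e)-F(e')|\le\delta$.

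Write $g(u):=\min\{1,1/u\}$ on $[0,\infty]$. We check that $g$ is $1$-Lipschitz on $[0,\infty)$. It is constant equal to $1$ on $[0,1]$. On $[1,\infty)$ its derivative is $-1/u^2$, which has absolute value at most $1$. Since $g$ is continuous at $1$, it is $1$-Lipschitz on the whole half-line. Therefore $|F(e)-F(e')|=|g(e_{(T+1)})-g(e'_{(T+1)})|\le|e_{(T+1)}-e'_{(T+1)}|\le\delta$.

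The only delicate points are the boundary conventions: the value $e_{(T+1)}=0$, where $1/0=\infty$ and the minimum gives $1$, and the strict versus non-strict inequality at the threshold. The strict inequality $e_y<1/\alpha$ is what makes the infimum attained at exactly $1/e_{(T+1)}$, rather than shifted by ties. Since all entries are finite, no infinite entries arise.
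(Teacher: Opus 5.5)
Your proof is correct and follows the paper's argument: you identify $\{\alpha\in(0,1):\#\{y:e_y<1/\alpha\}\le T\}$ with $\{\alpha\ge 1/e_{(T+1)}\}$, split on whether $e_{(T+1)}>1$, and compose the $1$-Lipschitz order statistic with the $1$-Lipschitz map $u\mapsto\min\{1,1/u\}$, using $F\in(0,1]$ for the bound by $1$. One small correction to your closing aside: with a non-strict inequality $e_y\le 1/\alpha$ the infimum would still be $1/e_{(T+1)}$, only no longer attained. So the strictness matters for the reported set $\hat C$ having at most $T$ labels, not for the value of $F$.
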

\begin{proof}For $t>0$, $\#\{y:e_y<t\}\le T$ if and only if $e_{(T+1)}\ge t$. So the set in the definition of $F$ is
$\{\alpha\in(0,1):\alpha\ge1/e_{(T+1)}\}$, whose infimum is $1/e_{(T+1)}$ if $e_{(T+1)}>1$ and which is empty
otherwise. The map $e\mapsto e_{(T+1)}$ is $1$-Lipschitz in $\max_y|\cdot|$, $x\mapsto\min\{1,1/x\}$ is $1$-Lipschitz on
$[0,\infty)$, and $F$ takes values in $(0,1]$.\end{proof}

\subsection{Validity with the true weights}
\paragraph{Theorem~\ref{thm:validity} (Weighted e-value validity), restated.} \StmtThmValidity
\begin{proof}Let $\Sigma:=\sum_{j=1}^{n+1}w_jS_j$. At the true label $E^{(w)}(Y_{n+1})=W\,S_{n+1}/\Sigma$ on $\{\Sigma>0\}$, and on
$\{\Sigma=0\}$ it is $0$ ($S_{n+1}=0$ there). Only the test point is drawn from $\Pt$; replacing its law by $P$ and
inserting $w_{n+1}$ gives, with $(X_j,Y_j)_{j\le n+1}$ i.i.d.\ from $P$,
$\E_{\Pt}[E^{(w)}(Y_{n+1})]=\E_P[W\,w_{n+1}S_{n+1}\Ind\{\Sigma>0\}/\Sigma]$. The integrand is at most $W$ and
$\E_P[W]=n+1$. The joint law is exchangeable and $W$, $\Sigma$ are symmetric, so $n+1$ can be replaced by any index
$k$; averaging over $k$, the expectation equals $\frac1{n+1}\E_P[W\Ind\{\Sigma>0\}]=1-\frac1{n+1}\E_P[W\Ind\{\Sigma=0\}]$.
Since $w>0$, $\{\Sigma=0\}=\{S_j=0\ \forall j\le n+1\}$, so by independence
$\frac1{n+1}\E_P[W\Ind\{\Sigma=0\}]=\E_P[w\Ind\{S=0\}]\,P(S=0)^n=\Pt(S=0)\,P(S=0)^n$. Hence
$\E_{\Pt}[E^{(w)}(Y_{n+1})]=1-\Pt(S=0)\,P(S=0)^n\le1$, with equality when $\Pt(S=0)=0$.\end{proof}

\paragraph{Theorem~\ref{thm:posthoc} (Post-hoc validity), restated.} \StmtThmPosthoc
\begin{proof}For $e\ge0$ and $a>0$, $\Ind\{e\ge 1/a\}/a\le e$. Take $e=E^{(w)}(Y_{n+1})$, $a=\hat\alpha$, and use
Theorem~\ref{thm:validity}. The second claim is the first with $\hat\alpha=\tilde\alpha\in(0,1]$ (Lemma~\ref{lem:F}),
because $Y_{n+1}\notin\hat C$ iff $E^{(w)}(Y_{n+1})\ge1/\tilde\alpha$.\end{proof}

\begin{theorem}[Miscoverage bound]\label{thm:coverage}\StmtThmCoverage\end{theorem}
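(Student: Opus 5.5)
The bound will follow from the covariance expansion applied to the product $\tilde\alpha\cdot(I/\tilde\alpha)$, combined with the post-hoc validity of Theorem~\ref{thm:posthoc}. Lemma~\ref{lem:F} gives $\tilde\alpha\in(0,1]$, so the random variable $I/\tilde\alpha$ is well defined. It is bounded by $1/\tilde\alpha$, and by the second inequality of Theorem~\ref{thm:posthoc} it satisfies $\E_{\Pt}[I/\tilde\alpha]\le1$; in particular it is integrable. Since $\tilde\alpha$ is bounded, the product $\tilde\alpha\cdot I/\tilde\alpha=I$ is integrable too. Hence the covariance in the assumption is well defined, and we may use the identity
\[
\E_{\Pt}[I]=\E_{\Pt}\bigl[\tilde\alpha\cdot(I/\tilde\alpha)\bigr]=\mathrm{Cov}_{\Pt}(\tilde\alpha,I/\tilde\alpha)+\E_{\Pt}[\tilde\alpha]\,\E_{\Pt}[I/\tilde\alpha].
\]

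The steps, in order, are as follows.

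\emph{Step 1.} Note that $\Pt(Y_{n+1}\notin\hat C)=\E_{\Pt}[I]$, and that $\tilde\alpha\cdot(I/\tilde\alpha)=I$ pointwise, since $\tilde\alpha>0$.

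\emph{Step 2.} Apply the covariance decomposition displayed above.

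\emph{Step 3.} Bound the covariance term by $0$, which is exactly the assumption of the theorem.

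\emph{Step 4.} Bound the product term: $\E_{\Pt}[\tilde\alpha]\ge0$ and $\E_{\Pt}[I/\tilde\alpha]\le1$ by Theorem~\ref{thm:posthoc}. This yields $\E_{\Pt}[I]\le\E_{\Pt}[\tilde\alpha]$.

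The only point requiring care is integrability, which justifies writing the covariance as $\E[XY]-\E[X]\E[Y]$. This is settled by Theorem~\ref{thm:posthoc} together with the boundedness of $\tilde\alpha$ and of $I$. I expect no real obstacle beyond that: the substance is carried by the post-hoc validity result, and the covariance condition is precisely what is needed to decouple $\tilde\alpha$ from $I/\tilde\alpha$.
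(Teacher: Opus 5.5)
Your proposal is correct and follows the paper's proof exactly: write $\E_{\Pt}[I]=\E_{\Pt}[\tilde\alpha]\,\E_{\Pt}[I/\tilde\alpha]+\mathrm{Cov}_{\Pt}(\tilde\alpha,I/\tilde\alpha)$, then bound the first term using Theorem~\ref{thm:posthoc} and the second using the assumption. Your integrability check, which the paper leaves implicit, is a sound addition.
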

\begin{proof}$\E_{\Pt}[I]=\E_{\Pt}[\tilde\alpha]\,\E_{\Pt}[I/\tilde\alpha]+\mathrm{Cov}_{\Pt}(\tilde\alpha,I/\tilde\alpha)
\le\E_{\Pt}[\tilde\alpha]$ by Theorem~\ref{thm:posthoc} and the assumption.\end{proof}

\begin{remark}\label{rem:cov}\StmtRemCov\end{remark}

\begin{proposition}[Unconditional bound]\label{prop:sqrt}\StmtPropSqrt\end{proposition}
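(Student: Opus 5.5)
We use Cauchy--Schwarz together with the post-hoc guarantee of Theorem~\ref{thm:posthoc}, and we need no covariance condition. Let $I:=\Ind\{Y_{n+1}\notin\hat C\}$. By Lemma~\ref{lem:F}, $\tilde\alpha\in(0,1]$, so $\tilde\alpha^{1/2}$ and $\tilde\alpha^{-1/2}$ are well defined and finite. We factor the miss indicator as
\[
I=\bigl(\tilde\alpha^{1/2}\bigr)\bigl(I\,\tilde\alpha^{-1/2}\bigr).
\]

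Applying Cauchy--Schwarz under $\Pt$ gives
\[
\Pt(Y_{n+1}\notin\hat C)=\E_{\Pt}[I]\le\bigl(\E_{\Pt}[\tilde\alpha]\bigr)^{1/2}\bigl(\E_{\Pt}[I^2/\tilde\alpha]\bigr)^{1/2}.
\]
Since $I$ is an indicator, $I^2=I$. The second factor is therefore $\bigl(\E_{\Pt}[I/\tilde\alpha]\bigr)^{1/2}$, and this is at most $1$ by the second inequality of Theorem~\ref{thm:posthoc}. Combining the two displays yields the claim.

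Integrability is the one point that needs care, since $I/\tilde\alpha$ may be large when $\tilde\alpha$ is small. Both integrands are nonnegative, so Cauchy--Schwarz holds in $[0,\infty]$. Theorem~\ref{thm:posthoc} then shows that $\E_{\Pt}[I/\tilde\alpha]$ is finite, and $\E_{\Pt}[\tilde\alpha]\le 1$ because $\tilde\alpha\le1$. No other step poses a difficulty.
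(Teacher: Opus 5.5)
Your proof is correct and is essentially the paper's argument: you apply Cauchy--Schwarz to the factorization $I=\tilde\alpha^{1/2}\cdot I\tilde\alpha^{-1/2}$, use $I=I^2$, and bound $\E_{\Pt}[I/\tilde\alpha]\le1$ by Theorem~\ref{thm:posthoc}. Your remark that Cauchy--Schwarz holds in $[0,\infty]$ for nonnegative integrands is a small extra that the paper leaves implicit.
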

\begin{proof}With $I$ as above, $I=I^2$ and Cauchy--Schwarz give
$\E_{\Pt}[I]\le(\E_{\Pt}[I/\tilde\alpha])^{1/2}(\E_{\Pt}[\tilde\alpha])^{1/2}$, and $\E_{\Pt}[I/\tilde\alpha]\le1$
by Theorem~\ref{thm:posthoc}.\end{proof}

\subsection{The two-point score and the corrected level}
\paragraph{Definition~\ref{def:transform} (Weighted e-value using the two-point score), restated.} \StmtDefTransform

\noindent The score $h$ is the transformation $I_w$ of \citep[eq.~(21)]{liu2026improving} with threshold $\tau$.
\begin{theorem}[Validity of the transformed e-value]\label{thm:transform}\StmtThmTransform\end{theorem}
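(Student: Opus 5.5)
The plan is to reduce Theorem~\ref{thm:transform} directly to Theorem~\ref{thm:validity}. The key observation is that the transformed e-value $E^{(w,h)}$ is exactly the weighted e-value of Definition~\ref{def:evalue} with the score $S$ replaced by $h$. So it suffices to check that $h$ satisfies the hypotheses imposed on $S$ in Definition~\ref{def:setting}: it must be a measurable map $\mathcal X\times\mathcal Y\to[0,\infty)$ that is fixed independently of the data.

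First, I verify that $h$ is nonnegative and measurable. $\tau(x)$ is the $(T+1)$-th order statistic of the finitely many measurable functions $x\mapsto S(x,y)$, $y\in\mathcal Y$, so it is measurable and nonnegative. For each fixed $y$, the set $\{x:y\in\topT(x)\}=\{x:S(x,y)<\tau(x)\}$ is measurable. Hence $m(x,y)$ is measurable on $\mathcal X\times\mathcal Y$, because $\mathcal Y$ is finite with the discrete $\sigma$-algebra. It follows that $h=\tau\, m\ge0$ is measurable.

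Second, I check that $h$ is fixed independently of the data. It is built from $S$ alone, by a deterministic construction that uses no calibration or test point. So $h$ inherits the independence of $S$ from the data.

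With these two checks done, the setting of Definition~\ref{def:setting} holds with $h$ in place of $S$, and $h_j=h(X_j,Y_j)$ plays the role of $S_j$. Comparing definitions, $E^{(w,h)}(y)$ coincides with $E^{(w)}(y)$ computed for the score $h$, including the convention $0/0:=0$. Theorem~\ref{thm:validity} applied to the score $h$ then gives $\E_{\Pt}[E^{(w,h)}(Y_{n+1})]\le1$. Its proof also yields the sharper identity $1-\Pt(h=0)P(h=0)^n$, though we do not need it.

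The only step requiring any care is measurability. The hypotheses do not ask for distinct scores, so $\topT$ may contain fewer than $T$ labels under ties, and $\tau$ may vanish. I would stress that neither issue matters: the proof of Theorem~\ref{thm:validity} requires only $S\ge0$ and measurable, and it handles the event where all scores vanish explicitly. No property of $h$ beyond nonnegativity and measurability is used.
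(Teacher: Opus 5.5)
Your proposal is correct and matches the paper's proof, which is the one-line reduction ``Theorem~\ref{thm:validity} applied to the score $h\ge0$.'' Your checks that $h$ is nonnegative, measurable and fixed independently of the data spell out what the paper leaves implicit.
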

\begin{proof}Theorem~\ref{thm:validity} applied to the score $h\ge0$.\end{proof}

\begin{lemma}[Markov bound]\label{lem:markov}\StmtLemMarkov\end{lemma}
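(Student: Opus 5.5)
The plan is to reduce the probability to an expectation of an indicator and bound that indicator pointwise, exactly as in the proof of Theorem~\ref{thm:posthoc}. The only difference is that we keep the factor $\hat\alpha$ inside the expectation instead of dividing by it. The claim is elementary, and no distributional assumption is needed beyond $e\ge0$ and $\hat\alpha\in(0,1]$ almost surely.

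Fix an outcome $\omega$ and write $e=e(\omega)\ge0$ and $a=\hat\alpha(\omega)\in(0,1]$. We claim that
\[
\Ind\{e\ge 1/a\}\le a\,e .
\]
There are two cases.
\begin{itemize}
\item If $e<1/a$, the left side is $0$ and the right side is nonnegative, because both $a$ and $e$ are nonnegative.
\item If $e\ge 1/a$, multiplying by $a>0$ gives $a e\ge1$, which equals the left side.
\end{itemize}
This is the same pointwise inequality $\Ind\{e\ge1/a\}/a\le e$ used in the proof of Theorem~\ref{thm:posthoc}, multiplied through by $a$.

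Next we take expectations of both sides. The left side is a bounded measurable indicator, and the right side is a nonnegative random variable whose expectation lies in $[0,\infty]$. Monotonicity of the integral for nonnegative measurable functions therefore gives
\[
\mathbb P(e\ge 1/\hat\alpha)=\E\bigl[\Ind\{e\ge1/\hat\alpha\}\bigr]\le\E[\hat\alpha\,e],
\]
and the inequality remains valid if the right side is infinite.

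There is no real obstacle. The only points to check are that $\hat\alpha>0$, so that $1/\hat\alpha$ is finite and multiplying by $\hat\alpha$ preserves the inequality, and that the event $\{e\ge1/\hat\alpha\}$ is measurable, which holds because $e$ and $\hat\alpha$ are random variables. The constraint $\hat\alpha\le1$ is not needed for the bound itself. It only matters when the lemma is applied: with $e=E^{(w,h)}(Y_{n+1})$ and $\hat\alpha=\tilde\alpha_h$, it yields Theorem~\ref{thm:corrected}, using $\{Y_{n+1}\notin\hat C_h\}=\{E^{(w,h)}(Y_{n+1})\ge1/\tilde\alpha_h\}$.
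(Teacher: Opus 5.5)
Your proof is correct and is the paper's argument: the pointwise bound $\Ind\{e\ge1/\hat\alpha\}\le\hat\alpha\,e$, followed by taking expectations. Your two-case verification of the pointwise bound and your remark that $\hat\alpha\le1$ is not needed for the bound itself are both accurate.
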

\begin{proof}$\Ind\{e\ge1/\hat\alpha\}\le\hat\alpha e$ pointwise. (This is the ``corrected coverage level'' of \citep[Appendix~E]{liu2026improving}.)\end{proof}

\paragraph{Theorem~\ref{thm:corrected} (Corrected level), restated.} \StmtThmCorrected
\begin{proof}Lemma~\ref{lem:markov} with $e=E^{(w,h)}(Y_{n+1})$ and $\hat\alpha=\tilde\alpha_h$.\end{proof}

\paragraph{Proposition~\ref{prop:twopoint} (Two-point closed form), restated.} \StmtPropTwopoint
\begin{proof}$E^{(w,h)}(y)=0$ for $y\in\topT(X_{n+1})$, at most $T$ labels, and $E^{(w,h)}(y)=c$ otherwise. So the $(T+1)$-th
smallest entry is $c$, and Lemma~\ref{lem:F} gives $\tilde\alpha_h=\min\{1,1/c\}$. At the true label
$E^{(w,h)}(Y_{n+1})=c\,m_{n+1}$, so $\tilde\alpha_hE^{(w,h)}(Y_{n+1})=\min\{1,1/c\}c\,m_{n+1}=\min\{1,c\}m_{n+1}$. Under
the extra assumption this equals $m_{n+1}$ $\Pt$-a.s.; the last equality is the change of measure.
The unweighted closed form is \citep[Corollary~3.4]{liu2026improving}.\end{proof}

\subsection{Two reports of the miscoverage}
\paragraph{Definition~\ref{def:reports} (Reports), restated.} \StmtDefReports

\noindent The proof of Proposition~\ref{prop:twopoint} uses only the form of $E^{(w,h)}$ and Lemma~\ref{lem:F}, so it
applies to fold $j$ with any weights $\omega$: $\tilde b_j(\omega)=\min\{1,c_j(\omega)\}m_j$ with
$c_j(\omega):=(\sum_i\omega_i)\tau(X_j)/(\sum_{i\ne j}\omega_ih_i+\omega_j\tau(X_j))$. For $n\ge2$,
$\tilde b_j(\omega)=m_j$ iff $m_j=0$, or $\tau(X_j)>0$ and $\tau(X_j)\ge\sum_{i\ne j}\omega_ih_i/\sum_{i\ne j}\omega_i$.
We call $\tilde b_j(\omega)=m_j$ for every $j\le n$ non-degeneracy of the weights $\omega$. The unweighted tight
report $\frac1n\sum_j\tilde b_j$ is the estimator of \citep[Appendix~E]{liu2026improving}.
\begin{remark}[Non-degeneracy]\label{rem:nondeg}\StmtRemNondeg\end{remark}
\begin{proof}If $m_j=1$ then $h_j=\tau(X_j)$, so the denominator of $c_j(\omega)$ is $\sum_i\omega_ih_i$ and
$c_j(\omega)=\tau(X_j)/\bar h(\omega)$; if $\bar h(\omega)=0$ then $\tau(X_j)=0$ and both sides are $0/0=0$. By the note
after Definition~\ref{def:reports}, $\tilde b_j(\omega)=\min\{1,c_j(\omega)\}m_j$, which gives the display. Since
$\omega_j>0$, equality holds iff $c_j(\omega)\ge1$ for every $j$ with $m_j=1$, that is, iff $\tilde b_j(\omega)=m_j$ for
every $j\le n$. Let $m_j=1$. If $\tau(X_j)>0$ then $\bar h(\omega)>0$, and $c_j(\omega)\ge1$ iff
$\tau(X_j)\ge\bar h(\omega)$. If $\tau(X_j)=0$ then $c_j(\omega)=0$. This gives the second equivalence, and the second
claim follows from it. For the third claim, $\bar h(\omega)$ is the ratio of the averages of
$\hat\omega(X_i)h_i\in[0,\omega_{\max}S_{\max}]$ (since $h\le\tau\le S_{\max}$) and of
$\hat\omega(X_i)\in(0,\omega_{\max}]$ over $i\le n$. By Chebyshev
these converge in probability to $\E_P[\hat\omega h]$ and $\int\hat\omega\,\mathrm dP_X>0$, so
$\bar h(\omega)\to\E_P[\tilde\omega h]<\tau_{\min}$ in probability and $\mathbb P(\bar h(\omega)\le\tau_{\min})\to1$. By the
second claim, $\tilde b_j(\omega)=m_j$ for every $j\le n$ on this event.\end{proof}

\noindent Non-degeneracy thus compares the threshold of each missed point with one weighted average. With
$r(\omega):=\sum_j\omega_jm_j/\sum_j\omega_j$ the weighted miss rate and $\bar\tau(\omega)$ the $\omega$-weighted mean of
$\tau$ over the missed points, $\bar h(\omega)=r(\omega)\bar\tau(\omega)$, so non-degeneracy fails exactly when the miss
rate exceeds the ratio of the smallest to the mean threshold among the missed points. For $S(x,y)=-\log\hat p(y\mid x)$
with a probability vector $\hat p(\cdot\mid x)$ clipped from below at a constant $\varepsilon\in(0,1/(T+1)]$, the
$(T+1)$-th largest probability at $x$ is at most $1/(T+1)$, so the assumptions $\tau\ge\tau_{\min}$ and $S\le S_{\max}$ of
Remark~\ref{rem:nondeg} hold with $\tau_{\min}=\log(T+1)$ and $S_{\max}=-\log\varepsilon$.
\begin{theorem}[Consistency of the identity report]\label{thm:loo}\StmtThmLoo\end{theorem}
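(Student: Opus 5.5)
The plan is to compare every level with a single deterministic \emph{population level}. Define $a^\star(x):=F\bigl((S(x,y)/D^\star)_{y\in\mathcal Y}\bigr)$; by Lemma~\ref{lem:F}, $a^\star(x)=\min\{1,D^\star/S_{(T+1)}(x)\}$. I would then show three approximations, each with error $O_P(n^{-1/2})$ or $O(n^{-1/2})$:
\begin{itemize}
\item[(a)] $\max_{j\le n}|\tilde\alpha_j(w)-a^\star(X_j)|=O_P(n^{-1/2})$;
\item[(b)] $\sum_jw_ja^\star(X_j)/\sum_jw_j-\E_P[w(X)a^\star(X)]=O_P(n^{-1/2})$, where $\E_P[w\,a^\star]=\E_{\Pt}[a^\star(X)]$ by the change of measure;
\item[(c)] $|\E_{\Pt}[\tilde\alpha]-\E_{\Pt}[a^\star(X_{n+1})]|=O(n^{-1/2})$.
\end{itemize}
Since $\hat\alpha^{\rm id}(w)$ is a convex combination of the $\tilde\alpha_j(w)$, (a) moves it by at most the maximal error. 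Combining (a), (b) and (c) then gives the claim.

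For (a), write $E^{(w),j}(y)=S(X_j,y)/D_n^{(j)}(y)$ with
\[
D_n^{(j)}(y):=\frac{\sum_{i\ne j}w_iS_i+w_jS(X_j,y)}{\sum_{i\le n}w_i}.
\]
Let $D_n:=\sum_iw_iS_i/\sum_iw_i$. Replacing one term gives
\[
|D_n^{(j)}(y)-D_n|\le \frac{w_{\max}S_{\max}}{\sum_iw_i},
\]
which is $O_P(1/n)$ uniformly in $j$ and $y$, because $\frac1n\sum_iw_i\to\E_P[w]=1$. Next, $D_n$ is a ratio of two averages of bounded i.i.d.\ variables with means $D^\star>0$ and $1$, so the CLT (or just Chebyshev) and the delta method give $D_n-D^\star=O_P(n^{-1/2})$. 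On the event $\{\min_{j,y}D_n^{(j)}(y)\ge D^\star/2\}$, whose probability tends to one, we have
\[
|S/D_n^{(j)}-S/D^\star|\le 2S_{\max}\,|D_n^{(j)}-D^\star|/(D^\star)^2 .
\]
The Lipschitz bound of Lemma~\ref{lem:F} then transfers this sup-norm error in the e-value vectors to the levels, uniformly in $j$. Step (b) is the same ratio-of-means argument applied to $w(X_j)a^\star(X_j)\in[0,w_{\max}]$ and $w(X_j)$.

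Step (c) is where I expect the main care to be needed, because we need a bound in expectation rather than in probability. The test e-value has denominator
\[
D'(y):=\frac{\sum_{i\le n}w_iS_i+w_{n+1}S(X_{n+1},y)}{W},
\]
in which the test weight is bounded by $w_{\max}$. By Lemma~\ref{lem:F}, and because all levels lie in $(0,1]$, the pointwise error satisfies
\[
|\tilde\alpha-a^\star(X_{n+1})|\le\min\Bigl\{1,\ \tfrac{2S_{\max}}{(D^\star)^2}\max_y|D'(y)-D^\star|\Bigr\}+\Ind\{\min_yD'(y)<D^\star/2\}.
\]
I would bound $\max_y|D'(y)-D^\star|$ by $|D_n^{\rm cal}-D^\star|+O(1/n)$, where $D_n^{\rm cal}$ is the weighted average over the calibration points only; those points are i.i.d.\ from $P$ and independent of the test point. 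Taking $\E_{\Pt}$, the first term is $O(\E|D_n^{\rm cal}-D^\star|)=O(n^{-1/2})$. For this I would control numerator and denominator of $D_n^{\rm cal}$ separately in $L^2$, using the truncation $\min\{1,\cdot\}$ to absorb the event that $\frac1n\sum w_i$ is small. By Hoeffding that event has probability $e^{-cn}$. The indicator term is also $O(e^{-cn})$ by Hoeffding. This yields (c) and completes the proof.
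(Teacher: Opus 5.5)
Your proposal is correct and follows essentially the paper's proof. Both use the same surrogate level $F\bigl((S(x,y)/D^\star)_y\bigr)$, approximate the fold levels uniformly through the Lipschitz bound of Lemma~\ref{lem:F}, apply a law of large numbers with change of measure to the weighted surrogate average, and bound the test-point error in $L^1$ off a bad event of small probability. The differences are only bookkeeping: you normalize by $\sum_iw_i$, use convexity of the self-normalized average, and use Hoeffding in place of Chebyshev, whereas the paper tracks $B_n$ and $\bar S$ separately and writes $\hat\alpha^{\rm id}(w)-L=(\frac1n\sum_jw_j\tilde\alpha_j-LB_n)/B_n$.
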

\begin{proof}Let $B_n:=\frac1n\sum_{j\le n}w_j$ and $\bar S:=\frac1n\sum_{i\le n}w_iS_i$. Then $\E B_n=1$,
$\mathrm{Var}(B_n)\le w_{\max}/n$, $\E\bar S=D^\star$ and $\mathrm{Var}(\bar S)\le w_{\max}S_{\max}^2/n$. Let
$A_n:=\{B_n\ge1/2,\ \bar S\ge D^\star/2\}$; by Chebyshev $\mathbb P(A_n^c)=O(1/n)$. Let $n_0$ be such that
$w_{\max}S_{\max}/n\le D^\star/4$ for $n\ge n_0$. We define the surrogate $\bar e(x)\in[0,\infty)^{\mathcal Y}$ by
$\bar e(x)_y:=S(x,y)/D^\star$ and $\bar a(x):=F(\bar e(x))$.

\emph{Calibration folds.} $E^{(w),j}(y)=B_nS(X_j,y)/D_j(y)$ with
$D_j(y):=\bar S+\frac{w_j}n(S(X_j,y)-S_j)$ and $|D_j(y)-\bar S|\le w_{\max}S_{\max}/n$. On $A_n$ and for $n\ge n_0$,
$D_j(y)\ge D^\star/4$ and
\[
\bigl|E^{(w),j}(y)-\bar e(X_j)_y\bigr|=S(X_j,y)\frac{|B_nD^\star-D_j(y)|}{D_j(y)D^\star}
\le\frac{4S_{\max}}{D^{\star2}}\Bigl(D^\star|B_n-1|+|\bar S-D^\star|+\frac{w_{\max}S_{\max}}n\Bigr)=:\Delta_n .
\]
By Lemma~\ref{lem:F}, on $A_n$ and for $n\ge n_0$,
$\bigl|\frac1n\sum_jw_j\tilde\alpha_j-\frac1n\sum_jw_j\bar a(X_j)\bigr|\le B_n\Delta_n$, and $\Delta_n=O_P(n^{-1/2})$.
Since $\mathbb P(A_n^c)=O(1/n)$, a bound that holds on $A_n$ for $n\ge n_0$ yields an $O_P$ bound.
The summands $w_j\bar a(X_j)$ are i.i.d.\ in $[0,w_{\max}]$, so
$\frac1n\sum_jw_j\bar a(X_j)-\E_P[w\bar a(X)]=O_P(n^{-1/2})$, and $\E_P[w\bar a(X)]=\E_{\Pt}[\bar a(X_{n+1})]$.

\emph{Target.} $E^{(w)}(y)=\frac Wn S(X_{n+1},y)/(\bar S+\frac{w_{n+1}}nS(X_{n+1},y))$ and $W/n=B_n+w_{n+1}/n$. On
$A_n$ and for $n\ge n_0$ the denominator is at least $D^\star/2$ and
$|E^{(w)}(y)-\bar e(X_{n+1})_y|\le\frac{2S_{\max}}{D^{\star2}}\bigl(D^\star|B_n-1|+|\bar S-D^\star|+\frac{w_{\max}(D^\star+S_{\max})}n\bigr)$.
$A_n$ depends on the calibration data only. By Lemma~\ref{lem:F},
\[
\bigl|\E_{\Pt}[\tilde\alpha]-\E_{\Pt}[\bar a(X_{n+1})]\bigr|
\le\mathbb P(A_n^c)+\frac{2S_{\max}}{D^{\star2}}\Bigl(D^\star\E|B_n-1|+\E|\bar S-D^\star|+\frac{w_{\max}(D^\star+S_{\max})}n\Bigr)
=O(n^{-1/2}),
\]
using $\E|Z-\E Z|\le\mathrm{Var}(Z)^{1/2}$.

\emph{Conclusion.} With $L:=\E_{\Pt}[\tilde\alpha]\in(0,1]$,
$\hat\alpha^{\rm id}(w)-L=\bigl(\frac1n\sum_jw_j\tilde\alpha_j-L B_n\bigr)/B_n$, the numerator is
$O_P(n^{-1/2})$ by the three estimates and $B_n-1=O_P(n^{-1/2})$, and $B_n\to1$ in probability.\end{proof}

\noindent With $w\equiv1$, the same rate was shown under stronger assumptions \citep[Theorem~3.1]{gauthier2025backward}, namely
scores bounded away from zero, $n>S_{\max}/S_{\min}$ and their conditions (P1)--(P2).
\begin{theorem}[Consistency of the tight report]\label{thm:corrloo}\StmtThmCorrloo\end{theorem}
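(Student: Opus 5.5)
The plan is to reuse the argument of Theorem~\ref{thm:loo}, with the score $S$ replaced by the two-point score $h$ and the level $F(\cdot)$ replaced by the product $F(e)\,e_{Y}$. The product is well behaved on this class because of the closed form in Proposition~\ref{prop:twopoint} and the note after Definition~\ref{def:reports}. Fold $j$ gives $\tilde b_j(w)=\min\{1,c_j\}m_j$, and the target gives $\tilde\alpha_hE^{(w,h)}(Y_{n+1})=\min\{1,c\}m_{n+1}$. The map $x\mapsto\min\{1,x\}$ is $1$-Lipschitz, so it suffices to control $c_j$ and $c$ against a deterministic surrogate. We do not need to control the whole e-value vector.

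\emph{Setup.} Let $B_n:=\frac1n\sum_{j\le n}w_j$ and $\bar H:=\frac1n\sum_{i\le n}w_ih_i$. Then $\E\bar H=D^\star_h$ and $\mathrm{Var}(\bar H)\le w_{\max}S_{\max}^2/n$, since $0\le h\le\tau\le S_{\max}$. Set $A_n:=\{B_n\ge1/2,\ \bar H\ge D^\star_h/2\}$; by Chebyshev, $\mathbb P(A_n^c)=O(1/n)$. Define the surrogate $\bar c(x):=\tau(x)/D^\star_h$ and $\bar\beta(x,y):=\min\{1,\bar c(x)\}\,m(x,y)$.

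\emph{Calibration folds.} We have $c_j=B_n\tau(X_j)/D_j$ with $D_j:=\bar H+\frac{w_j}{n}(\tau(X_j)-h_j)$, so $|D_j-\bar H|\le w_{\max}S_{\max}/n$. On $A_n$, and for $n$ large enough that $w_{\max}S_{\max}/n\le D^\star_h/4$, we get $D_j\ge D^\star_h/4$. The same algebra as in Theorem~\ref{thm:loo} then gives
\[
|c_j-\bar c(X_j)|\le\frac{4S_{\max}}{D_h^{\star2}}\Bigl(D^\star_h|B_n-1|+|\bar H-D^\star_h|+\frac{w_{\max}S_{\max}}n\Bigr)=:\Delta_n=O_P(n^{-1/2}).
\]
It follows that $\bigl|\frac1n\sum_jw_j\tilde b_j-\frac1n\sum_jw_j\bar\beta(X_j,Y_j)\bigr|\le B_n\Delta_n$. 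The summands $w_j\bar\beta(X_j,Y_j)$ are i.i.d.\ in $[0,w_{\max}]$, so their average is within $O_P(n^{-1/2})$ of $\E_P[w\bar\beta]=\E_{\Pt}[\bar\beta(X_{n+1},Y_{n+1})]$. This last equality is the change of measure; it is legitimate here because $\bar\beta$ is a fixed function of $(x,y)$.

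\emph{Target.} Write $c=\frac Wn\tau(X_{n+1})/\bigl(\bar H+\frac{w_{n+1}}n\tau(X_{n+1})\bigr)$ with $W/n=B_n+w_{n+1}/n$. On $A_n$ the denominator is at least $D^\star_h/2$, which gives a bound on $|c-\bar c(X_{n+1})|$ of the same form as $\Delta_n$, with an $O(1/n)$ remainder. The event $A_n$ depends only on the calibration data. Taking expectations, bounding the contribution of $A_n^c$ by $\mathbb P(A_n^c)$ (everything lies in $[0,1]$), and using $\E|Z-\E Z|\le\mathrm{Var}(Z)^{1/2}$, we obtain $|\beta^\star-\E_{\Pt}[\bar\beta]|=O(n^{-1/2})$.

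\emph{Conclusion.} Write $\hat b(w)-\beta^\star=\bigl(\frac1n\sum_jw_j\tilde b_j-\beta^\star B_n\bigr)/B_n$. The numerator is $O_P(n^{-1/2})$ by the three estimates together with $B_n-1=O_P(n^{-1/2})$, and $B_n\to1$ in probability.

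\emph{Expected obstacle.} The main point to check is the fold identity $\tilde b_j=\min\{1,c_j\}m_j$, including the $0/0$ conventions. When $\tau(X_j)=0$, both $c_j$ and $\bar c(X_j)$ equal $0$, so the Lipschitz bound still holds. When $m_j=0$, both sides vanish. With these cases covered, no non-degeneracy assumption is needed, because the surrogate $\bar\beta$ keeps the $\min\{1,\cdot\}$.
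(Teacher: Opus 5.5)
Your proposal is correct and follows the paper's proof essentially step for step. It uses the same event $A_n$, the same surrogate $\min\{1,\tau(x)/D^\star_h\}\,m(x,y)$, the same $1$-Lipschitz reduction to $|c_j-\bar c(X_j)|$ and $|c-\bar c(X_{n+1})|$, the same change of measure for the fixed surrogate, and the same final division by $B_n$. The one difference is cosmetic: since $h_j\le\tau(X_j)$, your $D_j$ already satisfies $D_j\ge\bar H\ge D^\star_h/2$ on $A_n$, so the paper gets the constant $2S_{\max}/D^{\star2}_h$ without needing $n$ large, whereas your $D^\star_h/4$ bound is looser but equally valid.
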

\begin{proof}Let $B_n$ be as above and $\bar H:=\frac1n\sum_iw_ih_i$, with $\E\bar H=D^\star_h$ and
$\mathrm{Var}(\bar H)\le w_{\max}S_{\max}^2/n$. Let $A_n:=\{B_n\ge1/2,\ \bar H\ge D^\star_h/2\}$, so
$\mathbb P(A_n^c)=O(1/n)$. By the note after Definition~\ref{def:reports}, $\tilde b_j=\min\{1,c_j\}m_j$ with
$c_j=B_n\tau_j/\hat H_j$, $\hat H_j:=\bar H+\frac{w_j}n(\tau_j-h_j)\in[\bar H,\bar H+w_{\max}S_{\max}/n]$. We define
$\bar c(x):=\tau(x)/D^\star_h$ and $\bar b(x,y):=\min\{1,\bar c(x)\}m(x,y)$. On $A_n$,
\[
|c_j-\bar c(X_j)|=\tau_j\frac{|B_nD^\star_h-\hat H_j|}{\hat H_jD^\star_h}
\le\frac{2S_{\max}}{D^{\star2}_h}\Bigl(D^\star_h|B_n-1|+|\bar H-D^\star_h|+\frac{w_{\max}S_{\max}}n\Bigr),
\]
and $|\tilde b_j-\bar b(X_j,Y_j)|\le|c_j-\bar c(X_j)|$ because $\min\{1,\cdot\}$ is $1$-Lipschitz and $m_j$ is shared.
Hence, on $A_n$, $\bigl|\frac1n\sum_jw_j\tilde b_j-\frac1n\sum_jw_j\bar b(X_j,Y_j)\bigr|\le B_n\Delta^h_n$, where
$\Delta^h_n$ is the right-hand side of the display, and $\Delta^h_n=O_P(n^{-1/2})$. The summands
$w_j\bar b(X_j,Y_j)$ are i.i.d.\ in $[0,w_{\max}]$, so
$\frac1n\sum_jw_j\bar b(X_j,Y_j)-\E_P[w\bar b(X,Y)]=O_P(n^{-1/2})$, and
$\E_P[w\bar b(X,Y)]=\E_{\Pt}[\bar b(X_{n+1},Y_{n+1})]$.
\emph{Target.} By Proposition~\ref{prop:twopoint}, $\beta^\star=\E_{\Pt}[\min\{1,c\}m_{n+1}]$ with
$c=\frac Wn\tau_{n+1}/(\bar H+\frac{w_{n+1}}n\tau_{n+1})$. On $A_n$ the denominator is at least $D^\star_h/2$ and
$|c-\bar c(X_{n+1})|\le\frac{2S_{\max}}{D^{\star2}_h}\bigl(D^\star_h|B_n-1|+|\bar H-D^\star_h|
+\frac{w_{\max}(D^\star_h+S_{\max})}n\bigr)$. $A_n$ depends on the calibration data only, and
$|\min\{1,a\}-\min\{1,b\}|\le\min\{1,|a-b|\}$, so
$\bigl|\beta^\star-\E_{\Pt}[\bar b(X_{n+1},Y_{n+1})]\bigr|\le\mathbb P(A_n^c)+\frac{2S_{\max}}{D^{\star2}_h}
\bigl(D^\star_h\E|B_n-1|+\E|\bar H-D^\star_h|+\frac{w_{\max}(D^\star_h+S_{\max})}n\bigr)=O(n^{-1/2})$.
\emph{Conclusion.} $\hat b(w)-\beta^\star=\bigl(\frac1n\sum_jw_j\tilde b_j-\beta^\star B_n\bigr)/B_n$ with
$\beta^\star\in[0,1]$; the numerator is $O_P(n^{-1/2})$ by the three estimates and $B_n-1=O_P(n^{-1/2})$, and
$B_n\to1$ in probability.\end{proof}

\begin{remark}$\mathrm{Var}(B_n)=\chi^2(\Pt_X\|P_X)/n$, $\mathrm{Var}(\bar S)\le S_{\max}^2\E_P[w^2]/n$ and
$\mathrm{Var}(\bar H)\le S_{\max}^2\E_P[w^2]/n$; the proofs of
Theorems~\ref{thm:loo} and~\ref{thm:corrloo} bound both through $w_{\max}$, and their constants also depend on
$S_{\max}$ and $D^\star$ (resp.\ $D^\star_h$); we do not claim that the effective sample size alone governs the error.\end{remark}

\subsection{Estimated weights}
\begin{proposition}[Finite-sample identity]\label{prop:fs}\StmtPropFs\end{proposition}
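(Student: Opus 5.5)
The identity follows by writing each report in closed form under non-degeneracy and subtracting. By definition, $\hat b(\omega)=\sum_j\omega_j\tilde b_j(\omega)/\sum_j\omega_j$. Under the assumption $\tilde b_j(\omega)=m_j$ for every $j\le n$, this becomes
\[
\hat b(\omega)=\frac{\sum_{j=1}^n\omega_jm_j}{\sum_{j=1}^n\omega_j}.
\]
This is also the equality case of the display in Remark~\ref{rem:nondeg}. The same argument gives the analogous expression for $\hat b(\hat\omega)$.

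Next, write $\sum_j\omega_j=n\bar\omega$, which is positive because $\omega_j>0$. Then
\[
\hat b(\omega)=\frac1n\sum_j\frac{\omega_j}{\bar\omega}m_j,
\]
and likewise $\hat b(\hat\omega)=\frac1n\sum_j(\hat\omega_j/\bar{\hat\omega})m_j$. Subtracting the two expressions termwise gives the stated identity.

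There is no real obstacle; the argument uses only linearity once non-degeneracy is granted. The one point to check is that non-degeneracy is used separately for each weight vector: the conditions $\tilde b_j(\hat\omega)=m_j$ and $\tilde b_j(\omega)=m_j$ involve different weighted averages $\bar h(\cdot)$, so each must be assumed on its own. The statement does assume both.

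Since $m_j\in\{0,1\}$, only indices with $m_j=1$ contribute. This makes explicit that errors of the normalized weights on covered points cancel from the report error.
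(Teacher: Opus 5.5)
Your proof is correct and matches the paper's argument: substitute $\tilde b_j=m_j$ into the definition of $\hat b$ for each weight vector, rewrite $\sum_j\omega_j=n\bar\omega$, and subtract. Your note that non-degeneracy must hold separately for each weight vector is accurate, and the statement assumes both.
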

\begin{proof}Under the assumption $\hat b(\hat\omega)=\sum_j\hat\omega_jm_j/\sum_j\hat\omega_j=\frac1n\sum_j(\hat\omega_j/\bar{\hat\omega})m_j$,
and likewise for $\omega$; subtract.\end{proof}

\begin{proposition}[Limit under estimated weights]\label{prop:west}\StmtPropWest\end{proposition}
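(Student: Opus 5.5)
On the event $A_n:=\{\tilde b_j(\omega)=m_j\ \forall j\le n\}$, whose probability tends to one by assumption, the report has the closed form of the weighted miss rate. I would therefore reduce the whole statement to a law of large numbers for that miss rate. By Remark~\ref{rem:nondeg}, or directly from the note after Definition~\ref{def:reports}, on $A_n$ we have
\[
\hat b(\omega)=\frac{\frac1n\sum_{j\le n}\hat\omega(X_j)m_j}{\frac1n\sum_{j\le n}\hat\omega(X_j)}.
\]
Since $\mathbb P(A_n^c)\to0$, it suffices to show that the right-hand side, call it $r_n$, converges in probability to $\E_P[\tilde\omega(X)m(X,Y)]$. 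The reason is that for any $\varepsilon>0$,
\[
\mathbb P\bigl(|\hat b(\omega)-L|>\varepsilon\bigr)\le\mathbb P(A_n^c)+\mathbb P\bigl(|r_n-L|>\varepsilon\bigr).
\]

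For the convergence of $r_n$, I condition on the independent data used to fit $\hat\omega$ when it is fitted, so that $\hat\omega$ is a fixed measurable function into $(0,\omega_{\max}]$. The summands $\hat\omega(X_j)m_j\in[0,\omega_{\max}]$ and $\hat\omega(X_j)\in(0,\omega_{\max}]$ are then i.i.d.\ under $P$ with bounded variance. Chebyshev, exactly as in the proof of Remark~\ref{rem:nondeg}, gives convergence of the numerator to $\E_P[\hat\omega(X)m(X,Y)]$ and of the denominator to $\int\hat\omega\,\mathrm dP_X$. The denominator limit is strictly positive because $\hat\omega>0$. The continuous mapping theorem applied to the ratio then yields
\[
r_n\to\frac{\E_P[\hat\omega m]}{\int\hat\omega\,\mathrm dP_X}=\E_P[\tilde\omega(X)m(X,Y)]
\]
in probability. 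This proves the first claim.

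For the second claim, I would use the change of measure from Definition~\ref{def:setting}. Because $\Pt(\mathrm dx,\mathrm dy)=w(x)P(\mathrm dx,\mathrm dy)$ and $Y_{n+1}\notin\topT(X_{n+1})$ iff $m_{n+1}=1$, we get
\[
\Pt\bigl(Y_{n+1}\notin\topT(X_{n+1})\bigr)=\E_P[w(X)m(X,Y)],
\]
which is the last equality of Proposition~\ref{prop:twopoint}. This quantity is a deterministic constant, and it is finite because $\E_P[w]=1$ and $m\le1$. Subtracting it from the limit of the first claim gives the stated limit $\E_P[(\tilde\omega-w)m]$. This combination of the two expectations is legitimate since $\tilde\omega$ is bounded and $wm$ is integrable.

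I expect no step to be a serious obstacle. The only points needing care are two. First, the fitted-weight case must be handled conditionally, so that the i.i.d.\ structure and Chebyshev apply. Second, the ratio must have a denominator bounded away from zero in the limit, which holds because $\int\hat\omega\,\mathrm dP_X>0$.
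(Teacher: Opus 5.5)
Your proposal is correct and follows essentially the same route as the paper. The paper's proof also notes that $\hat b(\omega)$ equals the self-normalized weighted miss rate on the non-degeneracy event, applies Chebyshev to the bounded i.i.d.\ numerator and denominator (the latter with positive limit $\int\hat\omega\,\mathrm dP_X$), and combines this with $\mathbb P(A_n^c)\to0$ via the same union bound. For the second claim it likewise subtracts the constant $\Pt\bigl(Y_{n+1}\notin\topT(X_{n+1})\bigr)=\E_P[w(X)m(X,Y)]$.
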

\begin{proof}Let $\Omega_n$ be the event in the assumption and $Q_n:=\frac1n\sum_j\hat\omega(X_j)m_j/\frac1n\sum_j\hat\omega(X_j)$.
On $\Omega_n$, $\hat b(\omega)=Q_n$. Numerator and denominator are i.i.d.\ averages bounded by $\omega_{\max}$; by Chebyshev they
converge in probability to $\E_P[\hat\omega m]$ and $\int\hat\omega\,\mathrm dP_X>0$, so
$Q_n\to\E_P[\tilde\omega m]$ in probability. For $\varepsilon>0$,
$\mathbb P(|\hat b-\E_P[\tilde\omega m]|>\varepsilon)\le\mathbb P(\Omega_n^c)+\mathbb P(|Q_n-\E_P[\tilde\omega m]|>\varepsilon)\to0$.
The second claim subtracts the constant $\Pt(Y\notin\topT(X))=\E_P[wm]$.\end{proof}

\noindent Under non-degeneracy the tight report is the self-normalized importance-weighted estimate of the top-$T$
error, so Proposition~\ref{prop:west} is the classical limit of that estimator under misspecified weights.
\paragraph{Proposition~\ref{prop:decomp} (Asymptotic bias of the tight report), restated.} \StmtPropDecomp
\begin{proof}The versions exist by the factorization lemma. Since $w>0$, $P_X$ and $\Pt_X$ have the same null sets, so no
expectation below depends on the choice of versions. All of them are finite, since $0\le m\le1$, $\tilde\omega$ is
bounded and $\E_P[v_\phi(\phi(X))]=\E_P[w(X)]=1$. By Proposition~\ref{prop:west} the limit is
$\E_P[(\tilde\omega-w)m]=\E_P[(\tilde\omega-v_\phi\circ\phi)m]+\E_P[(v_\phi\circ\phi-w)m]$, and the last term equals
$-\E_{\Pt}[\tilde\mu_\phi(\phi(X))-\mu_\phi(\phi(X))]$ by the first identity of Proposition~\ref{prop:compare}.\end{proof}

\noindent With $\phi$ the identity, $v_\phi=w$ and, under covariate shift, $\tilde\mu_\phi=\mu_\phi$, so the second term
vanishes and Proposition~\ref{prop:decomp} reduces to the second claim of Proposition~\ref{prop:west}.
\begin{proposition}[Comparing two representations]\label{prop:compare}\StmtPropCompare\end{proposition}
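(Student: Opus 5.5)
We prove the three claims in order: the first identity by a tower-property computation, the second by decomposing over $\sigma(\psi(X))$, and the last claim as a corollary of the first.

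\textbf{First identity.} Write $Z=\phi(X)$ and $m=m(X,Y)$. Since $0\le m\le1$ and $w$ is integrable under $P$, conditioning on $Z$ gives
\[
\E_P[v_\phi(Z)\,m]=\E_P\bigl[v_\phi(Z)\,M_\phi\bigr],\qquad M_\phi=\mu_\phi(Z)\ P\text{-a.s.}
\]
By the change of measure $\Pt(\mathrm dx,\mathrm dy)=w(x)P(\mathrm dx,\mathrm dy)$, we have $\E_P[w(X)m]=\E_{\Pt}[m]=\E_{\Pt}[\tilde\mu_\phi(Z)]$. For any bounded measurable $g$,
\[
\E_{\Pt}[g(Z)]=\E_P[w(X)g(Z)]=\E_P[v_\phi(Z)g(Z)].
\]
Taking $g=\mu_\phi$ (bounded, with values in $[0,1]$) gives $\E_P[v_\phi(Z)\mu_\phi(Z)]=\E_{\Pt}[\mu_\phi(Z)]$. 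Subtracting the two expressions yields $B_\phi=\E_{\Pt}[\mu_\phi(Z)]-\E_{\Pt}[\tilde\mu_\phi(Z)]$, which is the claimed identity. As in the proof of Proposition~\ref{prop:decomp}, $P_X$ and $\Pt_X$ have the same null sets, so the choice of versions does not matter.

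\textbf{Comparison formula.} Because $\psi=f\circ\phi$, we have $\sigma(\psi(X))\subseteq\sigma(\phi(X))$. Two tower-property facts follow:
\[
v_\psi(\psi(X))=\E_P[v_\phi(\phi(X))\mid\psi(X)],\qquad M_\psi=\E_P[M_\phi\mid\psi(X)].
\]
From the first step, $B_\phi=\E_P[v_\phi M_\phi]-\E_P[wm]$. The same computation for $\psi$ gives $B_\psi=\E_P[v_\psi M_\psi]-\E_P[wm]$. Subtracting,
\[
B_\psi-B_\phi=\E_P\bigl[\E[v_\phi\mid\psi]\,\E[M_\phi\mid\psi]\bigr]-\E_P[v_\phi M_\phi].
\]
This is exactly the displayed formula. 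The bracketed term is the expected conditional covariance of $v_\phi$ and $M_\phi$ given $\psi$.

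\textbf{Last claim.} If $w(X)$ is $\sigma(\phi(X))$-measurable, then $v_\phi(\phi(X))=w(X)$ $P$-a.s., so the integrand of $B_\phi$ vanishes and $B_\phi=0$.

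The only delicate point is integrability when passing $v_\phi$ inside conditional expectations. This is handled because $v_\phi\ge0$ with $\E_P[v_\phi]=1$ and $M_\phi\in[0,1]$, so every product involved is integrable.
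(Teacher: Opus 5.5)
Your proposal is correct and follows the paper's proof essentially step for step: the change of measure $\E_{\Pt}[g(\phi(X))]=\E_P[v_\phi(\phi(X))g(\phi(X))]$ applied with $g=\mu_\phi$, the rewriting $B_\phi=\E_P[v_\phi(\phi(X))M_\phi]-\E_P[wm]$, the tower property over $\sigma(\psi(X))\subseteq\sigma(\phi(X))$, and $v_\phi(\phi(X))=w(X)$ a.s.\ for the last claim. Your opening sentence calls the last claim a corollary of the first identity, but you actually prove it directly from the definition of $B_\phi$, which is also what the paper does.
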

\begin{proof}All expectations are finite and do not depend on the choice of versions, as in the proof of
Proposition~\ref{prop:decomp}. For bounded measurable $g$ on $\mathcal Z$,
$\E_{\Pt}[g(\phi(X))]=\E_P[w(X)g(\phi(X))]=\E_P[v_\phi(\phi(X))g(\phi(X))]$. Since $v_\phi(\phi(X))$ is integrable and
$\sigma(\phi(X))$-measurable, $\E_P[v_\phi(\phi(X))m]=\E_P[v_\phi(\phi(X))M_\phi]
=\E_P[v_\phi(\phi(X))\mu_\phi(\phi(X))]=\E_{\Pt}[\mu_\phi(\phi(X))]$, the last step by the previous identity with
$g=\mu_\phi$. Also $\E_P[wm]=\E_{\Pt}[m]=\E_{\Pt}[\tilde\mu_\phi(\phi(X))]$. Subtracting gives the first identity. It also
follows that $B_\phi=\E_P[v_\phi(\phi(X))M_\phi]-\E_P[wm]$, and likewise for $\psi$. Since $\sigma(\psi(X))\subseteq\sigma(\phi(X))$,
the tower property gives $\E_P[v_\phi(\phi(X))\mid\psi(X)]=\E_P[w(X)\mid\psi(X)]=v_\psi(\psi(X))$ and
$\E_P[M_\phi\mid\psi(X)]=M_\psi$ almost surely. Hence $B_\psi-B_\phi=\E_P[v_\psi(\psi(X))M_\psi]-\E_P[v_\phi(\phi(X))M_\phi]$,
which is the display. If $w(X)$ is $\sigma(\phi(X))$-measurable, then $v_\phi(\phi(X))=w(X)$ almost surely and
$B_\phi=0$.\end{proof}

\noindent By the first identity, $B_\phi$ is the second term of the limit in Proposition~\ref{prop:decomp}, the bias of
the representation $\phi$. The last expectation in the display above equals
$\E_P\bigl[\E_P[v_\phi(\phi(X))M_\phi\mid\psi(X)]-\E_P[v_\phi(\phi(X))\mid\psi(X)]\,\E_P[M_\phi\mid\psi(X)]\bigr]$, the
average under $P$ of the conditional covariance of $v_\phi(\phi(X))$ and $M_\phi$ given $\psi(X)$. It can have either sign. Let
$\mathcal X=\{a,b\}$ with $P_X$ uniform, $\phi$ the identity, $\psi$ constant, $w(a)=1+d$ and $w(b)=1-d$ with $|d|<1$,
distinct scores at each $x$, and a law of $Y$ given $X$ with $P(m(X,Y)=1\mid X=x)=\rho_x\in[0,1]$. Then the expectation
equals $\frac12d(\rho_a-\rho_b)$. Neither of $|B_\phi|$ and $|B_\psi|$ is the smaller in general: with
$\mathcal X=\{1,2,3,4\}$, $P_X$ uniform, $\phi(x)=\Ind\{x\le2\}$, $\psi$ constant, distinct scores at each $x$, and a law
of $Y$ given $X$ with $P(m(X,Y)=1\mid X=x)=\rho_x$, $w=(\frac32,\frac32,\frac12,\frac12)$ and $\rho=(\frac12,\frac12,0,0)$
give $0$ and $\frac18$, and $w=(2,1,\frac34,\frac14)$ and $\rho=(\frac25,0,1,\frac15)$ give $\frac1{10}$ and $0$.
All three examples arise with $\phi$ the penultimate embedding of a model and $\psi=G$ (Definition~\ref{def:GR}).
Take $K=2$, $T=1$ and $S(x,\cdot):=-\log\mathrm{softmax}(A\phi(x)+b)$, with $\phi(a)\ne\phi(b)$ in the first example and
$\phi(1)=\phi(2)\ne\phi(3)=\phi(4)$ in the other two, and choose $A$ and $b$ so that the logits at the two values of
$\phi$ are permutations of each other with distinct entries. The quantities above depend on $\phi$ only through
$\sigma(\phi(X))$, the scores at each $x$ are distinct, $G$ is constant, and $G=f\circ\phi$ with
$f(e):=\mathrm{sort}(-\log\mathrm{softmax}(Ae+b))$ continuous.
\subsection{Reduction to the sorted scores}
\paragraph{Definition~\ref{def:GR} (Sorted scores and rank), restated.} \StmtDefGr

\paragraph{Proposition~\ref{prop:reduction} (Reduction), restated.} \StmtPropReduction
\begin{proof}(i) For bounded measurable $\eta$,
$\E_{\Pt}[\eta(G,R)]=\E_{\Pt}[\sum_r\eta(G,r)q_r(G)]=\E_P[w\sum_r\eta(G,r)q_r(G)]=\E_P[v(G)\sum_r\eta(G,r)q_r(G)]
=\E_P[v(G)\eta(G,R)]$, by the assumption under $\Pt$, the change of measure, the definition of $v$ and the assumption
under $P$. Under the second condition of Proposition~\ref{prop:sufficient}, identity (i) is the conclusion of
\citep[Theorem~1]{stojanov2019low} with $(h(X),Y)$ replaced by $(G(X),R(X,Y))$.
(ii) Consider the reduced problem with covariate $Z:=G(X)\in[0,\infty)^K$, label $R:=R(X,Y)$ and the fixed score
$s(z,r):=z_r\ge0$. For every $(x,y)$, $S(x,y)=s(G(x),R(x,y))$, $\tau(x)=G(x)_{T+1}$,
$m(x,y)=\Ind\{s(G(x),R(x,y))\ge G(x)_{T+1}\}$ and $h(x,y)=G(x)_{T+1}m(x,y)$; these are the quantities of
Definition~\ref{def:transform} for the score $s$ at $(G(x),R(x,y))$. Since $y\mapsto R(X_{n+1},y)$ is a bijection,
the vector of e-values of the original procedure with weights $v(G(X_j))$ is a permutation of the vector of the
reduced procedure, $F$ depends only on the multiset of entries, and $Y_{n+1}\notin\hat C$ iff $R(X_{n+1},Y_{n+1})$ is
not in the reduced set; likewise for the transformed score. The reduced calibration data $(Z_i,R_i)_{i\le n}$ are
i.i.d.\ from the law of $(G(X),R(X,Y))$ under $P$; the reduced test point is a function of $(X_{n+1},Y_{n+1})$,
hence independent of them, and by (i) its law has density $v(z)$ with respect to the calibration law. So the reduced
problem satisfies Definition~\ref{def:setting} with density ratio $v>0$ and score $s$, and
Theorems~\ref{thm:validity}, \ref{thm:posthoc}, \ref{thm:transform}, \ref{thm:corrected} and
Proposition~\ref{prop:sqrt} applied to it give the stated inequalities.\end{proof}

\begin{proposition}[Conditions for the reduction]\label{prop:sufficient}\StmtPropSufficient\end{proposition}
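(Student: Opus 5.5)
In each of the three cases the plan is the same: take $q_r$ to be a version of $g\mapsto P(R(X,Y)=r\mid G(X)=g)$, obtained from the factorization lemma and clipped to $[0,1]$. With this choice the $P$-part of the conclusion holds by construction. What remains is to show that $\Pt(R=r\mid G)=q_r(G)$ $\Pt$-a.s. This is equivalent to
\[
\E_{\Pt}[\Ind\{R=r\}g(G)]=\E_{\Pt}[q_r(G)g(G)]\quad\text{for every bounded measurable } g.
\]
Two facts will be used throughout. By the change of measure, $\E_{\Pt}[\zeta(X,Y)]=\E_P[w(X)\zeta(X,Y)]$. By the definition of $v$, $\E_P[w(X)g(G(X))]=\E_P[v(G(X))g(G(X))]$ for bounded $g$. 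Applied to $\zeta=q_r(G)g(G)$, these give $\E_{\Pt}[q_r(G)g(G)]=\E_P[v(G)q_r(G)g(G)]$. So in every case the task is to show
\[
\E_{\Pt}[\Ind\{R=r\}g(G)]=\E_P[v(G)q_r(G)g(G)].
\]

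\emph{Third condition.} Apply it with $\eta(G,R)=\Ind\{R=r\}g(G)$. This gives $\E_P[v(G)\Ind\{R=r\}g(G)]$. Conditioning on $G$ under $P$, and using that $v(G)g(G)$ is $\sigma(G)$-measurable, this equals $\E_P[v(G)q_r(G)g(G)]$. Integrability holds because $v\ge0$ and $\E_P[v(G)]=\E_P[w]=1$.

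\emph{First condition} ($w=v\circ G$ $P$-a.s.). Write $\E_{\Pt}[\Ind\{R=r\}g(G)]=\E_P[w\Ind\{R=r\}g(G)]=\E_P[v(G)\Ind\{R=r\}g(G)]$, and finish as in the third case. Equivalently, this case verifies the third condition for all $\eta$ at once.

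\emph{Second condition} ($P(R=r\mid X)=q_r(G(X))$ $P$-a.s.). Condition on $X$ under $P$:
\[
\E_P[w\Ind\{R=r\}g(G)]=\E_P[w\,q_r(G)g(G)]=\E_P[v(G)q_r(G)g(G)].
\]
The first equality uses that $w(X)g(G(X))$ is $\sigma(X)$-measurable, and the second uses the definition of $v$.

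The main obstacle I expect is bookkeeping about versions and null sets. The $\Pt$-a.s. statement must be derived from $P$-a.s. identities. This is fine because $w>0$, so $\Pt\ll P$ on $\mathcal X\times\mathcal Y$ (indeed the two are equivalent), and $P$-a.s. equalities of functions of $(X,Y)$ transfer to $\Pt$-a.s. ones. The conditional expectations $v(G)$ and $v(G)q_r(G)$ are integrable since $q_r\in[0,1]$. Finally, measurability of $R$ follows from its construction as a sorting with a fixed tie-break, which makes $\Ind\{R=r\}$ measurable.
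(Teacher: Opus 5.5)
Your proof is correct and follows the paper's argument. It uses the same $q_r$ from the factorization lemma and the same reduction to $\E_{\Pt}[\zeta(G)\Ind\{R=r\}]=\E_{\Pt}[\zeta(G)q_r(G)]$, via the change of measure, the definition of $v$, and conditioning on $G$ (first and third cases) or on $X$ (second case); the only cosmetic difference is that in the third case you close with the general identity $\E_{\Pt}[g(G)]=\E_P[v(G)g(G)]$, where the paper instead applies the condition a second time with $\eta(g,r')=\zeta(g)q_r(g)$, which amounts to the same step.
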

\begin{proof}By the factorization lemma there are measurable $q_r:[0,\infty)^K\to[0,1]$ with $P(R=r\mid G)=q_r(G)$ $P$-a.s. It
remains to show $\E_{\Pt}[\zeta(G)\Ind\{R=r\}]=\E_{\Pt}[\zeta(G)q_r(G)]$ for every $r$ and every bounded measurable
$\zeta$ on $[0,\infty)^K$. For bounded measurable $g$, $\E_{\Pt}[g(G)]=\E_P[w\,g(G)]=\E_P[v(G)g(G)]$. If
$w(X)=v(G(X))$ $P$-a.s., then
$\E_{\Pt}[\zeta(G)\Ind\{R=r\}]=\E_P[w\zeta(G)\Ind\{R=r\}]=\E_P[v(G)\zeta(G)\Ind\{R=r\}]=\E_P[v(G)\zeta(G)q_r(G)]
=\E_{\Pt}[\zeta(G)q_r(G)]$, conditioning on $G$ in the third step. If $P(R=r\mid X)=P(R=r\mid G(X))$ $P$-a.s., then
$P(R=r\mid X)=q_r(G)$ $P$-a.s.\ and
$\E_{\Pt}[\zeta(G)\Ind\{R=r\}]=\E_P[w\zeta(G)\Ind\{R=r\}]=\E_P[w\zeta(G)q_r(G)]=\E_{\Pt}[\zeta(G)q_r(G)]$,
conditioning on $X$ in the second step. This second condition is condition (i) of \citep[Theorem~1]{stojanov2019low} with
$(h,Y)$ replaced by $(G,R)$. Under the third condition, its instances $\eta(g,r')=\zeta(g)\Ind\{r'=r\}$ and
$\eta(g,r')=\zeta(g)q_r(g)$, with conditioning on $G$ under $P$ in between, give
$\E_{\Pt}[\zeta(G)\Ind\{R=r\}]=\E_P[v(G)\zeta(G)\Ind\{R=r\}]=\E_P[v(G)\zeta(G)q_r(G)]=\E_{\Pt}[\zeta(G)q_r(G)]$.\end{proof}

\noindent By the third condition and part (i) of Proposition~\ref{prop:reduction}, the assumption of
Proposition~\ref{prop:reduction} holds exactly when the law of $(G(X),R(X,Y))$ under $\Pt$ has density
$(g,r)\mapsto v(g)$ with respect to its law under $P$, that is, when the density ratio of the sorted scores and the
rank depends only on the sorted scores.
\begin{remark}Assume that the version $v$ of Definition~\ref{def:GR} satisfies $v\le\omega_{\max}$ for a constant
$\omega_{\max}<\infty$. Let $\omega_i:=v(G(X_i))$, and assume $\mathbb P(\tilde b_j(\omega)=m_j\ \forall j\le n)\to1$. In
Proposition~\ref{prop:decomp} with $\hat\omega=v\circ G$ and $\phi=G$, $\tilde\omega=v\circ G$ (since
$\E_P[v(G(X))]=\E_P[w(X)]=1$), and we may take $v_G=v$, so the first term of the limit vanishes and the asymptotic bias
of the tight report with weights $v\circ G$ is $-\E_{\Pt}[\tilde\mu_G(G(X))-\mu_G(G(X))]$.
The absolute value of this bias is bounded, for a bounded loss, by a total variation distance
\citep[Appendix~B.3, Lemma~1]{chen2021mandoline}; here it is given exactly for the loss $m$. Since $m(x,y)=\Ind\{G(x)_{R(x,y)}\ge G(x)_{T+1}\}$, under the
assumption of Proposition~\ref{prop:reduction} the conditional law of $m$ given $G$ is the same under $P$ and $\Pt$, so
$\tilde\mu_G(G(X))=\mu_G(G(X))$ $\Pt$-a.s.\ and the asymptotic bias is zero.\end{remark}

\section{Why the test score is not reweighted}\label{app:design}

\begin{remark}
A seemingly natural alternative also reweights the test score in the numerator,
$\bar E(y):=W\,w_{n+1}S(X_{n+1},y)/\bigl(\sum_{i\le n}w_iS_i+w_{n+1}S(X_{n+1},y)\bigr)$. Repeating the change of
measure of Theorem~\ref{thm:validity} leaves an extra factor $w_{n+1}$, and by the law of large numbers
$\E_{\Pt}[\bar E(Y_{n+1})]\to\E_{\Pt}[w(X)S(X,Y)]/\E_{\Pt}[S(X,Y)]$ as $n\to\infty$ for bounded weights and scores, which
exceeds $1$ when the shift moves mass toward inputs with larger scores. For example, with $P(A)=1/3$, $w=2$ on $A$ and
$w=1/2$ on $A^c$, and $S=10$ on $A$ and $S=1$ on $A^c$, the limit is $27/14$.
\end{remark}

\section{Experimental details}\label{app:details}

\paragraph{Domain classifier.} Unless stated otherwise, the domain classifier in each draw is scikit-learn's
\texttt{HistGradientBoostingClassifier} \citep{pedregosa2011scikit} with maximum depth $3$ and $200$ iterations, trained
to separate the clean sample from the unlabeled test sample. With its predicted probability $\hat q$ of the test class
clipped to $[0.01,0.99]$ as in \citet{tibshirani2019conformal}, the weights are $\hat\omega=\hat q/(1-\hat q)$; a
constant factor, such as the ratio of the two sample sizes, cancels in the reports. The clip bounds the weights, as
Proposition~\ref{prop:decomp} assumes. Appendix~\ref{app:extra} gives the results under looser clips.

\paragraph{Dry Bean.} The data \citep{koklu2020multiclass} have $13{,}611$ beans with $16$ features and $7$ classes. They are split with scikit-learn's \texttt{train\_test\_split}
(test size $0.6$, stratified, seed $42$) into $5{,}444$ training beans and a pool of $8{,}167$. The classifier is XGBoost
with $15$ trees of depth $2$ and multiclass log-loss; its accuracy on the pool is $0.913$. The score is $-\log$ of the predicted probability, clipped at
$10^{-12}$. Each of the $80$ draws (seed $1$) takes $500$ calibration beans and $500$ disjoint clean beans for the domain
classifier; the rest of the pool is shuffled and split in halves, and $1{,}000$ unlabeled test beans and $500$ labeled
test beans are drawn with replacement from the two halves with probability proportional to $\exp(\beta t(x))$. The
truth is the exact miscoverage of the size-$T$ set under the tilted pool,
$\sum_xe^{\beta t(x)}m(x,y_x)/\sum_xe^{\beta t(x)}$ over the $8{,}167$ pool beans $x$ with labels $y_x$; the $500$
labeled test beans are used only to evaluate WSCP. Computed on the half from which the unlabeled test beans of a draw
are drawn, this miscoverage differs from the pool value by at most \Ndbtruthhalfmaxdev{} on average over the $80$
draws. Feature weights use standardized features.
The calibrated variants use scikit-learn's \texttt{CalibratedClassifierCV} (sigmoid or isotonic, with one final model or
the average of five fold models) with folds grouped by bean, so that repeated draws of the same bean never appear in
both a training and a held-out fold. Early stopping chooses the number of boosting iterations that minimizes log-loss
on a grouped $20\%$ hold-out. Logistic regression is scikit-learn's \texttt{LogisticRegression} with default
settings and at most $1{,}000$ iterations, on the standardized features. ECE uses $15$ equal-width bins on a held-out domain sample of $500$ clean and $1{,}000$
test beans.

\paragraph{CIFAR-10-C.} Features are the logits and the input to the last linear layer of each model ($320$ dimensions for EfficientNet-B0, $128$ for the WideResNet-40-2), computed once on
the $10{,}000$ clean test images and on each corrupted set. The EfficientNet-B0 weights are those of the public code
release of \citet{gauthier2025backward} (clean accuracy $91.09\%$); we did not find a stated license. The WideResNet-40-2 is RobustBench's
\texttt{Hendrycks2020AugMix\_WRN} (RobustBench code and model weights: MIT license; AugMix code: Apache 2.0); its clean accuracy
($95.08\%$) and mean accuracy over the $75$ corrupted sets ($88.82\%$) match the RobustBench leaderboard. CIFAR-10-C is
distributed under CC BY 4.0, and Dry Bean under CC BY 4.0. Image $i$ of every corrupted set is the corrupted version of clean image $i$. Each cell uses $60$ draws, with the
same images in every cell. A draw takes $1{,}000$ calibration images and $1{,}000$ further clean images for the domain
classifier (seed $0$) and then, from the corrupted versions of the other $8{,}000$ images (seed $1$), $2{,}000$ unlabeled
images for the domain classifier and ATC-NE, so no corrupted image
of a draw is a version of one of its clean images. The wall criterion is
the AUC of a gradient-boosting classifier trained on $2{,}000$ clean and $2{,}000$ corrupted embeddings and evaluated
on $2{,}000$ other clean and $2{,}000$ other corrupted embeddings, the four samples being versions of different images
(seed $7$). The sorted-score AUC uses the same procedure and the same samples on $G(x)$. At the walls the embedding AUC is at least \Ncifwallembaucmin{}
and the sorted-score AUC is between \Ncifwallscoreaucmin{} and \Ncifwallscoreaucmax. The three walls of EfficientNet-B0 are its three cells with the
largest top-1 error, but for the WideResNet contrast at severity $3$ is a wall (AUC \Nwallexauc) with a top-1 error of
\Nwallextopone\%, below that of JPEG compression at severities $3$ and $5$ (\Nwallexjpegthree\% and \Nwallexjpegfive\%),
which are not walls. Weighted split conformal prediction chooses its level on the $2{,}000$
unlabeled corrupted images used for the weights and is evaluated on the corrupted versions of the $6{,}000$ images that are neither in
that sample nor among the clean images of the draw.

\paragraph{CIFAR-10.1.} We use version v6 \citep{recht2018cifar}, with $2{,}000$ images, $200$ per class. Its images are
part of the Tiny Images data set, and its repository states that they can be used in the same way; the code of the
repository is under the MIT license. Features are computed as for CIFAR-10-C. Each of the $60$ draws (seed $0$) takes
$1{,}000$ calibration images and $1{,}000$ further images for the domain classifier from the clean CIFAR-10 test images,
and $1{,}000$ unlabeled CIFAR-10.1 images for the domain classifier and ATC-NE. The truth is the miss rate of the
size-$T$ set over all $2{,}000$ images. A gradient-boosting classifier fitted on $1{,}000$ clean and $1{,}000$ CIFAR-10.1
images separates $1{,}000$ others of each with an AUC of at most \Nnataucmax{} on the embedding or the sorted scores
(seed $7$), so there is no wall.

\paragraph{Weighted split conformal prediction.} We match the mean set size rather than cap every set at $T$, because a
WSCP set with at most $T$ labels lies inside the size-$T$ set, so a capped WSCP could not miss less often. Its level is
chosen after seeing the calibration scores, so the guarantee of WSCP, which is for a level fixed in advance, does not
strictly apply. Its miscoverage is measured on held-out labeled test points. Appendix~\ref{app:extra} repeats it with a
level fixed in advance.

\paragraph{Compute.} The WideResNet features were extracted on one NVIDIA V100 GPU in about two minutes; the
EfficientNet-B0 features were computed once from the released weights. All other computations ran on CPUs. The
CIFAR-10-C reports and WSCP for one model and corruption took $4$ to $7$ minutes as one job on $4$ cores, and the
checks with other domain classifiers at most $30$ minutes.

\section{Additional results}\label{app:extra}

\paragraph{The errors are understatements.} Table~\ref{tab:signed} gives the mean gaps behind the mean absolute gaps of
Table~\ref{tab:main}. They are negative for every estimator in every group, except for the tight report with score
weights in one group. Of the \Ncifnsettings{} settings, the tight report understates the miscoverage in
\Ncifsignednoneunder{} without weights, \Ncifsignedembunder{} with embedding weights and \Ncifsignedscoreunder{} with
score weights, and ATC-NE understates it in \Ncifsignedatcunder.

\begin{table}[htbp]
\centering\small
\caption{CIFAR-10-C by model, $T$ and overlap, as Table~\ref{tab:main}: mean gap (report $-$ truth) in percentage points
over the settings of a group and their $60$ draws, for the tight report with no, embedding and score weights and for
ATC-NE.}\label{tab:signed}
\begin{tabular}{lllcccc}
\toprule
 & & & \multicolumn{3}{c}{tight report} & \\
\cmidrule(lr){4-6}
model & $T$ & & none & emb. & score & ATC-NE \\
\midrule
EfficientNet-B0 & 1 & no wall & $-$14.64 & $-$7.86 & $-$3.05 & $-$2.59 \\
 &  & walls & $-$68.13 & $-$52.33 & $-$42.98 & $-$29.59 \\
\addlinespace
 & 2 & no wall & $-$9.15 & $-$6.36 & $-$3.43 & $-$2.79 \\
 &  & walls & $-$60.80 & $-$53.65 & $-$48.67 & $-$41.87 \\
\addlinespace
 & 3 & no wall & $-$5.64 & $-$4.03 & $-$2.27 & $-$1.88 \\
 &  & walls & $-$49.08 & $-$43.90 & $-$42.39 & $-$38.66 \\
\midrule
WRN-40-2 (AugMix) & 1 & no wall & $-$3.00 & $-$0.86 & +0.19 & $-$0.12 \\
 &  & walls & $-$12.53 & $-$5.03 & $-$5.26 & $-$3.68 \\
\addlinespace
 & 2 & no wall & $-$1.46 & $-$0.86 & $-$0.36 & $-$0.29 \\
 &  & walls & $-$6.80 & $-$4.94 & $-$4.23 & $-$3.31 \\
\addlinespace
 & 3 & no wall & $-$0.69 & $-$0.44 & $-$0.19 & $-$0.02 \\
 &  & walls & $-$3.90 & $-$3.21 & $-$2.65 & $-$2.13 \\
\bottomrule
\end{tabular}

\end{table}

\begin{table}[htbp]
\centering\small
\caption{CIFAR-10-C: number of cells in which each method is valid (mean over $60$ draws of the report minus the true
miscoverage is $\ge0$; for WSCP, mean miscoverage $\le$ its level $\alpha$), for the identity report with score weights
(``score''), embedding weights (``emb.'') and no weights (``none''). WSCP uses score weights; $|C|>T$ is the mean share
of test images whose WSCP set has more than $T$ labels.}\label{tab:cifar}
\resizebox{\linewidth}{!}{\begin{tabular}{llccccccccccc}
\toprule
 & & \multicolumn{6}{c}{settings without a wall (embedding AUC $<0.95$)} & \multicolumn{5}{c}{walls (AUC $\ge0.95$)} \\
\cmidrule(lr){3-8}\cmidrule(lr){9-13}
model & $T$ & \#cells & \multicolumn{3}{c}{identity valid} & \multicolumn{2}{c}{WSCP} & \#cells & \multicolumn{3}{c}{identity valid} & WSCP \\
 & & & score & emb. & none & valid & $|C|>T$ & & score & emb. & none & valid \\
\midrule
 & 1 & 12 & 9 & 2 & 0 & 7 & 2\% & 3 & 0 & 0 & 0 & 0 \\
EfficientNet-B0 & 2 & 12 & 10 & 7 & 3 & 1 & 25\% & 3 & 0 & 0 & 0 & 0 \\
 & 3 & 12 & 11 & 9 & 6 & 2 & 30\% & 3 & 0 & 0 & 0 & 0 \\
\midrule
 & 1 & 9 & 0 & 0 & 0 & 8 & 1\% & 6 & 0 & 2 & 0 & 1 \\
WRN-40-2 (AugMix) & 2 & 9 & 9 & 8 & 5 & 4 & 19\% & 6 & 4 & 6 & 1 & 0 \\
 & 3 & 9 & 9 & 9 & 7 & 4 & 23\% & 6 & 5 & 6 & 2 & 0 \\
\bottomrule
\end{tabular}
}
\end{table}

\begin{figure}[htbp]
\centering
\includegraphics[width=\linewidth]{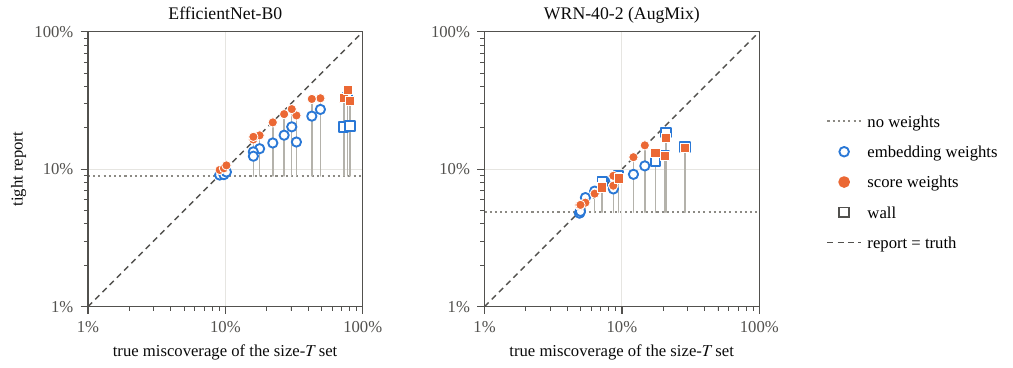}
\includegraphics[width=\linewidth]{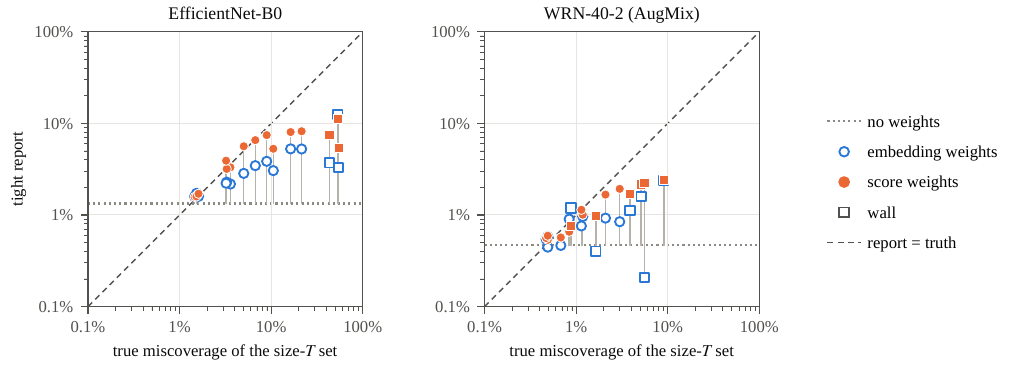}
\caption{CIFAR-10-C at $T=1$ (top) and $T=3$ (bottom), with the layout of Figure~\ref{fig:cifar}.}\label{fig:cifarT13}
\end{figure}

\begin{table}[htbp]
\centering\small
\caption{Dry Bean, $T=2$, same layout as Table~\ref{tab:drybean}. Largest standard error \NdbTTwomaxse.}\label{tab:drybeantwo}
\resizebox{\linewidth}{!}{\begin{tabular}{lcccccc}
\toprule
weights & hardness $\beta=1$ & hardness $\beta=2$ & easy $\beta=1$ & easy $\beta=2$ & benign $\beta=1$ & benign $\beta=2$ \\
\midrule
true miscoverage & 0.064 & 0.144 & 0.005 & 0.003 & 0.012 & 0.007 \\
\midrule
\multicolumn{7}{l}{\emph{identity report}} \\
true weights & +0.170 & +0.316 & +0.025 & +0.018 & +0.043 & +0.040 \\
features & +0.094 & +0.152 & +0.035 & +0.028 & +0.045 & +0.044 \\
features + Platt & +0.055 & +0.110 & +0.051 & +0.036 & +0.050 & +0.046 \\
sorted scores & +0.233 & +0.287 & +0.026 & +0.021 & +0.046 & +0.037 \\
\multicolumn{7}{l}{\emph{tight report}} \\
true weights & $-$0.003 & $-$0.021 & 0.000 & 0.000 & 0.000 & 0.000 \\
features & $-$0.027 & $-$0.061 & +0.002 & +0.002 & $-$0.001 & 0.000 \\
features + Platt & $-$0.037 & $-$0.074 & +0.007 & +0.005 & +0.001 & +0.002 \\
sorted scores & +0.015 & $-$0.036 & $-$0.001 & $-$0.001 & $-$0.001 & $-$0.001 \\
\midrule
ATC-NE & +0.120 & +0.384 & $-$0.004 & $-$0.003 & $-$0.003 & $-$0.004 \\
\bottomrule
\end{tabular}
}
\end{table}

\paragraph{ATC on Dry Bean.} As on CIFAR-10-C, the threshold of ATC-NE is the confidence value below which the share of
calibration points equals their miss rate. Here it is fitted on the $500$ calibration beans of a draw, and the estimate
is the share of the $1{,}000$ unlabeled test beans below it; its gaps are the last rows of Table~\ref{tab:drybean} and
Table~\ref{tab:drybeantwo}. Under the hardness shift ATC-NE overstates the miscoverage in all four settings, and under the easy and
benign shifts it understates it. The hardness tilt is the model's entropy, the negative of the confidence that ATC-NE thresholds, so this
shift moves weight toward low confidence. ATC counts every bean below its threshold as a miss, but in the pool only
\Ndbatcbelowmiss\% of the beans below the threshold at $T=1$ are misses, so such a shift is overcounted. The overstatement
remains under other choices. At $T=1$ the gaps under the hardness shift are \NdbhardnessOneTOneATCMC{} and
\NdbhardnessTwoTOneATCMC{} with the maximum probability as the confidence, \NdbhardnessOneTOneATCTS{} and
\NdbhardnessTwoTOneATCTS{} when the probabilities are first calibrated by temperature scaling on the calibration beans,
and \NdbhardnessOneTOneATCcode{} and \NdbhardnessTwoTOneATCcode{} with the threshold of the authors' code, which is the next
lower confidence value among the calibration points.

For each CIFAR-10-C cell the tables below give the embedding AUC ($^*$: wall), the true miscoverage of the size-$T$
set, the mean gap over $60$ draws (standard error in parentheses) of the identity, transformed ($\hat\alpha^{\rm tr}$)
and tight reports with score weights, of the identity and tight reports with embedding weights, and of ATC-NE, and, for
WSCP with score weights, its level $\alpha$, its mean miscoverage and the mean share of test images whose set has more
than $T$ labels. A gap that rounds to $0.000$ keeps its sign.

\begin{table}[htbp]
\centering\scriptsize
\caption{CIFAR-10-C, EfficientNet-B0, $T=1$.}
\resizebox{\linewidth}{!}{\begin{tabular}{lcc|ccc|cc|c|ccc}
\toprule
cell & AUC & truth & \multicolumn{3}{c|}{score weights: gap} & \multicolumn{2}{c|}{emb.\ weights: gap} & ATC-NE & \multicolumn{3}{c}{WSCP (score weights)} \\
 & & & identity & transf. & tight & identity & tight & gap & $\alpha$ & miscov. & $|C|>T$ \\
\midrule
Gaussian noise 1 & 0.909 & 0.329 & $-$0.013\,(0.007) & $-$0.125\,(0.004) & $-$0.083\,(0.005) & $-$0.159\,(0.007) & $-$0.171\,(0.006) & $-$0.073\,(0.003) & 0.256 & 0.327 & 2\% \\
Gaussian noise 3 & 0.985$^*$ & 0.727 & $-$0.230\,(0.008) & $-$0.407\,(0.006) & $-$0.397\,(0.006) & $-$0.477\,(0.012) & $-$0.525\,(0.010) & $-$0.336\,(0.004) & 0.343 & 0.728 & 3\% \\
Gaussian noise 5 & 0.993$^*$ & 0.809 & $-$0.318\,(0.007) & $-$0.497\,(0.005) & $-$0.493\,(0.004) & $-$0.541\,(0.013) & $-$0.603\,(0.010) & $-$0.438\,(0.004) & 0.328 & 0.808 & 3\% \\
Defocus blur 1 & 0.513 & 0.091 & +0.001\,(0.003) & $-$0.032\,(0.001) & +0.008\,(0.002) & +0.003\,(0.009) & $-$0.000\,(0.002) & $-$0.001\,(0.001) & 0.102 & 0.092 & 1\% \\
Defocus blur 3 & 0.769 & 0.177 & +0.032\,(0.006) & $-$0.046\,(0.003) & $-$0.001\,(0.003) & $-$0.031\,(0.004) & $-$0.037\,(0.003) & $-$0.014\,(0.002) & 0.182 & 0.177 & 1\% \\
Defocus blur 5 & 0.941 & 0.492 & +0.014\,(0.009) & $-$0.174\,(0.006) & $-$0.164\,(0.006) & $-$0.117\,(0.010) & $-$0.220\,(0.007) & $-$0.113\,(0.004) & 0.340 & 0.492 & 3\% \\
Fog 1 & 0.522 & 0.098 & $-$0.005\,(0.003) & $-$0.036\,(0.002) & +0.004\,(0.002) & +0.000\,(0.010) & $-$0.006\,(0.002) & $-$0.003\,(0.001) & 0.104 & 0.097 & 1\% \\
Fog 3 & 0.733 & 0.160 & +0.017\,(0.004) & $-$0.044\,(0.003) & +0.005\,(0.003) & $-$0.025\,(0.007) & $-$0.026\,(0.003) & $-$0.003\,(0.002) & 0.169 & 0.160 & 1\% \\
Fog 5 & 0.915 & 0.426 & +0.049\,(0.008) & $-$0.122\,(0.006) & $-$0.101\,(0.006) & $-$0.119\,(0.008) & $-$0.183\,(0.007) & $-$0.069\,(0.003) & 0.338 & 0.424 & 3\% \\
Contrast 1 & 0.596 & 0.102 & $-$0.006\,(0.002) & $-$0.037\,(0.001) & +0.005\,(0.002) & $-$0.007\,(0.008) & $-$0.006\,(0.002) & $-$0.005\,(0.001) & 0.109 & 0.103 & 1\% \\
Contrast 3 & 0.912 & 0.304 & +0.069\,(0.008) & $-$0.067\,(0.005) & $-$0.031\,(0.006) & $-$0.067\,(0.008) & $-$0.101\,(0.006) & $-$0.019\,(0.003) & 0.282 & 0.304 & 2\% \\
Contrast 5 & 1.000$^*$ & 0.775 & $-$0.114\,(0.011) & $-$0.355\,(0.009) & $-$0.399\,(0.009) & $-$0.249\,(0.025) & $-$0.442\,(0.025) & $-$0.113\,(0.005) & 0.410 & 0.776 & 6\% \\
JPEG 1 & 0.697 & 0.160 & +0.029\,(0.006) & $-$0.038\,(0.003) & +0.012\,(0.004) & $-$0.028\,(0.007) & $-$0.036\,(0.002) & $-$0.000\,(0.002) & 0.179 & 0.159 & 1\% \\
JPEG 3 & 0.763 & 0.221 & +0.053\,(0.008) & $-$0.048\,(0.005) & $-$0.002\,(0.006) & $-$0.055\,(0.004) & $-$0.066\,(0.002) & $-$0.002\,(0.002) & 0.227 & 0.220 & 2\% \\
JPEG 5 & 0.800 & 0.268 & +0.066\,(0.009) & $-$0.058\,(0.005) & $-$0.016\,(0.006) & $-$0.064\,(0.005) & $-$0.091\,(0.003) & $-$0.009\,(0.003) & 0.264 & 0.265 & 2\% \\
\bottomrule
\end{tabular}
}
\end{table}
\begin{table}[htbp]
\centering\scriptsize
\caption{CIFAR-10-C, EfficientNet-B0, $T=2$.}
\resizebox{\linewidth}{!}{\begin{tabular}{lcc|ccc|cc|c|ccc}
\toprule
cell & AUC & truth & \multicolumn{3}{c|}{score weights: gap} & \multicolumn{2}{c|}{emb.\ weights: gap} & ATC-NE & \multicolumn{3}{c}{WSCP (score weights)} \\
 & & & identity & transf. & tight & identity & tight & gap & $\alpha$ & miscov. & $|C|>T$ \\
\midrule
Gaussian noise 1 & 0.909 & 0.179 & $-$0.000\,(0.004) & $-$0.104\,(0.003) & $-$0.074\,(0.005) & $-$0.076\,(0.004) & $-$0.122\,(0.003) & $-$0.067\,(0.003) & 0.043 & 0.141 & 26\% \\
Gaussian noise 3 & 0.985$^*$ & 0.577 & $-$0.295\,(0.007) & $-$0.457\,(0.004) & $-$0.430\,(0.006) & $-$0.421\,(0.007) & $-$0.504\,(0.006) & $-$0.398\,(0.005) & 0.087 & 0.567 & 27\% \\
Gaussian noise 5 & 0.993$^*$ & 0.690 & $-$0.437\,(0.005) & $-$0.593\,(0.003) & $-$0.572\,(0.004) & $-$0.522\,(0.007) & $-$0.621\,(0.007) & $-$0.547\,(0.004) & 0.076 & 0.682 & 25\% \\
Defocus blur 1 & 0.513 & 0.032 & +0.023\,(0.002) & $-$0.006\,(0.001) & +0.003\,(0.001) & +0.026\,(0.006) & +0.001\,(0.002) & +0.001\,(0.001) & 0.010 & 0.010 & 23\% \\
Defocus blur 3 & 0.769 & 0.079 & +0.038\,(0.004) & $-$0.026\,(0.003) & $-$0.011\,(0.003) & +0.003\,(0.002) & $-$0.032\,(0.002) & $-$0.013\,(0.002) & 0.025 & 0.040 & 25\% \\
Defocus blur 5 & 0.941 & 0.334 & $-$0.053\,(0.007) & $-$0.212\,(0.005) & $-$0.188\,(0.006) & $-$0.128\,(0.007) & $-$0.231\,(0.005) & $-$0.152\,(0.004) & 0.087 & 0.303 & 25\% \\
Fog 1 & 0.522 & 0.032 & +0.022\,(0.001) & $-$0.007\,(0.001) & +0.004\,(0.001) & +0.029\,(0.006) & +0.002\,(0.002) & +0.002\,(0.001) & 0.010 & 0.010 & 23\% \\
Fog 3 & 0.733 & 0.066 & +0.034\,(0.003) & $-$0.023\,(0.002) & $-$0.002\,(0.003) & +0.013\,(0.005) & $-$0.021\,(0.002) & $-$0.005\,(0.002) & 0.021 & 0.030 & 25\% \\
Fog 5 & 0.915 & 0.263 & +0.009\,(0.006) & $-$0.146\,(0.004) & $-$0.118\,(0.006) & $-$0.089\,(0.005) & $-$0.170\,(0.004) & $-$0.088\,(0.004) & 0.081 & 0.234 & 25\% \\
Contrast 1 & 0.596 & 0.035 & +0.021\,(0.001) & $-$0.011\,(0.001) & +0.001\,(0.001) & +0.022\,(0.005) & $-$0.003\,(0.002) & $-$0.001\,(0.001) & 0.010 & 0.011 & 23\% \\
Contrast 3 & 0.912 & 0.153 & +0.061\,(0.006) & $-$0.057\,(0.004) & $-$0.024\,(0.006) & $-$0.013\,(0.005) & $-$0.076\,(0.004) & $-$0.015\,(0.003) & 0.057 & 0.121 & 25\% \\
Contrast 5 & 1.000$^*$ & 0.647 & $-$0.257\,(0.010) & $-$0.463\,(0.007) & $-$0.458\,(0.008) & $-$0.254\,(0.019) & $-$0.485\,(0.020) & $-$0.312\,(0.007) & 0.195 & 0.647 & 22\% \\
JPEG 1 & 0.697 & 0.064 & +0.046\,(0.004) & $-$0.018\,(0.002) & +0.008\,(0.004) & +0.015\,(0.004) & $-$0.017\,(0.002) & +0.006\,(0.002) & 0.022 & 0.030 & 25\% \\
JPEG 3 & 0.763 & 0.095 & +0.062\,(0.005) & $-$0.027\,(0.003) & +0.004\,(0.005) & +0.001\,(0.002) & $-$0.036\,(0.002) & +0.006\,(0.002) & 0.035 & 0.061 & 24\% \\
JPEG 5 & 0.800 & 0.128 & +0.063\,(0.006) & $-$0.045\,(0.004) & $-$0.014\,(0.006) & $-$0.012\,(0.003) & $-$0.059\,(0.002) & $-$0.009\,(0.002) & 0.046 & 0.092 & 25\% \\
\bottomrule
\end{tabular}
}
\end{table}
\begin{table}[htbp]
\centering\scriptsize
\caption{CIFAR-10-C, EfficientNet-B0, $T=3$.}
\resizebox{\linewidth}{!}{\begin{tabular}{lcc|ccc|cc|c|ccc}
\toprule
cell & AUC & truth & \multicolumn{3}{c|}{score weights: gap} & \multicolumn{2}{c|}{emb.\ weights: gap} & ATC-NE & \multicolumn{3}{c}{WSCP (score weights)} \\
 & & & identity & transf. & tight & identity & tight & gap & $\alpha$ & miscov. & $|C|>T$ \\
\midrule
Gaussian noise 1 & 0.909 & 0.105 & +0.028\,(0.003) & $-$0.067\,(0.002) & $-$0.053\,(0.003) & $-$0.023\,(0.003) & $-$0.075\,(0.002) & $-$0.048\,(0.002) & 0.016 & 0.075 & 32\% \\
Gaussian noise 3 & 0.985$^*$ & 0.431 & $-$0.228\,(0.005) & $-$0.370\,(0.003) & $-$0.356\,(0.005) & $-$0.305\,(0.005) & $-$0.394\,(0.004) & $-$0.340\,(0.004) & 0.035 & 0.442 & 31\% \\
Gaussian noise 5 & 0.993$^*$ & 0.546 & $-$0.370\,(0.003) & $-$0.500\,(0.002) & $-$0.493\,(0.003) & $-$0.408\,(0.006) & $-$0.513\,(0.005) & $-$0.482\,(0.003) & 0.030 & 0.547 & 31\% \\
Defocus blur 1 & 0.513 & 0.015 & +0.029\,(0.001) & $-$0.002\,(0.001) & +0.001\,(0.001) & +0.033\,(0.005) & +0.001\,(0.002) & +0.000\,(0.001) & 0.006 & 0.005 & 28\% \\
Defocus blur 3 & 0.769 & 0.036 & +0.053\,(0.003) & $-$0.009\,(0.002) & $-$0.003\,(0.002) & +0.029\,(0.002) & $-$0.014\,(0.001) & $-$0.003\,(0.001) & 0.011 & 0.018 & 30\% \\
Defocus blur 5 & 0.941 & 0.215 & $-$0.009\,(0.005) & $-$0.148\,(0.003) & $-$0.133\,(0.005) & $-$0.059\,(0.005) & $-$0.163\,(0.004) & $-$0.114\,(0.003) & 0.032 & 0.205 & 30\% \\
Fog 1 & 0.522 & 0.015 & +0.028\,(0.001) & $-$0.003\,(0.001) & +0.001\,(0.001) & +0.034\,(0.005) & +0.002\,(0.002) & $-$0.000\,(0.001) & 0.005 & 0.005 & 28\% \\
Fog 3 & 0.733 & 0.033 & +0.045\,(0.002) & $-$0.011\,(0.001) & $-$0.001\,(0.002) & +0.031\,(0.004) & $-$0.010\,(0.001) & $-$0.001\,(0.001) & 0.008 & 0.013 & 30\% \\
Fog 5 & 0.915 & 0.163 & +0.036\,(0.005) & $-$0.099\,(0.003) & $-$0.083\,(0.005) & $-$0.030\,(0.003) & $-$0.110\,(0.003) & $-$0.059\,(0.003) & 0.030 & 0.141 & 30\% \\
Contrast 1 & 0.596 & 0.016 & +0.029\,(0.001) & $-$0.003\,(0.001) & +0.001\,(0.001) & +0.031\,(0.004) & $-$0.000\,(0.001) & $-$0.000\,(0.001) & 0.005 & 0.005 & 29\% \\
Contrast 3 & 0.912 & 0.089 & +0.073\,(0.004) & $-$0.035\,(0.003) & $-$0.015\,(0.004) & +0.020\,(0.004) & $-$0.051\,(0.003) & $-$0.012\,(0.003) & 0.020 & 0.060 & 30\% \\
Contrast 5 & 1.000$^*$ & 0.535 & $-$0.236\,(0.007) & $-$0.419\,(0.005) & $-$0.423\,(0.007) & $-$0.182\,(0.017) & $-$0.410\,(0.017) & $-$0.339\,(0.007) & 0.091 & 0.536 & 32\% \\
JPEG 1 & 0.697 & 0.032 & +0.052\,(0.003) & $-$0.007\,(0.002) & +0.007\,(0.003) & +0.031\,(0.004) & $-$0.010\,(0.001) & +0.005\,(0.001) & 0.008 & 0.014 & 29\% \\
JPEG 3 & 0.763 & 0.050 & +0.071\,(0.004) & $-$0.012\,(0.002) & +0.006\,(0.004) & +0.026\,(0.002) & $-$0.022\,(0.001) & +0.006\,(0.002) & 0.012 & 0.027 & 30\% \\
JPEG 5 & 0.800 & 0.067 & +0.078\,(0.005) & $-$0.020\,(0.003) & $-$0.001\,(0.005) & +0.024\,(0.002) & $-$0.033\,(0.002) & +0.001\,(0.002) & 0.015 & 0.042 & 30\% \\
\bottomrule
\end{tabular}
}
\end{table}
\begin{table}[htbp]
\centering\scriptsize
\caption{CIFAR-10-C, WideResNet-40-2 (AugMix), $T=1$.}
\resizebox{\linewidth}{!}{\begin{tabular}{lcc|ccc|cc|c|ccc}
\toprule
cell & AUC & truth & \multicolumn{3}{c|}{score weights: gap} & \multicolumn{2}{c|}{emb.\ weights: gap} & ATC-NE & \multicolumn{3}{c}{WSCP (score weights)} \\
 & & & identity & transf. & tight & identity & tight & gap & $\alpha$ & miscov. & $|C|>T$ \\
\midrule
Gaussian noise 1 & 0.927 & 0.086 & $-$0.026\,(0.003) & $-$0.045\,(0.002) & $-$0.011\,(0.002) & $-$0.017\,(0.003) & $-$0.003\,(0.003) & $-$0.005\,(0.001) & 0.079 & 0.085 & 1\% \\
Gaussian noise 3 & 0.993$^*$ & 0.204 & $-$0.081\,(0.005) & $-$0.124\,(0.003) & $-$0.080\,(0.004) & $-$0.058\,(0.008) & $-$0.079\,(0.009) & $-$0.058\,(0.002) & 0.136 & 0.203 & 1\% \\
Gaussian noise 5 & 0.997$^*$ & 0.288 & $-$0.135\,(0.006) & $-$0.187\,(0.004) & $-$0.145\,(0.005) & $-$0.082\,(0.012) & $-$0.143\,(0.013) & $-$0.109\,(0.002) & 0.156 & 0.286 & 2\% \\
Defocus blur 1 & 0.529 & 0.049 & $-$0.013\,(0.001) & $-$0.023\,(0.001) & +0.005\,(0.002) & $-$0.018\,(0.001) & $-$0.001\,(0.001) & +0.001\,(0.001) & 0.055 & 0.049 & 0\% \\
Defocus blur 3 & 0.834 & 0.054 & $-$0.014\,(0.001) & $-$0.026\,(0.001) & +0.003\,(0.002) & $-$0.010\,(0.001) & +0.008\,(0.002) & +0.001\,(0.001) & 0.058 & 0.055 & 0\% \\
Defocus blur 5 & 0.977$^*$ & 0.095 & $-$0.022\,(0.003) & $-$0.044\,(0.002) & $-$0.009\,(0.003) & $-$0.010\,(0.004) & $-$0.005\,(0.006) & $-$0.000\,(0.002) & 0.088 & 0.095 & 1\% \\
Fog 1 & 0.598 & 0.049 & $-$0.012\,(0.001) & $-$0.023\,(0.001) & +0.006\,(0.001) & $-$0.017\,(0.001) & $-$0.001\,(0.001) & +0.002\,(0.001) & 0.056 & 0.049 & 0\% \\
Fog 3 & 0.888 & 0.063 & $-$0.015\,(0.001) & $-$0.029\,(0.001) & +0.003\,(0.002) & $-$0.004\,(0.003) & +0.006\,(0.003) & $-$0.000\,(0.001) & 0.068 & 0.063 & 0\% \\
Fog 5 & 0.987$^*$ & 0.175 & $-$0.042\,(0.005) & $-$0.086\,(0.003) & $-$0.043\,(0.004) & $-$0.039\,(0.007) & $-$0.061\,(0.009) & $-$0.037\,(0.002) & 0.139 & 0.175 & 1\% \\
Contrast 1 & 0.684 & 0.050 & $-$0.013\,(0.001) & $-$0.023\,(0.001) & +0.005\,(0.002) & $-$0.016\,(0.001) & $-$0.001\,(0.001) & +0.003\,(0.001) & 0.056 & 0.050 & 0\% \\
Contrast 3 & 0.961$^*$ & 0.072 & $-$0.016\,(0.002) & $-$0.033\,(0.001) & +0.002\,(0.002) & +0.009\,(0.005) & +0.010\,(0.005) & $-$0.001\,(0.001) & 0.076 & 0.071 & 1\% \\
Contrast 5 & 0.999$^*$ & 0.209 & $-$0.015\,(0.008) & $-$0.082\,(0.005) & $-$0.040\,(0.006) & +0.084\,(0.014) & $-$0.024\,(0.021) & $-$0.015\,(0.002) & 0.184 & 0.208 & 2\% \\
JPEG 1 & 0.746 & 0.087 & $-$0.014\,(0.003) & $-$0.038\,(0.002) & +0.003\,(0.003) & $-$0.033\,(0.002) & $-$0.015\,(0.002) & $-$0.001\,(0.001) & 0.094 & 0.086 & 1\% \\
JPEG 3 & 0.844 & 0.121 & $-$0.007\,(0.005) & $-$0.047\,(0.003) & +0.001\,(0.005) & $-$0.047\,(0.003) & $-$0.030\,(0.003) & $-$0.006\,(0.002) & 0.130 & 0.120 & 1\% \\
JPEG 5 & 0.880 & 0.147 & $-$0.001\,(0.006) & $-$0.049\,(0.004) & +0.002\,(0.005) & $-$0.057\,(0.004) & $-$0.041\,(0.005) & $-$0.006\,(0.002) & 0.157 & 0.147 & 1\% \\
\bottomrule
\end{tabular}
}
\end{table}
\begin{table}[htbp]
\centering\scriptsize
\caption{CIFAR-10-C, WideResNet-40-2 (AugMix), $T=2$.}
\resizebox{\linewidth}{!}{\begin{tabular}{lcc|ccc|cc|c|ccc}
\toprule
cell & AUC & truth & \multicolumn{3}{c|}{score weights: gap} & \multicolumn{2}{c|}{emb.\ weights: gap} & ATC-NE & \multicolumn{3}{c}{WSCP (score weights)} \\
 & & & identity & transf. & tight & identity & tight & gap & $\alpha$ & miscov. & $|C|>T$ \\
\midrule
Gaussian noise 1 & 0.927 & 0.029 & +0.010\,(0.001) & $-$0.016\,(0.001) & $-$0.008\,(0.001) & +0.021\,(0.002) & $-$0.001\,(0.002) & $-$0.005\,(0.001) & 0.006 & 0.010 & 20\% \\
Gaussian noise 3 & 0.993$^*$ & 0.093 & $-$0.016\,(0.003) & $-$0.064\,(0.002) & $-$0.053\,(0.003) & +0.019\,(0.005) & $-$0.041\,(0.007) & $-$0.043\,(0.002) & 0.020 & 0.075 & 20\% \\
Gaussian noise 5 & 0.997$^*$ & 0.151 & $-$0.056\,(0.004) & $-$0.115\,(0.002) & $-$0.103\,(0.003) & +0.017\,(0.010) & $-$0.092\,(0.008) & $-$0.088\,(0.002) & 0.028 & 0.144 & 20\% \\
Defocus blur 1 & 0.529 & 0.013 & +0.012\,(0.001) & $-$0.005\,(0.000) & +0.000\,(0.001) & +0.009\,(0.000) & $-$0.002\,(0.000) & +0.001\,(0.001) & 0.004 & 0.003 & 18\% \\
Defocus blur 3 & 0.834 & 0.017 & +0.009\,(0.001) & $-$0.008\,(0.000) & $-$0.004\,(0.001) & +0.013\,(0.001) & $-$0.004\,(0.001) & $-$0.003\,(0.001) & 0.004 & 0.004 & 18\% \\
Defocus blur 5 & 0.977$^*$ & 0.037 & +0.008\,(0.002) & $-$0.022\,(0.001) & $-$0.016\,(0.001) & +0.023\,(0.002) & $-$0.022\,(0.003) & $-$0.011\,(0.001) & 0.010 & 0.015 & 18\% \\
Fog 1 & 0.598 & 0.014 & +0.011\,(0.001) & $-$0.006\,(0.000) & $-$0.000\,(0.001) & +0.009\,(0.000) & $-$0.002\,(0.001) & $-$0.001\,(0.001) & 0.004 & 0.003 & 18\% \\
Fog 3 & 0.888 & 0.019 & +0.012\,(0.001) & $-$0.009\,(0.000) & $-$0.003\,(0.001) & +0.022\,(0.002) & $-$0.004\,(0.001) & $-$0.003\,(0.001) & 0.004 & 0.004 & 19\% \\
Fog 5 & 0.987$^*$ & 0.076 & +0.005\,(0.003) & $-$0.048\,(0.001) & $-$0.037\,(0.002) & +0.023\,(0.005) & $-$0.059\,(0.002) & $-$0.032\,(0.002) & 0.015 & 0.045 & 22\% \\
Contrast 1 & 0.684 & 0.014 & +0.011\,(0.001) & $-$0.005\,(0.000) & $-$0.000\,(0.001) & +0.010\,(0.001) & $-$0.002\,(0.001) & $-$0.000\,(0.001) & 0.004 & 0.003 & 18\% \\
Contrast 3 & 0.961$^*$ & 0.022 & +0.014\,(0.001) & $-$0.010\,(0.001) & $-$0.003\,(0.001) & +0.037\,(0.003) & $-$0.004\,(0.003) & $-$0.002\,(0.001) & 0.005 & 0.006 & 19\% \\
Contrast 5 & 0.999$^*$ & 0.099 & +0.021\,(0.005) & $-$0.054\,(0.003) & $-$0.042\,(0.004) & +0.149\,(0.010) & $-$0.079\,(0.008) & $-$0.022\,(0.003) & 0.031 & 0.070 & 19\% \\
JPEG 1 & 0.746 & 0.028 & +0.019\,(0.002) & $-$0.011\,(0.001) & $-$0.001\,(0.002) & +0.008\,(0.001) & $-$0.009\,(0.001) & $-$0.000\,(0.001) & 0.005 & 0.008 & 20\% \\
JPEG 3 & 0.844 & 0.044 & +0.027\,(0.003) & $-$0.019\,(0.002) & $-$0.006\,(0.003) & +0.004\,(0.002) & $-$0.021\,(0.002) & $-$0.005\,(0.001) & 0.009 & 0.015 & 21\% \\
JPEG 5 & 0.880 & 0.059 & +0.031\,(0.004) & $-$0.026\,(0.002) & $-$0.010\,(0.003) & $-$0.001\,(0.002) & $-$0.033\,(0.002) & $-$0.010\,(0.002) & 0.013 & 0.028 & 21\% \\
\bottomrule
\end{tabular}
}
\end{table}
\begin{table}[htbp]
\centering\scriptsize
\caption{CIFAR-10-C, WideResNet-40-2 (AugMix), $T=3$.}
\resizebox{\linewidth}{!}{\begin{tabular}{lcc|ccc|cc|c|ccc}
\toprule
cell & AUC & truth & \multicolumn{3}{c|}{score weights: gap} & \multicolumn{2}{c|}{emb.\ weights: gap} & ATC-NE & \multicolumn{3}{c}{WSCP (score weights)} \\
 & & & identity & transf. & tight & identity & tight & gap & $\alpha$ & miscov. & $|C|>T$ \\
\midrule
Gaussian noise 1 & 0.927 & 0.012 & +0.021\,(0.001) & $-$0.004\,(0.000) & $-$0.002\,(0.001) & +0.032\,(0.002) & $-$0.002\,(0.002) & $-$0.000\,(0.001) & 0.003 & 0.005 & 24\% \\
Gaussian noise 3 & 0.993$^*$ & 0.051 & +0.011\,(0.002) & $-$0.033\,(0.001) & $-$0.030\,(0.002) & +0.050\,(0.004) & $-$0.035\,(0.004) & $-$0.027\,(0.002) & 0.009 & 0.042 & 25\% \\
Gaussian noise 5 & 0.997$^*$ & 0.091 & $-$0.015\,(0.003) & $-$0.069\,(0.001) & $-$0.067\,(0.002) & +0.063\,(0.008) & $-$0.067\,(0.006) & $-$0.061\,(0.002) & 0.013 & 0.094 & 25\% \\
Defocus blur 1 & 0.529 & 0.005 & +0.016\,(0.001) & $-$0.000\,(0.000) & +0.000\,(0.000) & +0.014\,(0.000) & $-$0.000\,(0.000) & +0.001\,(0.001) & 0.002 & 0.002 & 21\% \\
Defocus blur 3 & 0.834 & 0.007 & +0.016\,(0.001) & $-$0.002\,(0.000) & $-$0.001\,(0.000) & +0.020\,(0.001) & $-$0.002\,(0.000) & $-$0.001\,(0.001) & 0.002 & 0.002 & 22\% \\
Defocus blur 5 & 0.977$^*$ & 0.016 & +0.021\,(0.001) & $-$0.007\,(0.001) & $-$0.007\,(0.001) & +0.038\,(0.002) & $-$0.012\,(0.001) & $-$0.005\,(0.001) & 0.005 & 0.006 & 24\% \\
Fog 1 & 0.598 & 0.005 & +0.017\,(0.000) & $-$0.000\,(0.000) & +0.001\,(0.000) & +0.015\,(0.000) & +0.001\,(0.000) & +0.001\,(0.001) & 0.002 & 0.002 & 21\% \\
Fog 3 & 0.888 & 0.008 & +0.018\,(0.001) & $-$0.003\,(0.000) & $-$0.002\,(0.001) & +0.027\,(0.001) & +0.001\,(0.001) & $-$0.000\,(0.001) & 0.002 & 0.002 & 22\% \\
Fog 5 & 0.987$^*$ & 0.039 & +0.026\,(0.002) & $-$0.024\,(0.001) & $-$0.022\,(0.002) & +0.049\,(0.004) & $-$0.028\,(0.002) & $-$0.019\,(0.002) & 0.007 & 0.022 & 26\% \\
Contrast 1 & 0.684 & 0.005 & +0.017\,(0.001) & $-$0.000\,(0.000) & +0.001\,(0.000) & +0.016\,(0.000) & +0.001\,(0.001) & +0.001\,(0.001) & 0.002 & 0.002 & 21\% \\
Contrast 3 & 0.961$^*$ & 0.009 & +0.021\,(0.001) & $-$0.003\,(0.000) & $-$0.001\,(0.001) & +0.043\,(0.002) & +0.003\,(0.002) & $-$0.000\,(0.001) & 0.002 & 0.003 & 23\% \\
Contrast 5 & 0.999$^*$ & 0.056 & +0.042\,(0.004) & $-$0.034\,(0.002) & $-$0.033\,(0.003) & +0.179\,(0.009) & $-$0.054\,(0.001) & $-$0.016\,(0.003) & 0.014 & 0.035 & 25\% \\
JPEG 1 & 0.746 & 0.011 & +0.028\,(0.002) & $-$0.003\,(0.001) & $-$0.000\,(0.002) & +0.020\,(0.001) & $-$0.004\,(0.001) & +0.002\,(0.001) & 0.003 & 0.004 & 24\% \\
JPEG 3 & 0.844 & 0.021 & +0.037\,(0.003) & $-$0.008\,(0.001) & $-$0.004\,(0.002) & +0.020\,(0.001) & $-$0.012\,(0.001) & $-$0.000\,(0.001) & 0.004 & 0.006 & 25\% \\
JPEG 5 & 0.880 & 0.030 & +0.043\,(0.003) & $-$0.014\,(0.001) & $-$0.011\,(0.002) & +0.019\,(0.002) & $-$0.021\,(0.001) & $-$0.006\,(0.002) & 0.005 & 0.011 & 26\% \\
\bottomrule
\end{tabular}
}
\end{table}

\paragraph{Non-degeneracy fails when the miscoverage is high.} By Remark~\ref{rem:nondeg}, non-degeneracy asks that the
threshold $\tau$ of every missed calibration point be positive and at least $\bar h$, the weighted miss rate times the
weighted mean threshold of the missed points. A missed point with a smaller threshold counts for less than one miss, so the tight
report is then below the weighted miss rate. Table~\ref{tab:nondeg} shows how often this happens on CIFAR-10-C. In our
runs it is rare. With score weights a degenerate fold occurs in \NcifoldTOnedegenerate{} of the \NcifoldTOnedraws{}
EfficientNet-B0 draws at $T=1$ and in no other draw, and changes the mean tight report of a setting by less than
\Nndscoremaxchange{} points, so the tight report is in effect the importance-weighted miss rate of the size-$T$ set.
With embedding weights it is more frequent and changes a setting's mean by less than one point. The last two
columns take the weights out of the question. They give the limit of the tight report when the calibration data come
from the corrupted images themselves, with no shift and no weights. By the proof of Theorem~\ref{thm:corrloo} this limit
is $\E_{\Pt}[m\min\{1,\tau/\E_{\Pt}[h]\}]$, and under the assumptions of that theorem exact weights give the same limit.
It is below the miscoverage in seven of the $15$ EfficientNet-B0 cells at $T=1$. Every cell in which it is below has a
miscoverage of at least \Nndfailmissmin\%, and the shortfall is up to \Nndlimitmaxshort{} points. In each of these
settings the weighted miss rate of our runs is below the miscoverage, so the estimated weights give the missed points
less weight than the corrupted images do. This lowers $\bar h$ and makes degenerate folds rarer in our runs than in this
limit. On Dry Bean a degenerate fold occurs only under
the hardness shift at $\beta=2$ and $T=1$, in $6$ of the $80$ draws with the true weights and in $1$ with score weights,
and changes the tight report by at most \Nnddbmaxchange.

\begin{table}[htbp]
\centering\small
\caption{Non-degeneracy on CIFAR-10-C. Score and embedding weights: number of the $900$ draws of a row ($15$ cells,
$60$ draws each) with a degenerate fold, and the largest change of a setting's mean tight report relative to the
weighted miss rate, in percentage points. Last two columns: the limit of the tight report when the calibration data
come from the corrupted images themselves, without weights, which is also its limit under exact weights; number of the
$15$ cells in which this limit is below the miscoverage, and the largest such shortfall in percentage points.}\label{tab:nondeg}
\begin{tabular}{lccccccc}
\toprule
 & & \multicolumn{2}{c}{score weights} & \multicolumn{2}{c}{embedding weights} & \multicolumn{2}{c}{corrupted calibration data} \\
\cmidrule(lr){3-4}\cmidrule(lr){5-6}\cmidrule(lr){7-8}
model & $T$ & draws & largest change & draws & largest change & cells & largest shortfall \\
\midrule
EfficientNet-B0 & 1 & 33 & 0.01 & 62 & 0.99 & 7 & 14.96 \\
 & 2 & 0 & 0.00 & 13 & 0.00 & 4 & 5.31 \\
 & 3 & 0 & 0.00 & 0 & 0.00 & 3 & 0.83 \\
\addlinespace
WRN-40-2 (AugMix) & 1 & 0 & 0.00 & 26 & 0.02 & 1 & 0.16 \\
 & 2 & 0 & 0.00 & 0 & 0.00 & 0 & 0.00 \\
 & 3 & 0 & 0.00 & 0 & 0.00 & 0 & 0.00 \\
\bottomrule
\end{tabular}

\end{table}

\paragraph{The binned reliability shift.} The clean images are binned by their top-$T$ softmax mass into $20$
equal-count bins, and the corrupted miss rate is predicted as the clean miss rate of each bin averaged over the bins of
the corrupted images. The error of this prediction, which uses the corrupted labels through the truth, estimates the negative of
the binned reliability shift. With both errors divided by the miscoverage, its correlation with the error of the tight
report with score weights is \Nreloldrelcorrnonwall{} (EfficientNet-B0) and \Nrelnewrelcorrnonwall{} (WideResNet)
without walls; with $10$, $20$ or $50$ bins the correlations lie between \Nreldivcorrmin{} and \Nreldivcorrmax{}, and
between \Nreldivcorrnwmin{} and \Nreldivcorrnwmax{} without walls. Without the division they are \NreloldBTwoZerocorr{}
and \NrelnewBTwoZerocorr{}, which partly reflects that larger miscoverages give larger errors, and \Nreloldnonwallcorr{}
and \Nrelnewnonwallcorr{} without walls. The binned prediction lies below the tight report with score weights in
\Nreloldbelow{} and \Nrelnewbelow{} of the \Nreloldnsettings{} settings of each model, by \Nreloldmeandiffpp{} and
\Nrelnewmeandiffpp{} points on average. At the walls the summed absolute binned shift is \Nrelnewwallrelsize{} of the
summed miscoverage for the WideResNet and \Nreloldwallrelsize{} for EfficientNet-B0.

\paragraph{The comparison does not depend on the domain classifier.} Section~\ref{sec:experiments} fits one domain
classifier, gradient boosting, on both the embedding and the sorted scores. To check that this choice does not favor the
sorted scores, we repeated the CIFAR-10-C runs on the same draws with three other domain classifiers
(Table~\ref{tab:robclf}). They are scikit-learn's \texttt{LogisticRegression} and \texttt{MLPClassifier} (one hidden
layer of $100$ units, early stopping), both on standardized features, and the gradient boosting of
Appendix~\ref{app:details} stopped early on a held-out fifth of its training sample. With every classifier, score weights
bring the tight report closer to the miscoverage than embedding weights in all \Ncifmaengroups{} groups of
Table~\ref{tab:main}. Gradient boosting also gives embedding weights their smallest mean absolute gap among the four
classifiers, with and without walls, and the score weights of the main text are closer than the embedding weights of
every classifier in all \Ncifmaengroups{} groups.

\begin{table}[htbp]
\centering\small
\caption{CIFAR-10-C: mean absolute gap of the tight report, in percentage points, with embedding and with score weights
from four domain classifiers fitted on the same draws. Means over the settings without walls or at the walls and their
$60$ draws. Last column: groups of Table~\ref{tab:main} in which score weights have the smaller mean absolute
gap.}\label{tab:robclf}
\begin{tabular}{lccccc}
\toprule
 & \multicolumn{2}{c}{no walls} & \multicolumn{2}{c}{walls} & score weights \\
\cmidrule(lr){2-3}\cmidrule(lr){4-5}
domain classifier & embedding & score & embedding & score & closer in \\
\midrule
Gradient boosting (main text) & 4.20 & 2.93 & 20.85 & 17.84 & 12 of 12 groups \\
Logistic regression & 5.40 & 3.37 & 23.13 & 17.83 & 12 of 12 groups \\
Multilayer perceptron & 4.77 & 3.45 & 22.14 & 17.98 & 12 of 12 groups \\
Gradient boosting, early stopping & 4.27 & 3.53 & 20.94 & 18.89 & 12 of 12 groups \\
\bottomrule
\end{tabular}

\end{table}

\paragraph{A looser weight clip leaves the comparison unchanged.} The clip of $\hat q$ to $[0.01,0.99]$ caps the weights
at $99$. Table~\ref{tab:robclip} gives the reports of the same fitted classifiers under two looser clips. Under every
clip, score weights are closer than embedding weights in all \Ncifmaengroups{} groups, and at the walls the tight report
stays more than $15$ points below the miscoverage on average. The clip does matter for the level of the report with score
weights. In a setting without walls at most \Nrobqabovemax\% of the calibration images have $\hat q>0.99$ with score
weights, but under a looser clip these few images take large weights, so the tight report rises on average and varies
more from draw to draw. Its mean gap moves toward zero while its mean absolute gap grows, so neither clip is better on
both counts. Without walls the report with embedding weights changes by less than $0.1$ points. On Dry Bean under the
hardness shift, the gap with feature weights changes by at most \Ndbclipfeatmaxchange{} and the gap with score weights
rises.

\begin{table}[htbp]
\centering\small
\caption{Sensitivity to the clip of the domain classifier's probability $\hat q$. The classifiers are fitted once and only
the clip changes. CIFAR-10-C: tight report over the settings without walls or at the walls and their $60$ draws; SD is
the standard deviation of the gap over the draws of a setting, averaged over the settings; ``closer in'' counts the groups
of Table~\ref{tab:main} in which score weights have the smaller mean absolute gap. Dry Bean: gap of the tight report as
in Table~\ref{tab:drybean}, means over $80$ draws.}\label{tab:robclip}
\resizebox{\linewidth}{!}{\begin{tabular}{lcccccccccccc}
\toprule
 & \multicolumn{8}{c}{CIFAR-10-C, tight report (percentage points)} & \multicolumn{4}{c}{Dry Bean, hardness, $T=1$} \\
\cmidrule(lr){2-9}\cmidrule(lr){10-13}
 & \multicolumn{3}{c}{score weights, no walls} & \multicolumn{2}{c}{embedding, no walls} & \multicolumn{2}{c}{walls, mean gap} & score & \multicolumn{2}{c}{feature weights} & \multicolumn{2}{c}{score weights} \\
\cmidrule(lr){2-4}\cmidrule(lr){5-6}\cmidrule(lr){7-8}\cmidrule(lr){10-11}\cmidrule(lr){12-13}
clip of $\hat q$ & mean gap & SD & $|\mathrm{gap}|$ & mean gap & $|\mathrm{gap}|$ & score & emb. & closer in & $\beta=1$ & $\beta=2$ & $\beta=1$ & $\beta=2$ \\
\midrule
$[0.01,\,0.99]$ (main text) & $-$1.72 & 2.20 & 2.93 & $-$3.79 & 4.20 & $-$17.6 & $-$19.6 & 12 of 12 & $-$0.081 & $-$0.101 & +0.042 & $-$0.006 \\
$[0.001,\,0.999]$ & $-$0.93 & 3.60 & 3.18 & $-$3.75 & 4.23 & $-$16.9 & $-$18.6 & 12 of 12 & $-$0.081 & $-$0.104 & +0.085 & +0.032 \\
$[10^{-6},\,1-10^{-6}]$ & $-$0.85 & 3.89 & 3.26 & $-$3.75 & 4.23 & $-$16.9 & $-$18.0 & 12 of 12 & $-$0.081 & $-$0.104 & +0.093 & +0.021 \\
\bottomrule
\end{tabular}
}
\end{table}

\paragraph{Fixing the level of WSCP in advance does not remove its excess.} WSCP chooses its level on the unlabeled test
sample of the draw, so its guarantee, which is for a level fixed in advance, does not strictly apply. To check whether
this choice explains why its miscoverage exceeds its level, we reran it with the level chosen in the previous draw,
which is fixed before the current draw (Table~\ref{tab:robwscp}). The excess changes by less than one point in every
row, and WSCP keeps its level in no more settings than before. The data-dependent level is therefore not what makes WSCP
exceed its level. With the true weights on Dry Bean, WSCP keeps its level under the same data-dependent choice
(Section~\ref{sec:experiments}), which points to the estimated weights.

\begin{table}[htbp]
\centering\small
\caption{WSCP with its level chosen as in the main text and with the level chosen in the previous draw. CIFAR-10-C: score
weights, settings without walls. Dry Bean: feature weights. Valid: settings in which the mean miscoverage is at most the
mean level. Excess: miscoverage minus level in percentage points, averaged over the settings of a row and their draws
($60$ on CIFAR-10-C, $80$ on Dry Bean; the first draw of a setting has no previous level).}\label{tab:robwscp}
\begin{tabular}{lcccccc}
\toprule
 & & & \multicolumn{2}{c}{level of the main text} & \multicolumn{2}{c}{level of the previous draw} \\
\cmidrule(lr){4-5}\cmidrule(lr){6-7}
 & $T$ & settings & valid & excess & valid & excess \\
\midrule
CIFAR-10-C, EfficientNet-B0 & 1 & 12 & 7 & +2.24 & 7 & +2.16 \\
 & 2 & 12 & 1 & +5.29 & 1 & +5.35 \\
 & 3 & 12 & 2 & +3.68 & 1 & +3.71 \\
\addlinespace
CIFAR-10-C, WRN-40-2 (AugMix) & 1 & 9 & 8 & $-$0.54 & 8 & $-$0.37 \\
 & 2 & 9 & 4 & +0.30 & 3 & +0.35 \\
 & 3 & 9 & 4 & +0.10 & 3 & +0.15 \\
\addlinespace
Dry Bean, hardness, $\beta=1$ & 1 & 1 & 0 & +9.67 & 0 & +10.29 \\
Dry Bean, hardness, $\beta=2$ & 1 & 1 & 0 & +16.41 & 0 & +17.25 \\
\bottomrule
\end{tabular}

\end{table}

\paragraph{A natural shift shows the same pattern.} CIFAR-10.1 was collected to match CIFAR-10, yet the miscoverage of
both models at least doubles on it (Table~\ref{tab:cifar101}). Without weights the tight report stays at the clean miss
rate and misses the whole rise. With each of the four domain classifiers of Table~\ref{tab:robclf}, score weights have a
smaller mean absolute gap than embedding weights and than no weights in all six settings, and the identity report with
score weights stays above the miscoverage at $T=2,3$. With logistic regression, the perceptron and early-stopped gradient
boosting, the tight report still understates, and its mean gap is within \Nnatrelmatch{} points of that of the binned
prediction, computed as above with CIFAR-10.1 in place of the corrupted images. That gap estimates the negative of the
reliability shift, the bias that score weights leave by Proposition~\ref{prop:decomp}. At the same top-$T$ mass the set
misses more often on CIFAR-10.1 than on clean images. The gradient boosting of the main text gives mean gaps closer to
zero. Since the reliability shift does not depend on the domain classifier, this difference is the other term of
Proposition~\ref{prop:decomp}, the error of its weights on the missed points. With it the mean absolute gap of the tight
report is the largest of the four in three of the six settings. At $T=1$ the identity report fails in seven of the eight combinations of model
and classifier, and ATC-NE has a smaller mean absolute gap than the tight report with any of the score weights in five
of the six settings.

\begin{table}[htbp]
\centering\small
\caption{CIFAR-10.1 as the test distribution, with calibration on clean CIFAR-10 test images. Miscoverage: miss rate of
the size-$T$ set over the $2{,}000$ CIFAR-10.1 images. Other columns: gap (report $-$ miscoverage) in percentage points,
means over $60$ draws. The tight report is given without weights, with embedding and score weights from the gradient
boosting of the main text, and with score weights from logistic regression (LR), the perceptron (MLP) and early-stopped
gradient boosting (GB-ES). The binned prediction uses all images and has no draws. The identity report uses the score
weights of the main text. The largest standard error over draws is \Nnatmaxse.}\label{tab:cifar101}
\resizebox{\linewidth}{!}{\begin{tabular}{lccccccccccc}
\toprule
 & & & \multicolumn{7}{c}{tight report} & & identity \\
\cmidrule(lr){4-10}
 & & & & \multicolumn{2}{c}{gradient boosting} & \multicolumn{3}{c}{score weights from} & binned & & report \\
\cmidrule(lr){5-6}\cmidrule(lr){7-9}
model & $T$ & miscoverage & no weights & embedding & score & LR & MLP & GB-ES & prediction & ATC-NE & (score) \\
\midrule
EfficientNet-B0 & 1 & 18.05\% & $-$8.99 & $-$5.02 & $-$0.68 & $-$3.40 & $-$3.35 & $-$3.73 & $-$3.55 & $-$0.66 & +0.90 \\
 & 2 & 7.00\% & $-$3.85 & $-$2.03 & +0.77 & $-$1.57 & $-$1.38 & $-$1.26 & $-$1.53 & +1.05 & +4.21 \\
 & 3 & 4.00\% & $-$2.59 & $-$1.65 & +0.02 & $-$1.52 & $-$1.49 & $-$1.25 & $-$1.46 & +0.56 & +4.78 \\
\addlinespace
WRN-40-2 (AugMix) & 1 & 11.30\% & $-$6.55 & $-$3.26 & $-$0.57 & $-$2.21 & $-$2.71 & $-$2.85 & $-$2.55 & $-$1.21 & $-$1.61 \\
 & 2 & 3.65\% & $-$2.46 & $-$1.45 & $-$0.30 & $-$1.09 & $-$1.14 & $-$1.26 & $-$1.22 & +0.34 & +2.44 \\
 & 3 & 1.65\% & $-$1.23 & $-$0.90 & $-$0.61 & $-$0.69 & $-$0.74 & $-$0.80 & $-$0.66 & +0.10 & +3.30 \\
\bottomrule
\end{tabular}
}
\end{table}

\section{The identity report at \texorpdfstring{$T=1$}{T=1}}\label{app:tone}

Table~\ref{tab:noshift} gives a no-shift control: $60$ draws (seed $0$) of $1{,}000$ calibration images from the
$10{,}000$ clean CIFAR-10 test images, the other $9{,}000$ images as test points, and no weights. It gives the exact
miscoverage of the size-$T$ set on the test points; the gap of the identity report with our clipped score and with the
unclipped cross-entropy of \citet{gauthier2025backward}, each against the miscoverage under the same score; and, for the
clipped score, the gap of the tight report and the terms of
\[
P(Y\notin\hat C)=\E[\tilde\alpha]\,\E\bigl[\Ind\{Y\notin\hat C\}/\tilde\alpha\bigr]
+\mathrm{Cov}\bigl(\tilde\alpha,\Ind\{Y\notin\hat C\}/\tilde\alpha\bigr)
\]
over the test points of all draws; the last column gives the covariance with the unclipped score. With the clipped score the level is below one at every test point, so $\hat C$ is
the size-$T$ set. With the clipped score the covariance is positive at every $T$, so the condition of
Theorem~\ref{thm:coverage} fails, and the
identity report stays above the miscoverage at $T=2,3$ because the middle factor is small. At $T=1$ the covariance is
positive without the clip as well. Since
$\Ind\{Y\notin\hat C\}/\tilde\alpha\le E(Y)\Ind\{Y\notin\hat C\}$, that factor is at most
$\E[E(Y)]-\E[E(Y)\Ind\{Y\in\hat C\}]$, so it is small when covered points carry much of the e-value's expectation. They
carry \NnosholdTOnecovered\% (EfficientNet-B0) and \NnoshnewTOnecovered\% (WideResNet) of it at $T=1$, where a covered
true label is the model's top prediction and its score is near zero, and at least \Nnoshcoveredmintwothree\% at $T=2,3$.
At $T=1$ the sign of the gap therefore depends on the scale of the levels, which is roughly proportional to the mean
calibration score. The unclipped cross-entropy of EfficientNet-B0 has a few extreme values that raise $\E[\tilde\alpha]$
at $T=1$ by \Nnoshclipalphashift{} points; the WideResNet has no score above the clip. For EfficientNet-B0 the clip ties scores at the top-$T$ boundary for \Ntiesold{} of
the \Ntiespairsold{} pairs of image and $T\in\{1,2,3\}$, which our results allow.

\begin{table}[htbp]
\centering\small
\caption{No shift: clean CIFAR-10, no weights, $n=1{,}000$, means over $60$ draws. Truth: miscoverage of the size-$T$
set with the clipped score. Gaps (report $-$ miscoverage under the same score) and covariances in percentage points.}\label{tab:noshift}
\resizebox{\linewidth}{!}{\begin{tabular}{llcccccccc}
\toprule
 & & & \multicolumn{2}{c}{identity gap} & & & & \multicolumn{2}{c}{covariance} \\
\cmidrule(lr){4-5}\cmidrule(lr){9-10}
model & $T$ & truth & clipped & unclipped & tight gap & $\E[\tilde\alpha]$ & $\E[\Ind\{Y\notin\hat C\}/\tilde\alpha]$ & clipped & unclipped \\
\midrule
EfficientNet-B0 & 1 & 8.91\% & $-$1.29 & +1.05 & $-$0.03 & 7.66\% & 0.67 & +3.80 & +3.19 \\
 & 2 & 3.05\% & +1.59 & +3.02 & +0.03 & 4.63\% & 0.43 & +1.06 & $-$0.14 \\
 & 3 & 1.40\% & +2.41 & +3.67 & $-$0.01 & 3.81\% & 0.26 & +0.41 & +0.52 \\
\midrule
WRN-40-2 (AugMix) & 1 & 4.93\% & $-$1.73 & $-$1.73 & $-$0.09 & 3.21\% & 0.71 & +2.65 & +2.65 \\
 & 2 & 1.24\% & +0.95 & +0.95 & 0.00 & 2.19\% & 0.30 & +0.59 & +0.59 \\
 & 3 & 0.48\% & +1.43 & +1.43 & $-$0.05 & 1.91\% & 0.14 & +0.21 & +0.21 \\
\bottomrule
\end{tabular}
}
\end{table}

\paragraph{The code of \citet{gauthier2025backward}.} In their notebook, $\tilde\alpha$ is found by bisection on $(0,1)$
with tolerance $0.005$, and coverage is counted on the set computed at the last level the bisection tries, which can lie
on the side where the set has more than $T$ labels. We replayed this search with their score on the draws above
($n=1{,}000$ and all $9{,}000$ test points of each draw; their runs use one test point per run). At $T=1$ that set has
more than one label for \Nbcpshareover\% of the test points (mean size \Nbcpmeansize), and its miscoverage is
\Nbcpmisnb\%, against \Nbcpmisexact\% for the size-$1$ set; $\E[\tilde\alpha]$ is \Nbcpealpha\%, above both.

\paragraph{The floor on the score binds only for EfficientNet-B0.} No probability of the WideResNet or of the Dry Bean
classifier is below $10^{-12}$, so their scores are never clipped. The identity report of the WideResNet still fails at
$T=1$, without shift (Table~\ref{tab:noshift}) and in all \NcifnewTOnenonwallncells{} of its settings without walls
under shift, so this failure does not come from the floor. For EfficientNet-B0 the floor does lower the identity report. It caps \Nunclipbound{} of the $10{,}000$ clean true-label
scores, which lowers the mean calibration score and with it every level. Table~\ref{tab:unclip} compares the two scores
under shift with the same score weights. Without the floor the identity report is valid in two more settings at $T=1$
and one more at $T=2$, all without walls, and its validity at the walls does not change. Under non-degeneracy the tight
report depends on the scores only through the size-$T$ set, and its mean absolute gap changes by at most
\Nuncliptightrowmax{} points in any row.

\begin{table}[htbp]
\centering\small
\caption{EfficientNet-B0 on CIFAR-10-C with the floor $10^{-12}$ on the probabilities in the score and without it, with
the same score weights. Each report is compared with the miscoverage under the same score. Gaps in percentage points,
averaged over the settings of a row and their $60$ draws.}\label{tab:unclip}
\begin{tabular}{lcccccccc}
\toprule
 & & & \multicolumn{4}{c}{identity report} & \multicolumn{2}{c}{tight report} \\
\cmidrule(lr){4-7}\cmidrule(lr){8-9}
 & & & \multicolumn{2}{c}{valid settings} & \multicolumn{2}{c}{mean gap} & \multicolumn{2}{c}{mean $|\mathrm{gap}|$} \\
\cmidrule(lr){4-5}\cmidrule(lr){6-7}\cmidrule(lr){8-9}
 & $T$ & settings & floor & no floor & floor & no floor & floor & no floor \\
\midrule
no walls & 1 & 12 & 9 & 11 & +2.55 & +4.51 & 4.74 & 4.74 \\
 & 2 & 12 & 10 & 11 & +2.71 & +3.91 & 4.78 & 4.78 \\
 & 3 & 12 & 11 & 11 & +4.29 & +5.27 & 3.48 & 3.49 \\
\addlinespace
walls & 1 & 3 & 0 & 0 & $-$22.06 & $-$22.01 & 42.98 & 42.98 \\
 & 2 & 3 & 0 & 0 & $-$32.96 & $-$32.93 & 48.67 & 48.67 \\
 & 3 & 3 & 0 & 0 & $-$27.78 & $-$27.76 & 42.39 & 42.39 \\
\bottomrule
\end{tabular}

\end{table}

\section{Calibrating the domain classifier}\label{app:calibration}

Table~\ref{tab:calibration} gives, for each Dry Bean shift and each way of fitting the weights, the ECE and log-loss of
the domain classifier on held-out data, the Spearman correlation between the estimated and the true weights on the
calibration beans, the mean squared error of the normalized weights relative to the true weights (MSE), and the gaps of
the identity report ($T=1$) and the tight report ($T=1,2$). The four calibration methods are, in the order of the table,
Platt scaling with one model, Platt scaling with five models, isotonic regression with five models, and early stopping
on held-out log-loss. Under the hardness shift at $\beta=1$ they lower the MSE from \NdbhardOnemsegb{} to between
\NdbhardOnemsecalmin{} and \NdbhardOnemsecalmax, and they lower it in \Ndbcalmselower{} of the \Ndbcalmsetotal{}
combinations of method and setting.

\begin{table}[htbp]
\centering\scriptsize
\caption{Dry Bean: calibrated domain classifiers. Means over $80$ draws; the largest standard error of a gap is
\Ndbcalmaxse.}\label{tab:calibration}
\resizebox{\linewidth}{!}{\begin{tabular}{llccccccc}
\toprule
shift & weights & ECE & log-loss & rank corr. & MSE & identity gap ($T{=}1$) & tight gap ($T{=}1$) & tight gap ($T{=}2$) \\
\midrule
hardness $\beta=1$ & features & 0.054 & 0.576 & 0.68 & 3.17 & +0.013 & $-$0.081 & $-$0.027 \\
hardness $\beta=1$ & + Platt (1 model) & 0.041 & 0.565 & 0.68 & 2.27 & $-$0.064 & $-$0.121 & $-$0.037 \\
hardness $\beta=1$ & + Platt (5 models) & 0.046 & 0.561 & 0.71 & 2.19 & $-$0.065 & $-$0.121 & $-$0.037 \\
hardness $\beta=1$ & + isotonic (5 models) & 0.038 & 0.561 & 0.70 & 2.81 & $-$0.030 & $-$0.104 & $-$0.031 \\
hardness $\beta=1$ & features, early stopping & 0.043 & 0.560 & 0.73 & 2.20 & $-$0.042 & $-$0.110 & $-$0.036 \\
hardness $\beta=1$ & logistic regression & 0.061 & 0.600 & 0.58 & 2.82 & $-$0.127 & $-$0.165 & $-$0.046 \\
hardness $\beta=1$ & sorted scores & 0.056 & 0.546 & 0.80 & 5.36 & +0.267 & +0.042 & +0.015 \\
\midrule
hardness $\beta=2$ & features & 0.049 & 0.364 & 0.76 & 21.29 & +0.106 & $-$0.101 & $-$0.061 \\
hardness $\beta=2$ & + Platt (1 model) & 0.039 & 0.357 & 0.76 & 20.14 & +0.046 & $-$0.129 & $-$0.074 \\
hardness $\beta=2$ & + Platt (5 models) & 0.046 & 0.354 & 0.77 & 19.03 & +0.027 & $-$0.139 & $-$0.082 \\
hardness $\beta=2$ & + isotonic (5 models) & 0.041 & 0.357 & 0.76 & 20.47 & +0.077 & $-$0.111 & $-$0.070 \\
hardness $\beta=2$ & features, early stopping & 0.051 & 0.364 & 0.75 & 20.07 & +0.024 & $-$0.145 & $-$0.080 \\
hardness $\beta=2$ & logistic regression & 0.071 & 0.523 & 0.56 & 25.81 & $-$0.245 & $-$0.292 & $-$0.122 \\
hardness $\beta=2$ & sorted scores & 0.043 & 0.301 & 0.86 & 15.81 & +0.340 & $-$0.006 & $-$0.036 \\
\midrule
easy $\beta=1$ & features & 0.084 & 0.649 & 0.61 & 2.05 & +0.028 & +0.010 & +0.002 \\
easy $\beta=1$ & + Platt (1 model) & 0.030 & 0.621 & 0.61 & 0.28 & +0.059 & +0.041 & +0.007 \\
easy $\beta=1$ & + Platt (5 models) & 0.032 & 0.620 & 0.64 & 0.26 & +0.059 & +0.042 & +0.007 \\
easy $\beta=1$ & + isotonic (5 models) & 0.030 & 0.620 & 0.63 & 0.41 & +0.049 & +0.032 & +0.006 \\
easy $\beta=1$ & features, early stopping & 0.040 & 0.619 & 0.67 & 0.35 & +0.048 & +0.032 & +0.006 \\
easy $\beta=1$ & logistic regression & 0.037 & 0.624 & 0.59 & 0.37 & +0.059 & +0.043 & +0.007 \\
easy $\beta=1$ & sorted scores & 0.080 & 0.625 & 0.77 & 1.47 & +0.014 & $-$0.003 & $-$0.001 \\
\midrule
easy $\beta=2$ & features & 0.069 & 0.590 & 0.74 & 2.82 & +0.027 & +0.009 & +0.002 \\
easy $\beta=2$ & + Platt (1 model) & 0.048 & 0.576 & 0.74 & 0.70 & +0.042 & +0.027 & +0.005 \\
easy $\beta=2$ & + Platt (5 models) & 0.051 & 0.573 & 0.76 & 0.53 & +0.043 & +0.029 & +0.005 \\
easy $\beta=2$ & + isotonic (5 models) & 0.037 & 0.569 & 0.76 & 0.92 & +0.037 & +0.021 & +0.004 \\
easy $\beta=2$ & features, early stopping & 0.049 & 0.570 & 0.76 & 0.62 & +0.038 & +0.025 & +0.004 \\
easy $\beta=2$ & logistic regression & 0.060 & 0.598 & 0.61 & 1.15 & +0.058 & +0.042 & +0.007 \\
easy $\beta=2$ & sorted scores & 0.064 & 0.551 & 0.89 & 2.22 & +0.017 & $-$0.001 & $-$0.001 \\
\midrule
benign $\beta=1$ & features & 0.073 & 0.608 & 0.80 & 3.30 & +0.009 & $-$0.011 & $-$0.001 \\
benign $\beta=1$ & + Platt (1 model) & 0.030 & 0.584 & 0.80 & 0.61 & +0.025 & +0.009 & +0.001 \\
benign $\beta=1$ & + Platt (5 models) & 0.033 & 0.580 & 0.83 & 0.50 & +0.025 & +0.009 & +0.002 \\
benign $\beta=1$ & + isotonic (5 models) & 0.034 & 0.584 & 0.82 & 1.45 & +0.021 & +0.004 & +0.001 \\
benign $\beta=1$ & features, early stopping & 0.034 & 0.565 & 0.93 & 0.35 & +0.017 & +0.003 & 0.000 \\
benign $\beta=1$ & logistic regression & 0.032 & 0.556 & 0.98 & 0.09 & +0.012 & $-$0.001 & 0.000 \\
benign $\beta=1$ & sorted scores & 0.073 & 0.623 & 0.61 & 1.81 & +0.016 & +0.001 & $-$0.001 \\
\midrule
benign $\beta=2$ & features & 0.048 & 0.442 & 0.91 & 6.22 & +0.025 & $-$0.007 & 0.000 \\
benign $\beta=2$ & + Platt (1 model) & 0.035 & 0.429 & 0.91 & 4.76 & +0.034 & +0.011 & +0.002 \\
benign $\beta=2$ & + Platt (5 models) & 0.039 & 0.422 & 0.93 & 4.26 & +0.034 & +0.015 & +0.003 \\
benign $\beta=2$ & + isotonic (5 models) & 0.035 & 0.427 & 0.92 & 5.17 & +0.032 & +0.009 & +0.002 \\
benign $\beta=2$ & features, early stopping & 0.035 & 0.403 & 0.96 & 3.31 & +0.024 & +0.006 & +0.001 \\
benign $\beta=2$ & logistic regression & 0.029 & 0.392 & 0.99 & 0.58 & +0.023 & +0.002 & +0.001 \\
benign $\beta=2$ & sorted scores & 0.053 & 0.495 & 0.69 & 8.93 & +0.021 & 0.000 & $-$0.001 \\
\bottomrule
\end{tabular}
}
\end{table}

\end{document}